\newcommand{\vect}[1]{\boldsymbol{#1}}
\def\E{\mathbb{E}}
\def\R{\mathbb{R}}
\def\S{\mathbb{S}}
\def\Var{\mathrm{Var}}
\def\half{\frac{1}{2}}
\def\cG{\mathcal{G}}
\def\cH{\mathcal{H}}
\def\cK{\mathcal{K}}
\def\cN{\mathcal{N}}
\def\act{\psi}
\def\loss{\mathrm{loss}}
\newenvironment{eqal}{
\begin{equation}
\begin{aligned}
}
{
\end{aligned}    
\end{equation}}
\newenvironment{eqal*}{
\begin{equation*}
\begin{aligned}
}
{
\end{aligned}    
\end{equation*}}
\newtheorem{theorem}{Theorem}
\newtheorem{lemma}[theorem]{Lemma}
\newtheorem{proposition}[theorem]{Proposition}
\newtheorem{assumption}{Assumption}
\newtheorem{definition}[assumption]{Definition}
\newtheorem{remark}[theorem]{Remark}
\newcommand{\BlackBox}{\rule{1.5ex}{1.5ex}} 
\newenvironment{proofs}{\par\noindent{\bf Proof Sketch:\ }}{\hfill\BlackBox\\[2mm]}
\title{Why Quantization Improves Generalization: NTK of Binary Weight Neural Networks}
\author{
    Kaiqi Zhang, Ming Yin, Yu-Xiang Wang
}
\date{}
\begin{document}

\maketitle

\begin{abstract}
    Quantized neural networks have drawn a lot of attention as they reduce the space and computational complexity during the inference. Moreover, there has been folklore that quantization acts as an implicit regularizer and thus can improve the generalizability of neural networks, yet no existing work formalizes this interesting folklore. 
    In this paper, we take the binary weights in a neural network as random variables under stochastic rounding, and study the distribution propagation over different layers in the neural network.
    We propose a \emph{quasi neural network} to approximate the distribution propagation, which is a neural network with continuous parameters and smooth activation function.
    %a smooth activation function \my{one can approximate BWNN with what? Quasi-NN? should complete this sentence.} 
    We derive the neural tangent kernel (NTK) for this quasi neural network, and show that the eigenvalue of NTK decays at approximately exponential rate, which is comparable to that of Gaussian kernel with randomized scale. 
    %it is similar to \my{strictly cover the RKHS of the Gaussian kernel?} a Gaussian kernel, and 
%its Reproducing kernel Hilbert space (RKHS) covers a strict subset of functions as the RKHS of NTK of its continuous counterpart.
    This in turn indicates that
    the \emph{Reproducing Kernel Hilbert Space} (RKHS) of a binary weight neural network covers a strict subset of functions compared with the one with real value weights. 
    We use experiments to verify that the quasi neural network we proposed can well approximate binary weight neural network. Furthermore, binary weight neural network gives a lower generalization gap compared with real value weight neural network, which is similar to the difference between Gaussian kernel and Laplace kernel.
    %From the empirical side, we can practically implement the binary weight NN through our approximated quasi neural network, which gets rid of the discrete optimization hurdle encountered by binary weight NN. Our empirical result shows the binary weight NN is as accurate as the Gaussian kernel, Laplace kernel and full precision NN. \my{this needs to be further modified.} 
\end{abstract}

\section{Introduction}
%TODO: figure: NTK, Laplacian, Gaussian

%Neural networks have achieved state-of-the-art performance on many machine learning tasks including computer vision, natural language processing, and recommendation systems. However, they often demand extensive computing resources in their training and inference.
%From the empirical side, 
It has been found that by quantizing the parameters in a neural network, the memory footprint and computing cost can be greatly decreased with little to no loss in accuracy \citep{gupta2015deep}. 
Furthermore, \citet{hubara2016binarized, courbariaux2015binaryconnect} argued that quantization serves as an implicit regularizer and thus should increase the generalizability of neural network comparing to its full precision version. However, there is no formal theoretical investigation of this statement to the best of our knowledge.

% The BinaryConnect \citep{courbariaux2015binaryconnect} algorithm is one of the most popular methods to train neural networks with quantized weights and it has achieved good performance on many tasks.
% However, \citet{bai2018proxquant} proved that the algorithm will fail to find the local minimum for certain type of target function.
% In this work, we study this method under overparameterized regime, and %TODO 
%in a certain circumstance\my{what circumstance? Explain briefly in a few words.}.
%This motivates us to dig deeper into this method and study BinaryConnect from a new angle,\my{what new angle? through stochastic rounding?} with the hope to shed a light on how BinaryConnect works and why quantized neural networks could generalize better. 

% Despite the empirical success of neural networks, it remains an open question why neural networks generalize well even when the number of parameters can be far larger than the number of training samples. 
Empirical results show that the traditional statistical learning techniques based on uniform convergence (e.g., VC-dimension \citep{blumer1989learnability}) do not satisfactorily explain the generalization ability of neural networks.
\citet{zhang2016understanding} showed that neural networks can perfectly fit the training data even if the labels are random, yet it generalized well when the data are not random. 
This seems to suggest that the model capacity of a neural network depends on not only the model, but also the dataset.
% This indicates the traditional statistical learning techniques based on uniform convergence (e.g., VC-dimension \citep{blumer1989learnability}) do not satisfactorily explain the empirical performance of overparameterized neural networks. 
Recent studies \citep{he2020recent} managed to understand the empirical performance in a number of different aspects, including modeling stochastic gradient (SGD) with stochastic differential equation (SDE) \citep{weinan2019mean}, studying the geometric structure of loss surface \citep{he2020piecewise}, and overparameterization -- a particular asymptotic behavior when the number of parameters of the neural network tends to infinity \citep{li2018over, choromanska2015loss, allen2018learning, arora2019fine}. 
Recently, it was proven that the training process of neural network in the overparameterized regime corresponds to kernel regression with Neural Tangent Kernel (NTK) \citep{jacot2018neural}. A line of work \citep{bach2017breaking, bietti2019inductive, geifman2020similarity, chen2020deep} further studied  Mercer's decomposition of NTK and proved that it is similar to a Laplacian kernel in terms of the eigenvalues.

In this paper, we propose modeling a two-layer binary weight neural network using a model with continuous parameters. Specifically, we assume the binary weights are drawn from the Bernoulli distribution where the parameters of the distribution (or the mean of the weights) are trainable parameters. We propose a \emph{quasi neural network}, which has the same structure as a vanilla neural network but has a different activation function, and prove one can analytically approximate the expectation of output of this binary weight neural network with this quasi neural network.
Using this model, our main contributions are as follows:
\begin{itemize}
    \item 
    %Under stochastic rounding, we define a quasi neural network modeling the mean of output of a binary weight neural network for given input and real-valued model parameters. 
    Under the overparameterized regime, we prove that the gradient computed from BinaryConnect algorithm is approximately an unbiased estimator of the gradient of the quasi neural network, hence such a quasi neural network can model the training dynamic of binary weight neural network. 
    \item We study the NTK of two-layer binary weight neural networks by studying the ``quasi neural network'', and show that the eigenvalue of this kernel decays at an exponential rate, in contrast with the polynomial rate in a ReLU neural network \cite{chen2020deep, geifman2020similarity}. We reveal the similarity between the Reproducing kernel Hilbert space (RKHS) of this kernel with Gaussian kernel, and it is a strict subset of function as the RKHS of NTK in a ReLU neural network. This indicates that the model capacity of binary weight neural network is smaller than that with real weights, and explains higher training error and lower generalization gap observed empirically.
    %\my{Need to be understood.}
\end{itemize}

\section{Related work}

\paragraph{Quantized neural networks.}
There is a large body of work that focuses on training neural networks with quantized weights \citep{marchesi1993fast, hubara2017quantized, gupta2015deep, liang2021pruning, chu2021mixed}, including considering radically quantizing the weights to binary \citep{courbariaux2016binarized, rastegari2016xnor} or ternary \citep{alemdar2017ternary} values, which often comes at a mild cost on the model's predictive accuracy. 
Despite all these empirical works, the theoretical analysis of quantized neural networks and their convergence is not well studied. Many researchers believed that quantization adds noise to the model, which serves as an implicit regularizer and makes neural networks generalize better \citep{hubara2016binarized, courbariaux2015binaryconnect}, but this statement is instinctive and has never been formally proved to the best of our knowledge.
One may argue that binary weight neural networks have a smaller parameter space than its real weight counterpart, yet \citet{ding2018universal} showed that a quantized ReLU neural network with enough parameters can approximate any ReLU neural network with arbitrary precision. 
%However, \citet{zhang2016understanding} found that the effective capacity of neural networks didn't explain its generalization performance.
These seemingly controversy results motivate us to find another way to explain the stronger generalization ability that is observed empirically.

\paragraph{Theory of deep learning and NTK.}
A notable recent technique in developing the theory of neural networks is the neural tangent kernel (NTK) \cite{jacot2018neural}. It draws the connection between an over-parameterized neural network and the kernel learning. This makes it possible to study the generalization of overparameterized neural network using more mature theoretical tools from kernel learning \cite{bordelon2020spectrum, simon2021neural}. 

The expressive power of kernel learning is determined by the RKHS of the kernel. Many researches have been done to identify the RKHS.
%whether NTK is similar to standard kernels. 
\citet{bach2017breaking, bietti2019inductive} studied the spectral properties of NTK of a two-layer neural network without bias. \citet{geifman2020similarity} further studied the NTK with bias and showed that the RKHS of two layer neural networks contains the same set of functions as RKHS of the Laplacian kernel. \citet{chen2020deep} expanded this result to arbitrary layer neural networks and showed that RKHS of arbitrary layer neural network is equivalent to Laplacian kernel. All these works are based on neural networks with real weights, and to the best of our knowledge, we are the first to study the NTK and generalization of binary weight neural networks. 

\section{Preliminary}

\subsection{Neural tangent kernel}
It has been found that an overparameterized neural network has many local minima. 
Furthermore, most of the local minima are almost as good as the global minima \citep{soudry2016no}.
As a result, in the training process, the model parameters often do not need to move far away from the initialization point before reaching a local minimum \citep{du2018gradient, du2019gradient, li2018learning}. 
This phenomenon is also known as lazy training \citep{chizat2018lazy}. 
This allows one to approximate a neural network with a model that is nonlinear in its input and linear in its parameters. 
Using the connection between feature map and kernel learning, the optimization problem reduces to kernel optimization problem. More detailed explanation can be found below:

%\my{how to understand this?}
% Consider an $L$-layer neural network with only fully connected layers and ReLU activation, and the activation function in the last layer is the identity function. This neural network can be written as  $y =  f_w (x)$,
% where $x$ is the input, $w$ is the trainable parameters, and $y$ is the prediction given by this neural network. Let $z$ be the ground truth label and $\cD$ be distribution of  the dataset.
% The update rule of the gradient descent algorithm can be expressed as:
% \begin{eqal*}
% w^{+} - w &= -\E_{(x, z)\sim \cD} \left[\frac{\partial \ell(f_w(x), z)}{\partial w}\right]
% = -\E_{(x, z)\sim \cD} \left[\frac{\partial \ell(y, z)}{\partial y} \nabla_w f_w(x)\right],
% \end{eqal*}
% where $ y = f_w(x)$. Furthermore, according to the chain rule of gradient, for any $\tilde x$ (potentially in the testing set):

Denote $\Theta$ as the collection of all the parameters in a neural network $f_\Theta$ before an iteration, and $\Theta^+$ as the parameters after this iteration. Let $in$ denote fixed distribution in the input space. In this paper, it is a discrete distribution induced by the training dataset. Using Taylor expansion, for any testing data $\tilde x$, let the stepsize be $\eta$, the first-order update rule of gradient descent can be written as ($l_{\text{loss}}(\cdot)$ be the differentiable loss function and the label is omitted)
\begin{eqal*}
    \Theta^+ - \Theta
    &=\eta \E_{x \sim in} \left[\nabla_\Theta \loss(f_\Theta(x))\right]\\
    %---------------
    &=\eta \E_{x \sim in} \left[\nabla_\Theta f_\Theta(x) \ \loss'(f_\Theta(x))\right]\\
    f_{\Theta^+}(x') - f_{\Theta}(x')
    %---------------
    &=\eta \nabla_\Theta f_\Theta(x') \cdot \E_{x\sim in} \left[ \nabla_\Theta f_\Theta(x) \  \loss'(f_\Theta(x))\right]\\
    %---------------
    &=\eta \E_{x\sim in} \left[\nabla_\Theta f_\Theta(x')\cdot \nabla_\Theta f_\Theta(x) \  \loss'(f_\Theta(x))\right]\\
    %---------------
    &:=\eta \E_{x\sim in} \left[\cK(x, x')\  \loss'(f_\Theta(x))\right].
\end{eqal*}
%We can view this equation as kernel learning with  
This indicates that the learning dynamics of overparameterized neural network is equivalent to kernel learning with kernel defined as
$$
\cK(x, x') = \nabla_\Theta^\top f_\Theta(x)\cdot \nabla_\Theta f_\Theta(x').
$$

%In an overparameterized neural network,
which is called the neural tangent kernel (NTK).
As the width of the hidden layers in this neural network tends to infinity,
this kernel convergences to its expectation over $\Theta$ \citep{jacot2018neural}.

\subsection{Exponential kernel}
A common class of kernel functions used in machine learning is the exponential kernel, which is a radial basis function kernel with the general form  
\begin{equation*}
    \cK(x, x') = \exp(-(c\|x-x'\|)^\gamma),
\end{equation*}
where $c > 0$ and $\gamma \geq 1$ are constants. When $\gamma=1$, this kernel is known as the Laplacian kernel, and when $\gamma=2$, 
%\yw{You wrote when $c=1$, I thought it was a typo. kaiqi: yes, thanks for pointing out}, 
it is known as the Gaussian kernel. 

According to Moore-Aronszajn theorem, each symmetric positive definite kernel uniquely induces a Reproducing kernel Hilbert space (RKHS). 
RKHS determines the functions that can be learned using a kernel.
It has been found that the RKHS of NTK in a ReLU neural network is the same as Laplacian kernel \citep{geifman2020similarity, chen2020deep}, and the empirical performance of a neural network is close to that of kernelized linear classifiers with exponential kernels in many datasets \citep{geifman2020similarity}.

\subsection{Training neural networks with quantized weights}
%To quantize a real number to discrete values, two methods can be used: deterministic rounding and stochastic. Deterministic rounding always rounds a number to the closest discrete values, while stochastic rounding rounds to the two closest values randomly so that the expectation after quantization equals the original real number. Deterministic rounding introduces smaller quantization errors and stochastic rounding is unbiased.

%During the training of a neural network, the gradient step can be smaller than the quantization step of weights especially when the stepsize is small. To avoid losing these updates, BinaryConnect (BC) Algorithm has been proposed \cite{courbariaux2015binaryconnect}. 
Among various methods to train a neural network, BinaryConnect (BC) \citep{courbariaux2015binaryconnect} is often one of the most efficient and accurate method.
The key idea is to introduce a real-valued buffer $\theta$ and use it to  accumulate the gradients. The weights will be quantized just before forward and backward propagation, which can benefit from the reduced computing complexity. The update rule in each iteration is
\begin{equation}
    \theta^{+} \leftarrow \theta - \eta \frac{\partial \tilde f_w(x)}{\partial w},\quad w^{+} \leftarrow Quantize(\theta^{+}), 
\label{eq:bc}
\end{equation}
where $Quantize(\cdot): \R \rightarrow \{-1, 1\}$ denotes the quantization function which will be discussed in Section \ref{sec:problem}, $w$ denotes the binary (or quantized) weights, $\theta$ and $\theta^+$ denote the real valued buffer before and after an iteration respectively, $\eta$ is the learning rate, and $f_w(\cdot)$ denotes the neural network with parameter $w$. Here the gradients are computed by taking $w$ as if they are real numbers. The detailed algorithm can be founded in Section~\ref{sec:binaryconnect}. 

%Two quantization funtions can be used: deterministic rounding and stochastic rounding. Deterministic rounding always rounds a real number to the quantized one with the smallest distance to it, and stochastic rounding rounds a real number to two nearest values randomly so that the mean after quantization equals the real value. The former has smaller variance by is biased, while the latter has larger variance but is unbiased. It has been found the the latter method often leads to better empirical accuracy. 

\section{Approximation of binary weight neural network}
\subsection{Notations}
In this paper, we use $w_{\ell,ij}$ to denote the binary weights in the $\ell$-th layer, $\theta_{\ell, ij}$ to denote its real-valued counterpart, and $b_{\ell,i}$ to denote the (real valued) bias. $\Theta$ is the collection of all the real-valued model parameters which will be specified in Section \ref{sec:problem}. %$\{\theta_{\ell_1, ij}, w_{\ell_2, ij}, b_{\ell,i}\}$, where $\ell_1 \in \{\ell: \ell\textrm{-th layer is quantized}\}$, and $\ell_2 \in \{\ell: \ell\textrm{-th layer is not quantized}\}$.
The number of neurons in the $\ell$-th hidden layer is $d_{\ell}$, the input to the $\ell$-th linear layer is $\vect x_{\ell}$ and the output is $\vect y_{\ell}$.  $d$ denote the number of input features. %As a short alias, we denote $\vect x=\vect x_{1}$ and $\vect y = \vect y_{L}$.
Besides, we use $\vect x$ to denote the input to this neural network, $y$ to denote the output and $z$ to denote the label.

We focus on the mean and variance under the randomness of stochastic rounding. Denote 
\begin{eqal*}
    \mu_{\ell, i} &:= \E[x_{\ell,i}|\vect x, \Theta],&
    \sigma_{\ell, i}^2 &:= \Var[x_{\ell, i}|\vect x, \Theta],\\
    \nu_{\ell, i} &:= \E[y_{\ell, i}|\vect x, \Theta], &
    \varsigma_{i,\ell}^2 &:=\Var[y_{\ell, i}|\vect x, \Theta], &
    \bar y &:= \E[y|\Theta].
\end{eqal*} 
We use $\act(x) = \max(x, 0)$ to denote ReLU activation function, and $in$ to denote the (discrete) distribution of training dataset. $\E_{in}[\cdot] := \E_{(\vect x, z) \sim in}[\cdot]$ denotes the expectation over training dataset, or ``sample average''. 
We use bold symbol to denote a collection of parameters or variables  $\vect{w}_2 = [w_{2,j}], \vect{b}_2 = [b_{2,j}], \vect{\nu}_1 = [\nu_{1,j}],\vect \theta_{1} = [\theta_{1, ij}], i \in [d_1],j \in [d_2]$.
%and $\|\cdot\|_{in}^2 := \E_{in}\|\cdot\|^2$. %as the norm defined under training dataset.

\subsection{Problem statement}
\label{sec:problem}
\begin{figure}
    \centering
    \begin{subfigure}{.48\textwidth}
    \begin{tikzpicture}[
        roundnode/.style={circle, draw=green!60, fill=green!5, very thick, minimum size=7mm},
        squarednode/.style={rectangle, draw=red!60, fill=red!5, very thick, minimum size=5mm},
    ]   
        
        \node [anchor=north] at (0, 1) {Input};
        \node [roundnode] (x01) at (0, 0) {};
        \node [roundnode] (x02) at (0, -0.8) {};
        \node       at (0, -1.6) {$\dots$};
        \node [roundnode] (x03) at (0, -2.4) {};
        \node at (0, -3) {$x$};
        \node at (0.7, -3) {$w_{0}$};
        
        \node [roundnode] (x11) at (1.5, 0) {};
        \node [roundnode] (x12) at (1.5, -0.8) {};
        \node                   at (1.5, -1.6) {$\dots$};
        \node [roundnode] (x13) at (1.5, -2.4) {};
        \node at (1.5, -3) {$x_1$};
        
        \node [anchor=north, text width=1cm] at (2.5, 1.2) {First layer};
        \node [anchor=north] at (3.8, 1) {Activation};
        \node [roundnode] (x21) at (3, 0) {};
        \node [roundnode] (x22) at (3, -0.8) {};
        \node                   at (3, -1.6) {$\dots$};
        \node [roundnode] (x23) at (3, -2.4) {};
        
        \node [roundnode] (x31) at (4.5, 0) {};
        \node [roundnode] (x32) at (4.5, -0.8) {};
        \node                   at (4.5, -1.6) {$\dots$};
        \node [roundnode] (x33) at (4.5, -2.4) {};
        
        \node at (2.3, -3) {$w_{1}$};
        \node at (3, -3) {$y_{1}$};
        \node at (3.8, -3) {$\act(\cdot)$};
        \node at (4.5, -3) {$x_{2}$};
        \node at (5.3, -3) {$w_{2}$};
        ;
        \node [anchor=north, text width=1cm] at (5.5, 1.2) {Second layer};
        \node [anchor=north] at (5.8, 0) {Output};
        \node [roundnode] (x4)  at (6, -1.2) {};
        \node at (6, -2) {$y$};
        \foreach \x in {1,2,3} {
            \foreach \y in {1,2,3} {
                \draw[->] (x0\x) -- (x1\y);
            }
            \foreach \y in {1,2,3} {
                \draw[->] (x1\x) -- (x2\y);
            }
            \draw[->] (x2\x) -- (x3\x);
            \draw[->] (x3\x) -- (x4);
        }
    \end{tikzpicture}
    \caption{Binary weight neural network we focus on.}
    \end{subfigure}
    %---------------------------------
    \begin{subfigure}{.48\textwidth}
    \begin{tikzpicture}[
        roundnode/.style={circle, draw=green!60, fill=green!5, very thick, minimum size=7mm},
        squarednode/.style={rectangle, draw=red!60, fill=red!5, very thick, minimum size=5mm},
    ]   
        
        \node [anchor=north] at (0, 1) {Input};
        \node [roundnode] (x01) at (0, 0) {};
        \node [roundnode] (x02) at (0, -0.8) {};
        \node       at (0, -1.6) {$\dots$};
        \node [roundnode] (x03) at (0, -2.4) {};
        \node at (0, -3) {$x$};
        \node at (0.7, -3) {$w_{0}$};
        
        \node [roundnode] (x11) at (1.5, 0) {};
        \node [roundnode] (x12) at (1.5, -0.8) {};
        \node                   at (1.5, -1.6) {$\dots$};
        \node [roundnode] (x13) at (1.5, -2.4) {};
        \node at (1.5, -3) {$x_1$};
        
        \node [anchor=north, text width=1cm] at (2.5, 1.2) {First layer};
        \node [anchor=north] at (3.8, 1) {Activation};
        \node [roundnode] (x21) at (3, 0) {};
        \node [roundnode] (x22) at (3, -0.8) {};
        \node                   at (3, -1.6) {$\dots$};
        \node [roundnode] (x23) at (3, -2.4) {};
        
        \node [roundnode] (x31) at (4.5, 0) {};
        \node [roundnode] (x32) at (4.5, -0.8) {};
        \node                   at (4.5, -1.6) {$\dots$};
        \node [roundnode] (x33) at (4.5, -2.4) {};
        \node at (2.3, -3) {$\theta_{1}$};
        \node at (3, -3) {$\nu_{1}$};
        \node at (3.8, -3) {$\tilde\act(\cdot)$};
        \node at (4.5, -3) {$\mu_{2}$};
        \node at (5.3, -3) {$w_{2}$};
        \node [anchor=north, text width=1cm] at (5.5, 1.2) {Second layer};
        \node [anchor=north] at (5.8, 0) {Output};
        \node [roundnode] (x4)  at (6, -1.2) {};
        \node at (6, -2) {$\bar y$};
        \foreach \x in {1,2,3} {
            \foreach \y in {1,2,3} {
                \draw[->] (x0\x) -- (x1\y);
            }
            \foreach \y in {1,2,3} {
                \draw[->] (x1\x) -- (x2\y);
            }
            \draw[->] (x2\x) -- (x3\x);
            \draw[->] (x3\x) -- (x4);
        }
    \end{tikzpicture}
    \caption{Quasi neural network.}
    \end{subfigure}
\end{figure}

In this work, we target on stochastic quantization \citep{dong2017learning}, which often yields higher accuracy empirically compared with deterministic rounding \citep{courbariaux2015binaryconnect}. This also creates a smooth connection between the binary weights in a neural network and its real-valued parameters.
%In this work we're focusing on stochastic rounding.

Let $w_{\ell, ij} = Quantize(\theta_{\ell, ij}), \theta_{\ell, ij} \in [-1, 1]$ be the binary weights from stochastic quantization function, which satisfy Bernoulli distribution:
\begin{equation}
w_{\ell,ij}= \left\{
    \begin{array}{cl}
       +1,  & \textrm{ with probability }p_{\ell, ij}=\frac{\theta_{\ell, ij}+1}{2}, \\
      -1,  & \textrm{ with probability }1-p_{\ell, ij}.
    \end{array}
    \right.
\label{eq:bernoulli}
\end{equation}
This relationship leads to $\E[w_{\ell,ij}|\theta_{\ell,ij}]=\theta_{\ell,ij}$.

We focus on a ReLU neural network with one hidden layer and two fully connect layers, which was also studied in \citet{bach2017breaking, bietti2019inductive} except quantization. Besides, we add a linear layer (``additional layer'') in front of this neural network to project the input to an infinite dimension space. We randomly initialize the weights in this layer and leave it fixed (not trainable) throughout the training process. 
%The purpose of this layer is that it allows us to use central limit theorem to study the distribution of the input of the hidden layer. 
%view the addition layer as embedded in the raw input data.
%this layer doesn't exist, but as the dimension of input is often large ($\approx 10^3$), 
%central limit theorem is still a reasonable approximation.
Furthermore, we quantize the weights in the first fully connect layer $w_{1, ij}$ and add a real-valued buffer $\theta_{1, ij}$ which determines the distribution of $w_{1, ij}$ as in  \eqref{eq:bernoulli}, and leave the second layers not quantized.
It is a common practice to leave the last layer not quantized, because this often leads to better empirical performance.  If the second layer is quantized as well, the main result of this paper will not be changed. This can be easily checked by extending Lemma \ref{lemma:linear} into the second layer.

\begin{remark}
    In many real applications, e.g. computer vision, the dimension of data are often very large ($\approx 10^3$) while they are laying in the lower dimension linear subspace, so we can take the raw input in these applications as the output of the additional layer, and the NN in this case is a two-layer NN where the first layer is quantized.
\end{remark}

The set of all the real-valued parameters is $\Theta = \{\theta_{\ell_1, ij}, w_{\ell_2, ij}, b_{\ell,i}\}$.
The neural network can be expressed as
\begin{eqal*}
    x_{1,i} &= \frac{1}{\sqrt{d}} \sum_{k=1}^{d} w_{0, ki}x_k + b_{0, i}, \forall i \in [d_1];&
    %x_{2,j} &= \act_1(y_{1,j}),\\
    y_{1,j} &= \sqrt{\frac{c}{d_1}} \sum_{i=1}^{d_1} w_{1, ij}x_{1,i}+b_{1,j}, \forall j \in [d_2];\\
    x_{2,j} &= \act(y_{1,j}), \forall j \in [d_2]; &
    y &= \frac{1}{\sqrt{d_2}}\sum_{j=1}^{d_2} w_{2,j} x_{2,j} + b_2.
\end{eqal*}
We follow the typical setting of NTK papers \cite{geifman2020similarity} in initializing the parameters except the quantized parameters. 
As for the quantized parameters, we only need to specify the real-valued buffer of the weights in the first layer $\theta_{1, ij}$.

\begin{assumption}
    \label{ass:init}
    We randomly initialize the weights in the ``additional layer'' and second linear layer independently as $w_{0, ki}, w_{2, j} \sim \cN(0, 1)$, 
    and initialize all the biases to 0. 
    The real-valued buffer of the weights are initialized independently identical with zero mean, variance $\Var[\theta]$ and bounded in $[-1, 1]$.
\end{assumption}
\begin{remark}
    Our theory applies to any initial distribution of $\theta_{1, ij}$ as long as it satisfies the constraint above.
    One simple example is the uniform distribution in $[-1, 1]$, which has variance $\Var[\theta] = 1/3$. 
\end{remark}

%We choose this architecture because it has been found that by leaving the first and last layer not quantized, the accuracy of the neural network can be significantly increased. This also makes the output of quantized layer to converge asymptotically to Gaussian distribution conditioned on the real-valued model parameters.

\subsection{Quasi neural network}
\label{sec:quasinn}
Given a fixed input and real-value model parameters $\Theta$, under the randomness of stochastic rounding, the output of this binary weight neural network is a random variable. Furthermore, as the width of this neural network tends to infinity,
%As the number of input features to a linear layer tends to infinity, given a fixed input and real-value model parameters $\Theta$ and under the randomness of stochastic rounding, 
the output of a linear layer tends to Gaussian distribution according to central limit theorem (CLT). 
We propose a method to determine the distribution of output and using the model parameters. 
Specifically, we give a closed form equation to compute the mean and variance of output of all the layers $\mu_\ell, \sigma_\ell, \nu_\ell, \varsigma_\ell$, and then marginalize over random initialization of $\Theta$ to further simplify this equation. 
We prove that $\varsigma_\ell$ converges to a constant almost surely using the law of large number (LLN), and simplify the expression by replacing them with the constant. This allows us to compute $\mu_\ell, \nu_\ell$ using a neural-network-style function for  given $\Theta$. We call this function \emph{quasi neural network}, which is given below:
\begin{eqal}
    %\nu_{1,j} &= \sqrt{\frac{c}{d_{1}}} \sum_{i=1}^{d_{1}}w_{1,ij} x_{i} + \beta b_{1,j}, &
    %\mu_{2, j} &= \tilde\act_{1}(\nu_{1,j}), \\
    x_{1,i} &= \frac{1}{\sqrt{d}} \sum_{k=1}^{d} w_{0,ki}x_k + b_{0,i}, 
    \forall i \in [d_1];&
    \nu_{1,j} &= \sqrt{\frac{c}{d_{1}}}
    \sum_{i=1}^{d_{1}}\theta_{1,ij} x_{1, j} + \beta b_{1,i}, 
    \forall j \in [d_2];\\
    \mu_{2, j} &= \tilde\act(\nu_{1,j}), 
    \forall j \in [d_2]; &
    \bar y &= \frac{1}{\sqrt{d_2}}
    \sum_{j=1}^{d_{2}}w_{2,j} \mu_{2,j} + \beta b_{2}.
\label{eq:2nn}
\end{eqal}

% We will give the closed-form expression to $\tilde\act(\cdot)$ in Section \ref{sec:var}. 
In \autoref{sec:meanvar}, we study the distribution of the output of each layer in a binary weight neural network (BWNN) conditioned on the set of real-valued parameter $\Theta$. 
In \autoref{sec:var}, we prove that the conditioned variance of the output of the first linear layer studied above converges almost surely to a constant which does not depend on the data (input). 
This simplifies the expression computed in \autoref{sec:meanvar} to the form of quasi neural network \eqref{eq:2nn}, and also give a closed-form expression to $\tilde\act(\cdot)$ in \eqref{eq:2nn}.
In \autoref{sec:grad}, we prove that conditioned on the set of real-valued parameter, the expectation of the gradients of BWNN equals the gradient of quasi neural network on the overparameterization limit. 
This indicates that the training dynamics of BWNN at initialization is the same as training the quasi neural network directly.
The training dynamics beyond initialization are discussed in \autoref{sec:training}.
Before jumping to the proof, we make the following assumptions:

\begin{assumption} 
\label{assum:nooverflow} 
After training the binary weight neural network as in \eqref{eq:bc}, all the real-valued weights $\theta_{\ell,ij}$ stay in the range $[-1, 1]$. 
\end{assumption}
Based on this assumption, we can ignore constraints that $\theta_{\ell,ij} \in [-1, 1]$ and the projected gradient descent reduces to gradient descent. 
%\my{Maybe to write this as ``Based on this assumption, we can ignore ...''} 
Because of the lazy training property of the overparameterized neural network, the model parameters $\theta_{\ell,ij}$ stay close to the initialization point during the training process, so this assumption can be satisfied by initializing $\theta_{\ell,ij}$ with smaller absolute value and/or applying weight decay during training.
On the other hand, a common trick in a quantized neural network is to change the quantization level gradually during the training process to avoid (or reduce) overflow. With this trick, Assumption \ref{assum:nooverflow} are often naturally satisfied, but it introduces the quantization level as a trainable parameter.
%TODO

\begin{assumption}
\label{ass:norm}
The Euclidean norm of the input is 1:
\begin{equation*}
    \|\vect x\|_2=1, \forall \vect x \in \R^d.
\end{equation*}
\end{assumption}
This is a common assumption in studying NTK \citep{bach2017breaking, bietti2019inductive}, and can be satisfied by normalizing the input data.

%The detailed explanation is given in the following two steps:
\subsubsection{Conditioned distribution of the outputs of each layer}
    % Computing the mean and variance of the outputs of each layer conditioned on the model parameters.}
\label{sec:meanvar}

First we recognize that as the model parameters $\Theta$ are initialized randomly, there are ``bad'' initialization that will mess up our analysis.
For example, all of $\theta_{1, ij}$ are initialized to 1 (or -1) while they are drawn from a uniform distribution. Fortunately, as the width $d_1, d_2$ grows to infinite, the probability of getting into these ``bad'' initialization is 0. We make this statement formal in the following part.

\begin{definition}
    ``Good Initialization''. We call the set of parameters is a ``Good Initialization'' $\Theta \in \cG $ if it satisfies:
    \begin{itemize}
        \item $\displaystyle \forall k, k' \in [d], \lim_{d_1 \rightarrow \infty}\frac{1}{d_1}\sum_{i=1}^{d_1} w_{0, ki} w_{0, k'i} = \delta_{k, k'},$
        \item $\displaystyle \forall k, k', k'' \in [d], \lim_{d_1 \rightarrow \infty}\frac{1}{d_1}\sum_{i=1}^{d_1} |w_{0, ki} w_{0, k'i} w_{0, k'i}| \leq \sqrt{\frac{8}{\pi}},$
        \item $\forall k, k' \in [d], \forall j \in [d_2]$ for some finite $d_2$, $\displaystyle \lim_{d_1 \rightarrow \infty}\frac{1}{d_1}\sum_{i=1}^{d_1} w_{0, ki} w_{0, k'i} \theta_{1, ij}^2 = \Var[\theta]\delta_{k, k'}$,
    \end{itemize}
    where
    \begin{empheq}[left={\delta_{k, k'}=\empheqlbrace}]{align*}
        &1 \quad k = k'\\
        &0 \quad k \neq k'.
    \end{empheq}
\end{definition}

\begin{proposition}
    \label{prop:asconv}
    Under the assumption that all the parameters are initialized as in \autoref{ass:init}, the probability of getting ``Good Initialization'' is 1: 
    \begin{equation*}
        P_\Theta (\Theta \in \cG) = 1.
    \end{equation*}
\end{proposition}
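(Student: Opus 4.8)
The plan is to recognize all three defining conditions of $\cG$ as instances of the strong law of large numbers (SLLN) applied to averages over the hidden-unit index $i$, and then to combine the finitely many resulting almost-sure events by a union bound. For the first condition, fix $k,k'$ and observe that for each $i$ the summand $w_{0,ki}w_{0,k'i}$ is a product of standard Gaussians, i.i.d.\ across $i$; since $w_{0,ki}$ has all moments finite, the product has finite mean, so Kolmogorov's SLLN gives $\frac{1}{d_1}\sum_{i} w_{0,ki}w_{0,k'i} \to \E[w_{0,ki}w_{0,k'i}]$ almost surely. When $k=k'$ this expectation is $\E[w_{0,ki}^2]=1$; when $k\neq k'$, independence gives $\E[w_{0,ki}]\E[w_{0,k'i}]=0$. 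Hence the limit equals $\delta_{k,k'}$ with probability one.

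The third condition is handled identically, now using that $\{\theta_{1,ij}\}$ is independent of $\{w_{0,ki}\}$ and i.i.d.\ across $i$ with mean zero. The summand $w_{0,ki}w_{0,k'i}\theta_{1,ij}^2$ is i.i.d.\ over $i$ with finite mean (as $\theta$ is bounded in $[-1,1]$), so SLLN yields convergence to $\E[w_{0,ki}w_{0,k'i}]\,\E[\theta_{1,ij}^2]=\delta_{k,k'}\Var[\theta]$, where I use $\E[\theta]=0$ so that $\E[\theta^2]=\Var[\theta]$.

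The main subtlety, and the step I expect to require the most care, is the absolute-moment bound in the second condition. By the same SLLN argument the average converges almost surely to $\E\!\left[\,|w_{0,ki}w_{0,k'i}w_{0,k''i}|\,\right]$, so it suffices to check that this expectation never exceeds $\sqrt{8/\pi}$ over the finitely many index patterns. When the three indices are distinct, independence gives $(\E|Z|)^3=(2/\pi)^{3/2}$; when exactly two coincide, $\E[Z^2]\,\E|Z|=\sqrt{2/\pi}$; and when all three coincide we obtain the third absolute moment $\E|Z|^3=\sqrt{8/\pi}$ of a standard normal. Since the third absolute moment is the largest of these three values, the claimed bound holds, with equality attained in the degenerate case $k=k'=k''$. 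This is precisely why the constant $\sqrt{8/\pi}$ appears in the definition.

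Finally I would assemble the pieces. Each of the three displayed limits fails only on a probability-zero event, and there are only finitely many choices of $k,k',k''\in[d]$ and $j\in[d_2]$, since $d$ and $d_2$ are both finite. The union of these finitely many null sets is therefore still null, and taking complements shows that $\Theta\in\cG$ holds with probability one, which is exactly the statement $\P_\Theta(\Theta\in\cG)=1$.
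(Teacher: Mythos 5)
Your proof is correct and follows essentially the same route as the paper: apply the strong law of large numbers to each of the finitely many averages over $i$ and intersect the resulting almost-sure events. The only difference is in the second condition, where the paper first bounds the summand pointwise by $\tfrac{1}{3}(|w_{0,ki}|^3+|w_{0,k'i}|^3+|w_{0,k''i}|^3)$ (geometric mean $\le$ cubic mean) and then applies the SLLN to the cubed terms, whereas you apply the SLLN directly to the product and verify case by case that $\E|w_{0,ki}w_{0,k'i}w_{0,k''i}|\le \E|Z|^3=\sqrt{8/\pi}$ — both computations are valid and give the same constant.
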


\begin{lemma}
\label{lemma:linear}
    On the limit $d_1 \rightarrow \infty$, and any given $x$ conditioned on any fixed $\Theta \in \cG$, for any $j$,
    %with high probability over random initialization of $\Theta$, 
    the distribution of $y_{1, j}$ converge to \emph{Gaussian distribution} with mean $\nu_{1, j}$ and variance $\varsigma_{1, j}^2$ which can be computed by:
    \begin{eqal}
        y_{1, j}|\Theta &\rightarrow \cN(\nu_{1, j}, \varsigma_{1, j}^2), &
        \nu_{1, j} &= \sqrt{\frac{c}{d_1}} \sum_{i=1}^{d_1} \theta_{1,ij}x_{1, i} + b_{1, j}, &
        \varsigma_{1, j}^2 &= \frac{c}{d_1} \sum_{i=1}^{d_1} (1-\theta_{1, ij}^2) x_{1, i}^2
    \label{eq:linear}
    \end{eqal}
\end{lemma}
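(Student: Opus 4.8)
The plan is to recognize $y_{1,j}$, for each fixed output index $j$ and conditioned on $\Theta \in \cG$, as a sum of independent random variables $\xi_i := \sqrt{c/d_1}\, x_{1,i}\, w_{1,ij}$ over $i \in [d_1]$ plus the deterministic shift $b_{1,j}$. After conditioning on $\Theta$, the only remaining randomness is the stochastic rounding of the $w_{1,ij}$, and by \eqref{eq:bernoulli} these are independent across $i$ with $\E[w_{1,ij}\mid\Theta]=\theta_{1,ij}$ and $\Var[w_{1,ij}\mid\Theta]=1-\theta_{1,ij}^2$ (using $w_{1,ij}^2=1$). Linearity of expectation and independence then immediately give the stated mean $\nu_{1,j}$ and variance $\varsigma_{1,j}^2$, so the only substantive content is the Gaussian limit.

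For the distributional convergence I would invoke the Lyapunov central limit theorem for triangular arrays with exponent $2+\delta=3$. Writing $S=\sum_i \xi_i$, I need the Lyapunov ratio $\big(\sum_i \E|\xi_i-\E\xi_i|^3\big)/(\varsigma_{1,j}^2)^{3/2}\to 0$. A direct computation of the centered third absolute moment of the Bernoulli variable gives $\E|w_{1,ij}-\theta_{1,ij}|^3 = (1-\theta_{1,ij}^2)(1+\theta_{1,ij}^2)=1-\theta_{1,ij}^4 \le 1$, so the numerator is bounded by $c^{3/2}d_1^{-1/2}\cdot\frac{1}{d_1}\sum_i |x_{1,i}|^3$; everything then reduces to controlling two empirical averages via $\Theta\in\cG$.

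The crux is the numerator. I would expand $|x_{1,i}|^3 = d^{-3/2}\big|\sum_k w_{0,ki}x_k\big|^3 \le d^{-3/2}\sum_{k,k',k''}|w_{0,ki}w_{0,k'i}w_{0,k''i}|\,|x_k x_{k'} x_{k''}|$, sum over $i$, and apply the third-moment ``good initialization'' bound $\frac{1}{d_1}\sum_i|w_{0,ki}w_{0,k'i}w_{0,k''i}|\le\sqrt{8/\pi}$ together with $\|x\|_1\le\sqrt{d}\,\|x\|_2=\sqrt{d}$ to conclude $\frac{1}{d_1}\sum_i|x_{1,i}|^3\le\sqrt{8/\pi}$ uniformly in $d_1$; hence the numerator is $O(d_1^{-1/2})\to 0$. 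For the denominator I would expand $x_{1,i}^2$ and use the orthogonality condition $\frac{1}{d_1}\sum_i w_{0,ki}w_{0,k'i}\to\delta_{k,k'}$ and the condition $\frac{1}{d_1}\sum_i w_{0,ki}w_{0,k'i}\theta_{1,ij}^2\to\Var[\theta]\delta_{k,k'}$ to show $\frac{1}{d_1}\sum_i(1-\theta_{1,ij}^2)x_{1,i}^2\to\frac{1}{d}(1-\Var[\theta])$, so $\varsigma_{1,j}^2$ tends to the strictly positive constant $\frac{c}{d}(1-\Var[\theta])$ (using $\Var[\theta]<1$, which holds whenever $\theta$ is not a.s. $\pm1$) and $(\varsigma_{1,j}^2)^{3/2}$ is bounded away from zero. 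Combining the two bounds, the Lyapunov ratio vanishes, so $(y_{1,j}-\nu_{1,j})/\varsigma_{1,j}\to\cN(0,1)$ in distribution, which is the claim.

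I expect the numerator bound to be the main obstacle: the cube of the sum $\sum_k w_{0,ki}x_k$ does not factor through the pairwise orthogonality that the first condition supplies, which is precisely why the separate third-moment bound on $\frac{1}{d_1}\sum_i|w_{0,ki}w_{0,k'i}w_{0,k''i}|$ is built into the definition of $\cG$. A secondary subtlety is that the statement normalizes by the finite-$d_1$ quantities $\nu_{1,j},\varsigma_{1,j}$ rather than their limits, but this is exactly what the studentized form $(S-\E S)/\sqrt{\Var S}$ delivered by Lyapunov's theorem matches, so no extra argument is needed there.
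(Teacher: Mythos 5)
Your proposal is correct and follows essentially the same route as the paper's proof in Section~\ref{sec:linearproof}: conditional independence of the $w_{1,ij}$ under stochastic rounding gives the mean and variance directly, and the Gaussian limit is obtained by verifying Lyapunov's condition with third absolute moments, using the second ``Good Initialization'' condition to bound $\frac{1}{d_1}\sum_i|x_{1,i}|^3$ and the first and third conditions to show $\varsigma_{1,j}^2$ stays bounded away from zero. The only (immaterial) differences are that you compute the exact centered third moment $1-\theta_{1,ij}^4$ where the paper uses the crude bound $|w_{1,ij}-\theta_{1,ij}|^3\le 8$, and you control the cubed sum via $\|x\|_1\le\sqrt{d}$ where the paper uses $|x_k|\le 1$ termwise.
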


This lemma can be proved by Lyapunov central limit theorem and sum of expectation. See \autoref{sec:linearproof} for the details.

% \subsection{ReLU layer}
% \label{sec:relu}
\begin{lemma}
\label{lemma:relu}
%Under Assumption \ref{assump:gauss}, 
Assume that the input to a ReLU layer $y_{1,i}$ satisfy Gaussian distribution with mean $\nu_{1,i}$ and variance $\varsigma_{1,i}^2$ conditioned on $\Theta$,
%\my{further think about whether this gaussian assumption can be relaxed}
%and each element $x_i$ has mean $\mu_i$ and variance $\sigma_i^2$,
\begin{equation*}
    y_{1,j} | \Theta \sim \cN (\nu_{1,j}, \varsigma_{1,j}^2).
\end{equation*}
Denote 
\begin{equation}
g_{j} = \varphi\left(\frac{\nu_{j}}{\varsigma_{j}} \right), 
s_{j} = \Phi\left(\frac{\nu_{j}}{\varsigma_{j}}\right),
\label{eq:phi}
\end{equation}
where $\varphi(x)$ denotes standard Gaussian function and $\Phi(x)$ denotes its integration:
\begin{eqal*}
\varphi(x) &= \sqrt{\frac{1}{2\pi}}\exp\left(-\half x^2 \right),&
\Phi(x) &= \int_{-\infty}^{x} \varphi(y)dy.
\end{eqal*}
Then the output $x_{2,j}$ has mean $\mu_{2,i}$ and variance $\sigma_{2,i}^2$ conditioned on $\Theta$, with 
\begin{eqal}
\mu_{2,j} &:= \E[x_{2,j}|\Theta] =  g_{j}\varsigma_{1,j} + s_{j} \nu_{1,j},\\
\sigma_{2,i}^2 &:= \Var[x_{2,j}|\Theta] = %s_i(\mu_i^2+\sigma_i^2) + g_i \mu_i \sqrt{\frac{\sigma_i^2}{2\pi}}
(\varsigma_{1,i}^2 +\nu_{1,i}^2) s_{i} + \nu_{1,i}\sigma_{1,i} g_{1,i} - \nu_{1,j}^2 .
\label{eq:brelu}
\end{eqal}
\end{lemma}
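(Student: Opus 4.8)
The plan is to treat Lemma~\ref{lemma:relu} as a direct moment computation for a \emph{rectified Gaussian}. Conditioned on $\Theta$, the hypothesis (supplied by Lemma~\ref{lemma:linear}) is that the pre-activation $y_{1,j}$ is Gaussian with mean $\nu_{1,j}$ and variance $\varsigma_{1,j}^2$, and the post-activation is the deterministic map $x_{2,j} = \act(y_{1,j}) = \max(y_{1,j},0)$. Hence $\mu_{2,j}$ and $\sigma_{2,j}^2$ are simply the first moment and the centered second moment of $\max(Y,0)$ for $Y \sim \cN(\nu_{1,j}, \varsigma_{1,j}^2)$, and the entire lemma reduces to evaluating two Gaussian integrals restricted to the positive half-line. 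No probabilistic content beyond Lemma~\ref{lemma:linear} is needed.

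First I would compute the mean. Writing $\mu_{2,j} = \int_0^\infty y\,\frac{1}{\varsigma_{1,j}}\varphi((y-\nu_{1,j})/\varsigma_{1,j})\,dy$ and substituting $u = (y-\nu_{1,j})/\varsigma_{1,j}$, the lower limit becomes $-\nu_{1,j}/\varsigma_{1,j}$, which is exactly the argument appearing inside $g_j$ and $s_j$ via \eqref{eq:phi}. The integrand then splits as $\varsigma_{1,j}\,u\,\varphi(u) + \nu_{1,j}\,\varphi(u)$. The first piece integrates in closed form using the antiderivative $\int u\,\varphi(u)\,du = -\varphi(u)$ together with the evenness of $\varphi$, yielding $\varsigma_{1,j}\,\varphi(\nu_{1,j}/\varsigma_{1,j}) = \varsigma_{1,j} g_j$; the second piece gives $\nu_{1,j}\,\Phi(\nu_{1,j}/\varsigma_{1,j}) = \nu_{1,j} s_j$. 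Adding these reproduces $\mu_{2,j} = g_j\varsigma_{1,j} + s_j\nu_{1,j}$.

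Next I would compute the second moment $\E[x_{2,j}^2] = \int_0^\infty y^2\,\frac{1}{\varsigma_{1,j}}\varphi((y-\nu_{1,j})/\varsigma_{1,j})\,dy$ under the same substitution, expanding $(\varsigma_{1,j} u + \nu_{1,j})^2$ into three terms. The only nonroutine piece is $\int u^2\varphi(u)\,du$, which I would handle by integration by parts via the identity $\varphi'(u) = -u\varphi(u)$; this produces a boundary term proportional to $\varphi(\nu_{1,j}/\varsigma_{1,j})$ plus a $\Phi$ term. Using the relation $\varsigma_{1,j}^2\cdot(\nu_{1,j}/\varsigma_{1,j}) = \nu_{1,j}\varsigma_{1,j}$ to combine terms, the three contributions collapse to $\E[x_{2,j}^2] = (\varsigma_{1,j}^2 + \nu_{1,j}^2)s_j + \nu_{1,j}\varsigma_{1,j}\,g_j$, after which the variance follows from $\sigma_{2,j}^2 = \E[x_{2,j}^2] - \mu_{2,j}^2$. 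The main obstacle is purely bookkeeping: correctly propagating the lower limit $-\nu_{1,j}/\varsigma_{1,j}$ through the integration-by-parts boundary term and applying the $\varsigma^2\cdot(\nu/\varsigma) = \nu\varsigma$ simplification so that the result lands in the stated $g$, $s$ notation of \eqref{eq:phi}. There is no analytic subtlety beyond these elementary half-line Gaussian integrals, since Gaussianity itself is already furnished by Lemma~\ref{lemma:linear}.
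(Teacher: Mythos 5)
Your proposal is correct and follows essentially the same route as the paper's proof: both reduce the lemma to the half-line Gaussian integrals $\int_{-\nu_{1,j}/\varsigma_{1,j}}^{\infty} u^{\alpha}\varphi(u)\,du$ for $\alpha=0,1,2$ after the substitution $u=(y-\nu_{1,j})/\varsigma_{1,j}$, handling $\alpha=2$ by integration by parts via $\varphi'(u)=-u\varphi(u)$ and then taking $\sigma_{2,j}^2=\E[x_{2,j}^2\mid\Theta]-\mu_{2,j}^2$. (Your derivation also makes clear that the trailing $-\nu_{1,j}^2$ in the displayed variance formula of the lemma should read $-\mu_{2,j}^2$, which is what the paper's own computation produces.)
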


From \autoref{lemma:linear} we know that on the limit $d_1 \rightarrow \infty$, conditioned on $\Theta$ and $\vect x$, for any $j$, $y_{1, j}$ converge to Gaussian distribution. 
From continuous mapping theorem, the distribution of $x_{2, j}$ converge to that shown in \autoref{lemma:relu} so its mean $\mu_{2, j} $ and variance $\sigma_{2, j}$ converge to that computed in \autoref{lemma:relu}.

Equations \eqref{eq:linear} and \eqref{eq:brelu} provide a method to calculate the mean and variance of output conditioned on the input and real-valued model parameters and allow us to provide a closed-form equation of quasi neural network. We will simplify this equation in \autoref{sec:var}.

\subsubsection{Convergence of conditioned variance}
\label{sec:var}
In this part, we assume that the model parameters satisfy ``Good Initialization'', which is almost surely on the limit $d_1 \rightarrow \infty$ as is proven in \autoref{prop:asconv}, 
%remind that $\theta_{1, ij}$ are initialized independently with zero mean and variance $\Var[\theta]$, we 
and study the distribution of $\nu_{1, j}$ and $\varsigma_{1, j}$.

\begin{theorem}
\label{thm:var}
%Under random initialization of $\Theta$, 
If the parameters are ``Good Initialization'' $\Theta \in \cG$, on the limit $d_1 \rightarrow \infty$, for any finite $d_2$,
% the mean of the output of the fist layer conditioned on $\Theta$, aka 
$\nu_{1, j}$ converges to Gaussian distribution which are independent of each other, and
%  variance of output of the first linear layer conditioned on $\Theta$, aka 
$\varsigma_{1, j}^2$ converges a.s. to 
\begin{equation*}
    \tilde\varsigma_1^2 
    % = \frac{c}{d_1}(1-\E[\theta_{1,ij}^2])\E[{x_{1, i}}^2]
    = \frac{c}{d}(1-\Var[\theta]).
\end{equation*}
%and $\mu_{2, i}, \sigma_{2, i}$ converges to that computed by replacing $\varsigma_{1, j}$ in \eqref{eq:brelu} with $\tilde\varsigma_1$.
\end{theorem}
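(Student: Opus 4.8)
The plan is to treat the two claims separately, exploiting the fact that conditioned on a good initialization the quantities in \eqref{eq:linear} are governed by two independent sources of randomness: the fixed additional-layer weights $w_{0,ki}$ (which enter only through $x_{1,i}$) and the first-layer buffers $\theta_{1,ij}$. First I would substitute the forward-pass definition $x_{1,i} = \frac{1}{\sqrt d}\sum_{k} w_{0,ki} x_k$ (the biases vanish at initialization by Assumption~\ref{ass:init}) into the variance expression of Lemma~\ref{lemma:linear}, obtaining
\begin{equation*}
    \varsigma_{1,j}^2 = \frac{c}{d\,d_1}\sum_{k,k'} x_k x_{k'} \sum_{i=1}^{d_1} w_{0,ki} w_{0,k'i}
    - \frac{c}{d\,d_1}\sum_{k,k'} x_k x_{k'} \sum_{i=1}^{d_1} w_{0,ki} w_{0,k'i}\theta_{1,ij}^2 .
\end{equation*}
Since $d$ is finite, the outer sum over $k,k'$ is a fixed finite sum and I can pass the $d_1\to\infty$ limit inside. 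The first and third bullets in the definition of $\cG$ turn the two inner averages into $\delta_{k,k'}$ and $\Var[\theta]\,\delta_{k,k'}$ respectively, collapsing both double sums to $\|\vect x\|_2^2 = 1$ by Assumption~\ref{ass:norm}. This yields $\varsigma_{1,j}^2 \to \frac{c}{d}(1-\Var[\theta])$, and the convergence is almost sure because, by Proposition~\ref{prop:asconv}, $\Theta\in\cG$ with probability one.

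For the distributional claim on $\nu_{1,j}$, I would fix a realization of $w_{0}$ (equivalently condition on the first two bullets of $\cG$) so that the $x_{1,i}$ become deterministic, and view $\nu_{1,j} = \sqrt{c/d_1}\sum_i \theta_{1,ij} x_{1,i}$ as a triangular-array sum over $i$ of the independent, zero-mean, bounded variables $\theta_{1,ij} x_{1,i}$. Applying the Lyapunov central limit theorem gives asymptotic normality, and computing the limiting variance via the first bullet of $\cG$ gives $\nu_{1,j}\to\cN\!\big(0,\tfrac{c}{d}\Var[\theta]\big)$. To obtain joint convergence and mutual independence across the finitely many $j\in[d_2]$, I would invoke the Cram\'er--Wold device: for any coefficients $a_j$, the combination $\sum_j a_j \nu_{1,j} = \sqrt{c/d_1}\sum_i x_{1,i}\big(\sum_j a_j \theta_{1,ij}\big)$ is again a sum of independent-across-$i$ terms and is asymptotically Gaussian; since the columns $\{\theta_{1,ij}\}_j$ are independent, $\Cov(\nu_{1,j},\nu_{1,j'})\to 0$ for $j\neq j'$, and zero covariance for a jointly Gaussian limit is equivalent to independence.

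The step I expect to be the main obstacle is verifying the Lyapunov condition for this CLT uniformly over the conditioned realization of $w_0$, rather than merely in expectation. The ratio of third absolute moments to the $3/2$ power of the variance scales like $\big(\sum_i |x_{1,i}|^3\big)/\big(\sum_i x_{1,i}^2\big)^{3/2}$, so I must show $\frac{1}{d_1}\sum_i |x_{1,i}|^3$ stays bounded. This is exactly where the second bullet of $\cG$ enters: expanding $|x_{1,i}|^3 \le d^{-3/2}\big(\sum_k |w_{0,ki}||x_k|\big)^3$ and averaging over $i$ bounds $\frac{1}{d_1}\sum_i |x_{1,i}|^3$ by $\sqrt{8/\pi}\,(\sum_k|x_k|)^3 d^{-3/2} \le \sqrt{8/\pi}$ after Cauchy--Schwarz and $\|\vect x\|_2=1$. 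With $\frac{1}{d_1}\sum_i x_{1,i}^2 \to \frac{1}{d}$ bounded below, the Lyapunov ratio is then $O(d_1^{-1/2})\to 0$. The remaining work is bookkeeping: keeping the two randomness sources separate so that the almost-sure statement for $\varsigma_{1,j}^2$ and the in-distribution statement for $\nu_{1,j}$ are each interpreted over the correct probability space.
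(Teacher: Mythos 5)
Your proposal is correct and, for the variance claim, follows essentially the same route as the paper: substitute $x_{1,i}=\frac{1}{\sqrt d}\sum_k w_{0,ki}x_k$ into the expression from Lemma~\ref{lemma:linear}, exchange the limit with the finite sum over $k,k'$, and invoke the first and third bullets of $\cG$ together with $\|\vect x\|_2=1$. For the Gaussianity and independence of the $\nu_{1,j}$, the paper simply asserts convergence to i.i.d.\ Gaussian processes and verifies that $\E_\Theta[\nu_{1,j}\nu_{1,j'}]=0$ for $j\neq j'$; your conditional Lyapunov CLT over the $\theta_{1,ij}$ (with $w_0$ frozen) combined with the Cram\'er--Wold device is a more explicit justification of the same claim, and your use of the second bullet of $\cG$ to control the third absolute moments mirrors the Lyapunov verification the paper carries out in the proof of Lemma~\ref{lemma:linear}. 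No gaps.
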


With this approximation, we can replace the variance $\varsigma_{1,i}$ in equation (\ref{eq:brelu}) with $\tilde \varsigma_1$ and leave the mean of output in the linear layer as the only variable in the quasi neural network. Formal proof can be found in Section \ref{sec:prfvar}.
Note the propagation function in the linear layer (the first equation in (\ref{eq:linear})) is also a linear function in $x$ and $\theta$. 
%This reduces the quasi neural network to a neural network with activation function in layer $\ell$ as 
This motivates us to compute $\bar y$ using a neural network-like function as is given in \eqref{eq:2nn}, where $\tilde \act(\cdot)$ is 
\begin{equation}
    \tilde\act(\nu_{1, j}) = \E[\act(y_{1, j}) | \nu_{1, j}] = \tilde\varsigma_1 \phi\left( \frac{\nu_{1, i}}{\tilde\varsigma_1} \right) + \nu_{1, j} \Phi\left(\frac{\nu_{1, j}}{\tilde\varsigma_{1}}\right).
\label{eq:quasiact}
\end{equation}
This equation gives a closed-form connection between the mean of output of neural network $\bar y$ and the real-valued model parameter $\Theta$, and allows up to apply existing tools for analyzing neural networks with real-valued weight to analysis binary weight neural network.
Its derivative in the sense of Calculus is:
\begin{equation}
    \tilde\act'(\nu_{1, j}) 
    =\Phi\left(\frac{\nu_{1, j}}{\tilde \varsigma_{1}}\right).
\label{eq:quasiactgrad}
\end{equation}
The proof of derivative can be found in Section \ref{sec:phip}. %Here we are not claiming the meaning of this derivative in the sense of Malliavin calculus. %We will find a more practical meaning to this derivative in the next section.

%This finished the derivative of quasi nerual network.
% It taked $\theta_{ell, ij}$ as weight matrix, and takes $\tilde\act_\ell$ as the activation functions

\subsubsection{Gradient of quasi neural network}
\label{sec:grad}
%In this part, we come up with the closed-form approximation of the gradients with respect to weights and bias, and further verify that the gradient of variance is neglectable compared with mean so we can take them as constants in backpropagation. 
%The closed-form of gradients of quasi neural network \eqref{eq:quasiactgrad} is inefficient to compute empirically because it contains Gaussian function and its integration. 
% Because we are not claiming the meaning of the derivative \ref{eq:quasiactgrad} in the sense of distribution, its practical meaning is still not obvious.
In this part, we compute the gradients using binary weights as in BinaryConnect Algorithm, and make sense of the gradient in \eqref{eq:quasiactgrad} by proving that it is the expectation of gradients under the randomness of stochastic rounding.

\begin{theorem}
\label{thm:nngrad}
%For a two-layer neural network whose number of input features is $d$ and whose number of hidden features is $m$, 
%For the same neural network ,
%mentioned in Lemma \ref{lemma:nngrad}, 
The expectation of gradients to output with respect to weights computed by sampling the quantized weights equals the gradients of ``quasi neural network'' defined above in \eqref{eq:2nn} on the limit $d_2 \rightarrow \infty, d_1\rightarrow \infty$,
\begin{eqal*}
    \label{eq:gradconv}
    \frac{\partial \bar y}{\partial \theta_{1,ij}} &= \E\left[ \frac{\partial  y}{\partial w_{1,ij}}\middle|\Theta\right],&
    %(1 + O(\frac{1}{\sqrt{n}}))\\% 
    %+O\left(\frac{1}{d_2\sqrt{d_3}}\right),\\
    \frac{\partial \bar y}{\partial b_{1,j}} &= \E\left[ \frac{\partial y}{\partial b_{1,j}}\middle|\Theta\right],&
    \frac{\partial \bar y}{\partial w_{2,j}} &= \E\left[ \frac{\partial y}{\partial w_{2,j}}\middle|\Theta\right].
\end{eqal*}
\end{theorem}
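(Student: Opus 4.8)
The plan is to differentiate both networks explicitly, take the conditional expectation $\E[\,\cdot\mid\Theta]$ over the stochastic rounding of the first-layer weights, and reduce all three identities to two scalar limits about the ReLU layer. Following BinaryConnect, I treat the realized $w_{1,ij}$ as real numbers and differentiate $y$ at the sampled weights, obtaining
\begin{align*}
\frac{\partial y}{\partial w_{2,j}} &= \frac{1}{\sqrt{d_2}}\act(y_{1,j}), &
\frac{\partial y}{\partial b_{1,j}} &= \frac{1}{\sqrt{d_2}}\,w_{2,j}\,\act'(y_{1,j}), &
\frac{\partial y}{\partial w_{1,ij}} &= \frac{1}{\sqrt{d_2}}\,w_{2,j}\,\act'(y_{1,j})\sqrt{\tfrac{c}{d_1}}\,x_{1,i},
\end{align*}
with $\act'(t)=\mathbbm{1}[t>0]$. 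For the quasi network \eqref{eq:2nn}, the chain rule together with the derivative \eqref{eq:quasiactgrad} gives the same expressions but with $\act(y_{1,j})$ replaced by $\tilde\act(\nu_{1,j})$ and $\act'(y_{1,j})$ replaced by $\tilde\act'(\nu_{1,j})=\Phi(\nu_{1,j}/\tilde\varsigma_1)$.

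Next I would take $\E[\,\cdot\mid\Theta]$. Conditioned on $\Theta$, the unquantized weight $w_{2,j}$, the first-layer output $x_{1,i}$, and the scalars $c,d_1,d_2$ are deterministic and factor out of the expectation; moreover each first-layer parameter touches only a single output term, so $d_2$ plays no role and $d_1\to\infty$ is the operative limit. Hence all three claims reduce to the two moment limits
\begin{align*}
\E[\act(y_{1,j})\mid\Theta] &\;\longrightarrow\; \tilde\act(\nu_{1,j}), &
\E[\act'(y_{1,j})\mid\Theta] &= \P(y_{1,j}>0\mid\Theta) \;\longrightarrow\; \Phi\!\left(\tfrac{\nu_{1,j}}{\tilde\varsigma_1}\right).
\end{align*}

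I would then establish these two limits from the distributional facts already proven. By \autoref{lemma:linear} and \autoref{thm:var}, for $\Theta\in\cG$ the preactivation $y_{1,j}\mid\Theta$ converges in distribution to $\cN(\nu_{1,j},\tilde\varsigma_1^2)$. Since this Gaussian limit has no atom at $0$, the portmanteau theorem gives $\P(y_{1,j}>0\mid\Theta)\to\Phi(\nu_{1,j}/\tilde\varsigma_1)$, which is exactly $\tilde\act'(\nu_{1,j})$ by \eqref{eq:quasiact}--\eqref{eq:quasiactgrad}; the undefined value of $\act'$ at $0$ is irrelevant since it is a null event under the continuous limit. For the first limit, because $\act$ is unbounded I would supplement convergence in distribution with uniform integrability: the bound $0\le\act(y_{1,j})\le|y_{1,j}|$ and the convergence of the first two moments of $y_{1,j}$ (mean $\nu_{1,j}$, variance $\varsigma_{1,j}^2\to\tilde\varsigma_1^2$) make the family uniformly integrable, so $\E[\act(y_{1,j})\mid\Theta]\to\E[\act(\cN(\nu_{1,j},\tilde\varsigma_1^2))]=\tilde\act(\nu_{1,j})$ by the definition \eqref{eq:quasiact}.

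The step I expect to be the main obstacle is the identity for $\theta_{1,ij}$, where the indicator $\act'(y_{1,j})$ depends on the very weight $w_{1,ij}$ being differentiated, so $\E[\act'(y_{1,j})\mid\Theta]$ is not an expectation over a preactivation independent of $w_{1,ij}$. I would resolve this by decomposing $y_{1,j}=\sqrt{c/d_1}\,w_{1,ij}x_{1,i}+R_{ij}$, where $R_{ij}$ omits the $i$-th summand. The isolated term is $O(1/\sqrt{d_1})$ and vanishes as $d_1\to\infty$, while $R_{ij}$ retains the same limiting law $\cN(\nu_{1,j},\tilde\varsigma_1^2)$ (deleting one of $d_1$ terms does not change the Lyapunov-CLT limit of \autoref{lemma:linear}). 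Consequently, conditioning on either value $w_{1,ij}=\pm1$ yields the same limiting tail probability, the correlation between $\act'(y_{1,j})$ and $w_{1,ij}$ washes out, and the product expectation factors into $x_{1,i}$ times $\Phi(\nu_{1,j}/\tilde\varsigma_1)$, matching $\partial\bar y/\partial\theta_{1,ij}$ as required.
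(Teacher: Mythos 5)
Your proposal is correct and takes essentially the same route as the paper: differentiate both networks, factor out everything that is deterministic conditioned on $\Theta$, and reduce all three identities to the limits $\E[\act(y_{1,j})\mid\Theta]\to\tilde\act(\nu_{1,j})$ and $\E[\act'(y_{1,j})\mid\Theta]=\P(y_{1,j}\geq 0\mid\Theta)\to\Phi(\nu_{1,j}/\tilde\varsigma_1)$ via the Gaussian limit from Lemma~\ref{lemma:linear} and Theorem~\ref{thm:var} (your uniform-integrability remark supplies a detail the paper leaves implicit behind ``continuous mapping''). The one quibble is that the ``main obstacle'' you anticipate for $\theta_{1,ij}$ is not actually present: since $\partial y_{1,j}/\partial w_{1,ij}=\sqrt{c/d_1}\,x_{1,i}$ is deterministic given $\Theta$ and the input, no product of $w_{1,ij}$ with the indicator $\act'(y_{1,j})$ ever enters the conditional expectation, so the factorization is exact without your leave-one-out decomposition --- the genuine dependence issue of that kind arises only in Theorem~\ref{thm:lossgrad}, where $y$ multiplies $\act'(y_{1,j})$.
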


\begin{theorem}
\label{thm:lossgrad}
For MSE loss, 
$
    \loss(y) = \half (y-z)^2,
    %\frac{dl(y)}{dy} = y - z,
$
where $z$ is the ground-truth label, the gradient of the loss converges on the limit $d_2 \rightarrow \infty, d_1\rightarrow \infty$, %under overparameterization $d_1, d_2 \rightarrow \infty$
\begin{equation*}
\begin{aligned}
    \frac{\partial \loss(\bar y)}{\partial \theta_{1,ij}} 
    &= \E\left[ \frac{\partial \loss(y)}{\partial w_{1,ij}}\middle|\Theta \right], &
    %+ O\left(\frac{\sqrt{d_2} + \sqrt{d_3}}{d_2d_3}\right),\\
    \frac{\partial \loss(\bar y)}{\partial b_{1,j}} 
    &= \E\left[ \frac{\partial \loss(y)}{\partial b_{1,j}}\middle|\Theta\right], \\
    %+ O\left( \frac{1}{d_3} \right),\\
    \frac{\partial \loss(\bar y)}{\partial w_{2,j}} 
    &= \E\left[ \frac{\partial \loss(y)}{\partial w_{2,j}}\middle|\Theta\right].
    %+ O\left( \frac{1}{d_3} \right).
\end{aligned}
\end{equation*}
\end{theorem}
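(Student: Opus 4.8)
The plan is to exploit the special structure of the squared loss, whose derivative $\loss'(y)=y-z$ is \emph{linear} in $y$, so as to reduce the claim to Theorem~\ref{thm:nngrad} plus a single vanishing covariance term. First I would apply the chain rule to each parameter; for the first-layer weight, for instance, $\partial\loss(y)/\partial w_{1,ij}=(y-z)\,\partial y/\partial w_{1,ij}$, and likewise $\partial\loss(\bar y)/\partial\theta_{1,ij}=(\bar y-z)\,\partial\bar y/\partial\theta_{1,ij}$. Writing $D_{ij}:=\partial y/\partial w_{1,ij}$ and taking the expectation over the stochastic rounding conditioned on $\Theta$ (under which $z$ is deterministic and $\bar y=\E[y\mid\Theta]$), the covariance identity gives
\[
\E\!\left[(y-z)D_{ij}\,\middle|\,\Theta\right]=(\bar y-z)\,\E[D_{ij}\mid\Theta]+\Cov\!\left(y,D_{ij}\,\middle|\,\Theta\right).
\]
By Theorem~\ref{thm:nngrad}, $\E[D_{ij}\mid\Theta]$ converges to $\partial\bar y/\partial\theta_{1,ij}$, so the first term already reproduces $\partial\loss(\bar y)/\partial\theta_{1,ij}$; the whole theorem therefore reduces to showing that the conditional covariance is negligible in the overparameterized limit.

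The key structural observation I would use is that, conditioned on $\Theta$, the hidden pre-activations $y_{1,j}$ are mutually independent across $j$, since $y_{1,j}$ depends only on the independent Bernoulli weights $\{w_{1,ij}\}_i$ of its own unit (with $x_{1,i}$ fixed given the frozen additional layer). Because $D_{ij}$ is a function of the $j$-th unit alone, explicitly $D_{ij}=\tfrac{1}{\sqrt{d_2}}w_{2,j}\sqrt{c/d_1}\,x_{1,i}\,\act'(y_{1,j})$, I would split the output as $y=\tfrac{1}{\sqrt{d_2}}w_{2,j}x_{2,j}+R_j$, where the remainder $R_j$ collects the other units and $b_2$ and is independent of the $j$-th unit, hence of $D_{ij}$. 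The remainder contributes nothing, leaving
\[
\Cov\!\left(y,D_{ij}\,\middle|\,\Theta\right)=\frac{w_{2,j}^2}{d_2}\sqrt{\frac{c}{d_1}}\,x_{1,i}\,\Cov\!\left(\act(y_{1,j}),\act'(y_{1,j})\,\middle|\,\Theta\right).
\]

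The remaining and main task is to bound this residual. Since $\act'$ is the ReLU indicator, it is bounded by $1$, while $\act(y_{1,j})$ has variance controlled by $\varsigma_{1,j}^2$, which by Theorem~\ref{thm:var} converges to the finite constant $\tilde\varsigma_1^2$; hence $\Cov(\act(y_{1,j}),\act'(y_{1,j})\mid\Theta)=O(1)$ uniformly in $j$. Together with $w_{2,j}^2=O(1)$ and $x_{1,i}=O(1)$, the collected prefactors yield $\Cov(y,D_{ij}\mid\Theta)=O\!\big(1/(d_2\sqrt{d_1})\big)$, which not only vanishes but is of strictly smaller order than the gradient itself, $\partial\bar y/\partial\theta_{1,ij}=O(1/\sqrt{d_1 d_2})$, so it is negligible as $d_1,d_2\to\infty$. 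The same decomposition handles the other two parameters: for $b_{1,j}$ one has $\partial y/\partial b_{1,j}=\tfrac{1}{\sqrt{d_2}}w_{2,j}\act'(y_{1,j})$ and for $w_{2,j}$ one has $\partial y/\partial w_{2,j}=\tfrac{1}{\sqrt{d_2}}x_{2,j}$, and in each case the single-unit argument produces a covariance of order $O(1/d_2)$. I expect the delicate point to be keeping the moment bounds on $\act(y_{1,j})$ uniform while $y_{1,j}$ is still a finite Bernoulli sum rather than its Gaussian limit, and verifying that the covariance is genuinely lower order than, rather than merely comparable to, the main gradient term, so that the stated limiting equality is meaningful.
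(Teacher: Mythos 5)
Your proposal is correct and follows essentially the same route as the paper's proof: both exploit the conditional independence of the hidden units given $\Theta$, isolate the $j$-th unit's contribution to $y$ (the paper by directly expanding $\E[x_{2,j}y\mid\Theta]$ and $\E[\act'(y_{1,j})y\mid\Theta]$, you by the equivalent covariance identity), and show the resulting single-unit self-covariance term vanishes as $d_2\to\infty$ while the cross terms factor into the product of conditional expectations handled by Theorem~\ref{thm:nngrad}. Your explicit comparison of the residual's order against the order of the gradient itself is a slightly more careful framing of the same vanishing term the paper identifies.
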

%Note that $\frac{\partial y}{\partial w_{2,ij}} \asymp \frac{1}{\sqrt{d_2d_3}}$, so the second term smaller than the first term by a factor of $\frac{1}{\sqrt{d_2}}$. 
In other words, the BinaryConnect algorithm provides an unbiased estimator to the gradients for the quasi neural network on this limit of overparameterization.
%expectation of gradients approximates gradient of gradient of expectation.

%we can approximate the variance of pre-activation as a constant in both forward and backward propagation: in forward propagation, we can replace it with a constant, and in backward propagation, its gradient can be ignored. 
%This indicates that BinaryConnect is equivalent to optimizing quasi neural network. NTK is the correct way to study training dynamics.%TODO 

%With this approximation, we can represent the distribution of activations with their mean and build a quasi neural network. The fully connected layer behaves in the same way as a standard neural network, and the activation function in the $l$-th layer is:

Theorem \ref{thm:var} and Theorem \ref{thm:lossgrad}  conclude that for an infinite wide neural network, the BinaryConnect algorithm is equivalent to training quasi neural network with stochastic gradient descent (SGD) directly. Furthermore, this points out the gradient flow of BinaryConnect algorithm and allows us to study this training process with neural tangent kernel (NTK).

\subsubsection{Asymptotics during training}
\label{sec:training}
So far we have studied the distribution of output during initialization. To study the dynamic of binary weight neural network during training, one need to extend these results to any parameter during training $\Theta(t), t \in [0, T]$. Fortunately, motivated by \cite{jacot2018neural}, we can prove that as $d_1, d_2 \rightarrow \infty$, the model parameters $\Theta(t)$ stays asymptotically close to initialization for any finite $T$, so-called ``lazy training'', so the above results apply to the entire training process. 
\begin{lemma}
For all $T$ such that $\int_{t=0}^T \|\bar y(t)-z\|_{in} dt$ stays stochastically bounded, as $d_2 \rightarrow \infty, d_1\rightarrow \infty$, $ 
\| {\boldsymbol w}_{2}(T) - {\boldsymbol w}_{2}(0)\|,
\| {\boldsymbol b}_{1}(T) - {\boldsymbol b}_{1}(0)\|,
\| {\boldsymbol\theta_{1}}(T) - {\boldsymbol\theta_{1}}(0)\|_{F}
$
are all stochastically bounded,  $\|{\boldsymbol \nu}_{1}(t) - {\boldsymbol \nu}_{1}(0) \|$ and $\int_{t=0}^T \big\|\frac{\partial {\boldsymbol \nu}_{1}(t)}{\partial t}\big\|dt$ is stochastically bounded for all $x$.
    
%\textcolor{red}{TODO}
\label{thm:lazy1}
\end{lemma}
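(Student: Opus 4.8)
The plan is to write down the gradient-flow ODEs for each trainable group under the MSE loss, bound the time derivatives coordinatewise, and then aggregate into the stated norms, using the hypothesis that $\int_0^T \|\bar y(t)-z\|_{in}\,dt$ is stochastically bounded as the external forcing control. Under gradient flow $\frac{d\Theta}{dt} = -\E_{in}[(\bar y - z)\nabla_\Theta \bar y]$, differentiating the quasi neural network \eqref{eq:2nn} gives $\partial\bar y/\partial w_{2,j} = \tilde\act(\nu_{1,j})/\sqrt{d_2}$, $\partial\bar y/\partial b_{1,j} = \beta w_{2,j}\tilde\act'(\nu_{1,j})/\sqrt{d_2}$, and $\partial\bar y/\partial\theta_{1,ij} = \sqrt{c/d_1}\,w_{2,j}\tilde\act'(\nu_{1,j})x_{1,i}/\sqrt{d_2}$. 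Two structural facts carry the argument: $\tilde\act'(\cdot)=\Phi(\cdot/\tilde\varsigma_1)\in[0,1]$ is uniformly bounded by \eqref{eq:quasiactgrad}, and $\tilde\act$ has at most linear growth, $|\tilde\act(\nu)|\le|\nu|+\tilde\varsigma_1\varphi(0)$, directly from \eqref{eq:quasiact}.

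For each parameter I would integrate its ODE in $t$, pull $|\bar y-z|$ out via $\E_{in}[|\bar y - z|\,g]\le\|\bar y - z\|_{in}\|g\|_{in}$, and sum over coordinates. For $\boldsymbol w_2$ this gives $\|\boldsymbol w_2(T)-\boldsymbol w_2(0)\|^2\le d_2^{-1}\sum_j(\int_0^T\|\bar y - z\|_{in}\|\tilde\act(\nu_{1,j})\|_{in}\,dt)^2$, and since $\sum_j\|\tilde\act(\nu_{1,j})\|_{in}^2 = \E_{in}\sum_j\tilde\act(\nu_{1,j})^2 = O(d_2)$ by the linear growth of $\tilde\act$ together with $\|\boldsymbol\nu_1\|^2=O(d_2)$, the $d_2^{-1}$ prefactor makes the bound $O(1)$. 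The same bookkeeping handles $\boldsymbol b_1$ and $\boldsymbol\theta_1$: using $|\tilde\act'|\le1$ the Frobenius bound reduces to $\frac{c}{d_1 d_2}(\sum_j w_{2,j}^2)(\sum_i\|x_{1,i}\|_{in}^2)(\int_0^T\|\bar y - z\|_{in}\,dt)^2$, and the estimates $\sum_j w_{2,j}^2$ of order $d_2$ and $\sum_i\|x_{1,i}\|_{in}^2$ of order $d_1/d$ again leave an $O(1)$ quantity. For $\partial\boldsymbol\nu_1/\partial t$ I would differentiate $\nu_{1,j}(x)=\sqrt{c/d_1}\sum_i\theta_{1,ij}x_{1,i}+\beta b_{1,j}$ in $t$; substituting the $\theta_{1,ij}$ and $b_{1,j}$ dynamics collapses the first-layer contraction into the bounded kernel $\frac{c}{d_1}\sum_i x_{1,i}(x)x_{1,i}(x')+\beta^2\to\frac{c}{d}\langle x,x'\rangle+\beta^2$ (uniformly in $x$ since $\|x\|_2=1$), yielding $\|\partial\boldsymbol\nu_1/\partial t\|\le d_2^{-1/2}(C+\beta^2)\|\boldsymbol w_2(t)\|\,\|\bar y - z\|_{in}$; integrating gives the claimed bound on $\int_0^T\|\partial\boldsymbol\nu_1/\partial t\|\,dt$ and hence on $\|\boldsymbol\nu_1(t)-\boldsymbol\nu_1(0)\|$ for all $x$.

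The coordinate estimates above are not self-contained, because $\tilde\act(\nu_{1,j}(t))$ and the prefactor $\|\boldsymbol w_2(t)\|$ depend on the evolving parameters: the $\boldsymbol w_2$ dynamics is forced by $\tilde\act(\boldsymbol\nu_1)$, while the $\boldsymbol\nu_1$ dynamics is forced by $\boldsymbol w_2$. I would close this loop with a continuity (stopping-time) argument: let $\tau$ be the first time either $\|\boldsymbol w_2(t)-\boldsymbol w_2(0)\|$ or $\sup_x\|\boldsymbol\nu_1(t)-\boldsymbol\nu_1(0)\|$ exceeds a fixed constant, show that on $[0,\tau)$ both $\sum_j\|\tilde\act(\nu_{1,j})\|_{in}^2$ and $\|\boldsymbol w_2(t)\|^2$ stay within a constant factor of their initial values (which are of order $d_2$), so that the integrated bounds are strictly below the threshold once $d_1,d_2$ are large, forcing $\tau>T$. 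The main obstacle is precisely this self-consistency step combined with the fact that $\tilde\act$ is unbounded (only linearly growing), which rules out naive supremum bounds and forces me to track second moments: concretely, I must show the sums $\sum_j w_{2,j}^2\,\tilde\act(\nu_{1,j})^2$ and $\sum_i x_{1,i}^2$ concentrate around their means, which follows from the law of large numbers applied to the fresh Gaussians $w_{2,j}$ and $\nu_{1,j}(0)$ and from the Good-Initialization guarantees of Proposition \ref{prop:asconv} for the $w_0$ sums.
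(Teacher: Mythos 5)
Your setup --- writing the gradient-flow ODEs for each parameter group, pulling $\|\bar y(t)-z\|_{in}$ out via Cauchy--Schwarz in the $\langle\cdot,\cdot\rangle_{in}$ inner product, and exploiting $\tilde\act'\in[0,1]$ together with the linear growth of $\tilde\act$ --- is exactly the first half of the paper's proof, and your observation that the $\boldsymbol\nu_1$ dynamics collapses onto the bounded kernel $\frac{c}{d}\langle x,x'\rangle+\beta^2$ is a clean variant of the paper's estimate via $\sqrt{1/d_1}\,\|\dot{\boldsymbol\theta}_1\|_{op}\|\vect x_1\|+\|\dot{\vect b}_1\|$. The genuine gap is in how you close the $\boldsymbol w_2\leftrightarrow\boldsymbol\nu_1$ feedback loop. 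A stopping-time argument with a \emph{fixed constant} threshold $M$ does not close: on $[0,\tau)$ the integrated bound you obtain has the form $(C_0+M)\int_0^T\|\bar y(t)-z\|_{in}\,dt$, where $C_0$ absorbs the initial magnitudes $d_2^{-1/2}\|\boldsymbol w_2(0)\|$ and $d_2^{-1/2}\|\boldsymbol\nu_1(0)\|_{in}$. For this to fall strictly below $M$ you would need $\int_0^T\|\bar y-z\|_{in}\,dt<1$, but the hypothesis only makes that integral stochastically bounded, not small. Crucially, sending $d_1,d_2\to\infty$ does not rescue you: the $d_2^{-1/2}$ prefactors in the ODEs are exactly cancelled by the $O(\sqrt{d_2})$ sizes of $\|\boldsymbol w_2\|$ and $\|\boldsymbol\nu_1\|_{in}$, so the normalized deviation $u(t)$ obeys $\dot u\lesssim\|\bar y-z\|_{in}\,(C_0+u)$ with no width-dependent gain, and your claim that ``the integrated bounds are strictly below the threshold once $d_1,d_2$ are large'' is false.

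The paper closes the loop with Gr\"onwall instead: it defines the combined quantity $A(t)=\sqrt{c/d_2}\,\sqrt{1+\beta}\,\bigl(\|\boldsymbol w_2(t)-\boldsymbol w_2(0)\|+\|\boldsymbol w_2(0)\|\bigr)+\sqrt{c/d_2}\,\bigl(\|\boldsymbol\nu_1(t)-\boldsymbol\nu_1(0)\|_{in}+\|\boldsymbol\nu_1(0)\|_{in}\bigr)$, shows $\dot A(t)$ is controlled by $\|\bar y(t)-z\|_{in}\,A(t)$ up to constants, and integrates to $A(T)\le A(0)\exp\bigl(C\int_0^T\|\bar y(t)-z\|_{in}\,dt\bigr)$, which is stochastically bounded under precisely the stated hypothesis, with no smallness of the loss integral required; the individual parameter deviations then follow by integrating the earlier coordinate bounds. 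You can repair your argument either by switching to this Gr\"onwall step, or by iterating your bootstrap over a partition of $[0,T]$ into subintervals on each of which $\int\|\bar y-z\|_{in}\,dt\le 1/2$ (of which there are stochastically boundedly many), compounding the constant geometrically --- which is Gr\"onwall in disguise. The rest of your bookkeeping (the second-moment controls $\sum_j w_{2,j}^2=O(d_2)$ and $\sum_i\|x_{1,i}\|_{in}^2=O(d_1)$ from the Good Initialization event) matches what the paper uses.
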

The proof can be found in \autoref{sec:lazyproof1}. Note that $\|\vect w_{2}\|=O(\sqrt{d_2}), \|\vect \theta_{1}\|_F=O(\sqrt{d_1d_2})$, this results indicates that as $d_2 \rightarrow \infty$, the varying of the parameter is much smaller than the initialization, or so-called ``lazy training''. Making use of this result, we further get the follow result:
\begin{lemma}
    Under the condition of \autoref{thm:lazy1}, Lyapunov's condition holds for all $T$ so $y_{1, j}$ converges to Gaussian distribution conditioned on the model parameters $\Theta(T)$. Furthermore, $\varsigma_{1, j}(T) \rightarrow \varsigma_{1, t}(0)$, which equals $\tilde\varsigma_{1}$ almost surely.
\label{thm:lazy2}
\end{lemma}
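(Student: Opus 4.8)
The statement has two parts: first, that conditioned on the trained parameters $\Theta(T)$ the pre-activation $y_{1,j}(T)$ is asymptotically Gaussian, and second, that its conditional variance $\varsigma_{1,j}^2(T)$ still converges to the initialization constant $\tilde\varsigma_1^2$. The plan is a \emph{perturbation argument}: because the additional layer is frozen, $x_{1,i}$ does not move during training, so the only quantities entering $y_{1,j}(T)=\sqrt{c/d_1}\sum_{i=1}^{d_1} w_{1,ij}x_{1,i}+b_{1,j}(T)$ that change are the Bernoulli means $\theta_{1,ij}(T)$ and the bias $b_{1,j}(T)$. I would therefore treat $\Theta(T)$ as a lazy perturbation of $\Theta(0)$ and transport the initialization results (Lemma \ref{lemma:linear}, Theorem \ref{thm:var}) to time $T$, using Lemma \ref{thm:lazy1} to control the size of the perturbation. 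I would establish the variance statement first, since the Lyapunov step needs the limiting variance to be strictly positive.

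For the variance, conditioned on $\Theta(T)$ the weights $w_{1,ij}$ are independent $\pm1$ Bernoulli with mean $\theta_{1,ij}(T)$, so $\varsigma_{1,j}^2(T)=\frac{c}{d_1}\sum_i(1-\theta_{1,ij}^2(T))x_{1,i}^2$ exactly as in \eqref{eq:linear}. I would write $\varsigma_{1,j}^2(T)-\varsigma_{1,j}^2(0)=\frac{c}{d_1}\sum_i(\theta_{1,ij}^2(0)-\theta_{1,ij}^2(T))x_{1,i}^2$, bound $|\theta^2(0)-\theta^2(T)|\le 2|\theta(0)-\theta(T)|$ using $|\theta|\le1$, and apply Cauchy--Schwarz to split this into a parameter factor $(\sum_i(\theta_{1,ij}(T)-\theta_{1,ij}(0))^2)^{1/2}\le\|\boldsymbol\theta_1(T)-\boldsymbol\theta_1(0)\|_F$, which is stochastically bounded by Lemma \ref{thm:lazy1}, and a data factor $(\sum_i x_{1,i}^4)^{1/2}=O(\sqrt{d_1})$, the latter following from a bounded-fourth-moment version of the ``Good Initialization'' conditions. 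The prefactor $c/d_1$ then forces the difference to be $O_P(1/\sqrt{d_1})\to 0$, and combining with the a.s. convergence of Theorem \ref{thm:var} gives $\varsigma_{1,j}^2(T)\to\tilde\varsigma_1^2$.

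For the Gaussian limit, I would verify Lyapunov's condition with $\delta=1$ for the independent summands $\sqrt{c/d_1}\,w_{1,ij}x_{1,i}$. Their central third moments are uniformly bounded because $|w_{1,ij}-\theta_{1,ij}|\le 2$, so the Lyapunov ratio is at most $C(c/d_1)^{3/2}\sum_i|x_{1,i}|^3 / (\varsigma_{1,j}^2(T))^{3/2}$. The numerator is $O(1/\sqrt{d_1})$ by the third-order ``Good Initialization'' bound on $\frac{1}{d_1}\sum_i|x_{1,i}|^3$, while the denominator converges to $\tilde\varsigma_1^3>0$ by the variance step just established; hence the ratio vanishes and the Lyapunov CLT yields $y_{1,j}(T)\mid\Theta(T)\to\cN(\nu_{1,j}(T),\varsigma_{1,j}^2(T))$.

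The main obstacle is the interaction between the $O_P(1)$ Frobenius bound supplied by lazy training and the heavy-tailed (roughly $\chi^2$) weights $x_{1,i}^2$: individual $x_{1,i}$ are unbounded, so they cannot be pulled out of the sum, and the whole argument hinges on trading the $\ell_2$ parameter bound against a fourth-moment control of $x_1$ through Cauchy--Schwarz. A secondary difficulty is that Lemma \ref{thm:lazy1} delivers only stochastic boundedness (convergence in probability), whereas the claim for $\varsigma_{1,j}^2(T)$ is almost sure; the a.s. conclusion rests on the a.s. statement of Theorem \ref{thm:var} at initialization together with the perturbation term vanishing, so care is needed to keep the modes of convergence consistent and to respect the logical order in which the two parts are proved.
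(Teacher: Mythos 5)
Your proposal is correct and reaches both conclusions, but the variance step takes a genuinely different route from the paper. The paper differentiates $\varsigma_{1,j}^2(t)$ in time, integrates $|\dot\theta_{1,ij}(t)|$ along the gradient flow by re-substituting the explicit expression for $\dot\theta_{1,ij}$, pulls out $\sqrt{c/d_2}\,|w_{2,j}(t)|\le C(t)$, and finishes with an LLN applied to $\frac{1}{d_1}\sum_i x_{1,i}^2\|x_{1,i}\|_{in}$; you instead treat Lemma~\ref{thm:lazy1} as a black box and trade the $O_P(1)$ Frobenius bound on $\|\boldsymbol\theta_1(T)-\boldsymbol\theta_1(0)\|_F$ against a fourth-moment bound on $x_1$ via Cauchy--Schwarz over the index $i$, getting the same $O_P(1/\sqrt{d_1})$ decay. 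Your version is more modular and shorter, since it never reopens the dynamics; its one extra ingredient is the condition $\frac{1}{d_1}\sum_i x_{1,i}^4=O(1)$, which is not among the three clauses of the paper's ``Good Initialization'' definition and would have to be added (harmlessly, by the same SLLN argument used for the third-moment clause, since $\E[x_{1,i}^4]=3\|x\|_2^4$). The paper's route avoids the fourth moment but pays for it with its own ad hoc LLN appeal for $\frac{1}{d_1}\sum_i x_{1,i}^2\|x_{1,i}\|_{in}$, so the bookkeeping burden is comparable. Your Lyapunov step is essentially identical to the paper's (the bound $|w_{1,ij}-\theta_{1,ij}|\le 2$ makes the third-moment estimate time-independent, and the denominator stays bounded away from zero once the variance convergence is in hand), and you correctly prove the variance statement first so that positivity of the limiting variance is available. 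Your closing remark about the mismatch between the $O_P$ bounds from lazy training and the ``almost surely'' phrasing of the lemma is a fair observation, but it is a looseness shared by the paper's own proof rather than a defect of yours.
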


% \begin{theorem}
%     Under the condition of \autoref{thm:lazy1}, for any $x, x'$, the kernel defined as \autoref{eq:qntkdef} satisfies
%     \begin{equation*}
%         \frac{\partial}{\partial t} \cK_{BWNN}(x, x') \rightarrow 0.
%     \end{equation*}
% \label{thm:lazy3}
% \end{theorem}
% \textcolor{red}{TODO: proof}
The proof can be found in \autoref{sec:lazyproof2}. This result shows that 
% the kernel is almost a constant throughout the training process and 
the analysis in \autoref{sec:quasinn} applies to the entire training process, and allows us to study the dynamics of binary weight neural network using quasi neural network.

\section{Capacity of Binary Weight Neural Network}
As has been found in \cite{jacot2018neural}, the dynamics of an overparameterized neural network trained with SGD is equivalent to kernel gradient descent where the kernel is NTK. As a result, the effective capacity of a neural network is equivalent to the RKHS of its NTK. In the following part, we will study the NTK of binary weight neural network using the approximation above, and compare it with Gaussian kernel.

\subsection{NTK of three-layer binary weight neural networks}
\label{sec:ntk}
We consider the NTK binary weight neural network by studying this ``quasi neural network'' defined as
\begin{eqal}
    \cK_{BWNN} (x, x') &
    %=\E_{\Theta}\left< \frac{\partial \bar y}{\partial \Theta}, \frac{\partial \bar y'}{\partial \Theta} \right>
    = \sum_{i=1, j=1}^{d_1, d_2} \frac{\partial \bar y}{\partial \theta_{1, ij}}  \frac{\partial \bar y'}{\partial \theta_{1,ij}} 
    + \sum_{j=1}^{d_2} \frac{\partial \bar y}{\partial b_{1,j}} \frac{\partial \bar y'}{\partial b_{1,j}}
    + \sum_{j=1}^{d_2} \frac{\partial \bar y}{\partial w_{2,j}} \frac{\partial \bar y'}{\partial w_{2,j}},
    %&\hspace{1.1cm} + \left.\sum_j \frac{\partial \E[y]}{\partial a_{j}} \frac{\partial \E[y']}{\partial a_{j}}\right],
\label{eq:qntkdef}
\end{eqal}
where $\Theta:=\{w_{1, ij}, b_{1,j}, b_{2,j}\}$ denotes all the trainable parameters.
% In the following discussion, we study its expectation over random initialization $\E\cK_{BWNN}$, because the kernel converges to its expectation almost surely.
We omitted the terms related to $b_2$ (which is a constant) in this equation.

First prove that the change of kernel asymptotically converges to 0 during training process.
% As for the kernel during training, we have the following conclusion, which shows that the kernel during training converges to the one while initialization:
\begin{theorem}
    \label{thm:lazy3}
    Under the condition of \autoref{thm:lazy1}, $\cK(x, x')(T) \rightarrow \cK(x, x')(0)$ at rate $1/\sqrt{d_2}$ for any $x, x'$.
\end{theorem}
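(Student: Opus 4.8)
The plan is to differentiate the quasi neural network \eqref{eq:2nn} explicitly, split \eqref{eq:qntkdef} into its three parameter-blocks, and bound the change of each block between $t=0$ and $t=T$ by combining the stochastic boundedness of the parameter displacements from \autoref{thm:lazy1} with the smoothness of the quasi activation. Differentiating \eqref{eq:2nn} and using \eqref{eq:quasiactgrad} gives
\begin{equation*}
\frac{\partial \bar y}{\partial w_{2,j}} = \frac{\tilde\act(\nu_{1,j})}{\sqrt{d_2}}, \qquad
\frac{\partial \bar y}{\partial \theta_{1,ij}} = \frac{1}{\sqrt{d_2}}\sqrt{\frac{c}{d_1}}\, w_{2,j}\, \Phi\!\left(\frac{\nu_{1,j}}{\tilde\varsigma_1}\right) x_{1,i}, \qquad
\frac{\partial \bar y}{\partial b_{1,j}} = \frac{\beta}{\sqrt{d_2}}\, w_{2,j}\, \Phi\!\left(\frac{\nu_{1,j}}{\tilde\varsigma_1}\right).
\end{equation*}
Substituting these into \eqref{eq:qntkdef}, the $\theta_1$-block factors as $\frac{c}{d_1}\big(\sum_i x_{1,i}x'_{1,i}\big)\cdot A(t)$ with $A(t)=\frac{1}{d_2}\sum_j w_{2,j}^2\,g_j$ and $g_j=\Phi(\nu_{1,j}/\tilde\varsigma_1)\Phi(\nu'_{1,j}/\tilde\varsigma_1)$, where the first factor is \emph{constant} throughout training because the additional layer $w_0, b_0$ is frozen, so $x_{1,i}$ never changes. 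The $b_1$-block equals $\beta^2 A(t)$, and the $w_2$-block is $B(t)=\frac{1}{d_2}\sum_j\tilde\act(\nu_{1,j})\tilde\act(\nu'_{1,j})$. Hence it suffices to show $|A(T)-A(0)|$ and $|B(T)-B(0)|$ are each $O(1/\sqrt{d_2})$.

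For $A(t)$ I would telescope in the two time-varying factors by the product-rule split $w_{2,j}(T)^2 g_j(T)-w_{2,j}(0)^2 g_j(0)=[w_{2,j}(T)^2-w_{2,j}(0)^2]\,g_j(T)+w_{2,j}(0)^2[g_j(T)-g_j(0)]$ and apply Cauchy--Schwarz to each resulting sum. The smoothness facts I rely on are that $\tilde\act'=\Phi(\cdot/\tilde\varsigma_1)\in[0,1]$ is bounded and Lipschitz (its derivative $\varphi(\cdot/\tilde\varsigma_1)/\tilde\varsigma_1$ is bounded), so $g_j\le 1$ and $g_j$ is $L$-Lipschitz in $(\nu_{1,j},\nu'_{1,j})$. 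For the first term, $|w_{2,j}(T)^2-w_{2,j}(0)^2|=|w_{2,j}(T)-w_{2,j}(0)|\,|w_{2,j}(T)+w_{2,j}(0)|$, so its contribution is at most $\frac{1}{d_2}\|\vect w_2(T)-\vect w_2(0)\|\,\|\vect w_2(T)+\vect w_2(0)\|$; by \autoref{thm:lazy1} the displacement is $O(1)$ while $\|\vect w_2(0)\|=O(\sqrt{d_2})$ at initialization, giving $O(1/\sqrt{d_2})$. For the second term, Lipschitzness of $g_j$ bounds $|g_j(T)-g_j(0)|\le L(|\nu_{1,j}(T)-\nu_{1,j}(0)|+|\nu'_{1,j}(T)-\nu'_{1,j}(0)|)$, and Cauchy--Schwarz bounds its contribution by $\frac{L}{d_2}\big(\sum_j w_{2,j}(0)^4\big)^{1/2}\big(\|\vect\nu_1(T)-\vect\nu_1(0)\|+\|\vect\nu_1'(T)-\vect\nu_1'(0)\|\big)=\frac{L}{d_2}\cdot O(\sqrt{d_2})\cdot O(1)=O(1/\sqrt{d_2})$, using \autoref{thm:lazy1} and $\sum_j w_{2,j}(0)^4=O(d_2)$.

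The main obstacle is the $w_2$-block $B(t)$: unlike $\tilde\act'$, the quasi activation $\tilde\act$ is \emph{unbounded} (it is a smoothed ReLU growing linearly), so after the same telescoped split $\tilde\act(\nu_{1,j}(T))\tilde\act(\nu'_{1,j}(T))-\tilde\act(\nu_{1,j}(0))\tilde\act(\nu'_{1,j}(0))$ one cannot bound $\tilde\act$ by a constant. Controlling it requires $\sum_j\tilde\act(\nu_{1,j}(T))^2=O(d_2)$, which I would obtain by a law-of-large-numbers argument: by \autoref{thm:var} the $\nu_{1,j}(0)$ are asymptotically independent Gaussians, so $\frac{1}{d_2}\sum_j\tilde\act(\nu_{1,j}(0))^2$ converges to a finite constant, and the $O(1)$ displacement of $\vect\nu_1$ from \autoref{thm:lazy1} together with the $1$-Lipschitzness of $\tilde\act$ (since $|\tilde\act'|\le 1$) transfers this to time $T$ via the triangle inequality in $\ell_2$. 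With this, Cauchy--Schwarz applied to $\frac{1}{d_2}\sum_j |\tilde\act(\nu_{1,j}(T))-\tilde\act(\nu_{1,j}(0))|\,|\tilde\act(\nu'_{1,j}(T))|\le\frac{1}{d_2}\|\vect\nu_1(T)-\vect\nu_1(0)\|\,\big(\sum_j\tilde\act(\nu'_{1,j}(T))^2\big)^{1/2}=O(1/\sqrt{d_2})$ closes the estimate. A secondary subtlety is that all bounds from \autoref{thm:lazy1} are \emph{stochastic} (boundedness in probability) rather than deterministic, so the products and sums above must be combined using that products of stochastically bounded sequences remain stochastically bounded; collecting the three blocks then yields $|\cK(x,x')(T)-\cK(x,x')(0)|=O(1/\sqrt{d_2})$ for every fixed $x,x'$.
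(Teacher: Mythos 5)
Your proposal is correct and follows essentially the same route as the paper's proof: decompose the kernel into the $\theta_1$-, $b_1$-, and $w_2$-blocks, telescope each product between $t=0$ and $t=T$, and close with Cauchy--Schwarz using the stochastic boundedness of the parameter displacements from \autoref{thm:lazy1} together with the boundedness and Lipschitzness of $\tilde\act'$. If anything you are more careful than the paper on the $\Sigma^{(1)}$ block (the paper only invokes bounds on $\tilde\act'$ and $\tilde\act''$, glossing over the linear growth of $\tilde\act$ itself, which you handle via the law-of-large-numbers bound $\sum_j\tilde\act(\nu_{1,j})^2=O(d_2)$); the only ingredient you leave implicit is the paper's preliminary appeal to \autoref{thm:lazy2}, which shows $\varsigma_{1,j}(T)\to\tilde\varsigma_1$ and thereby justifies treating the activation's scale parameter as fixed throughout training.
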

The proof can be found in \autoref{sec:prooflazy3}.
Using Assumption~\ref{ass:norm}, we confine the input on the hypersphere $\S^{d-1}=\{x \in \R^d: \|x\|_2=1\}$. One can easily tell that it is positive definite, so we can apply Mercer's decomposition \cite{minh2006mercer} to it. 
% The remaining question is to find the basis and eigenvalues to this kernel.

To find the basis and eigenvalues to this kernel, we apply spherical harmonics decomposition to this kernel, which is common among studying of NTK \citep{bach2017breaking, bietti2019inductive}: 
\begin{equation}
    \cK_{BWNN}(x, x')=\sum_{k=1}^\infty u_k \sum_{j=1}^{N(d, k)} Y_{k, j}(x) Y_{k, j}(x'),
\label{eq:spdecom}
\end{equation}
where $d$ denotes the dimension of $x$ and $x'$,  $Y_{k,j}$ denotes the spherical harmonics of order $k$. This suggests that NTK of binary weight neural network and exponential kernel can be spanned by the same set of basis function. The key question is the decay rate of $u_k$ with $k$.

\begin{theorem}
\label{thm:exp}
NTK of a binary weight neural network can be decomposed using  \eqref{eq:spdecom}. If $k \gg d$, then
\begin{equation}
\mathrm{Poly}_1(k)C^{-k} \leq u_k \leq \mathrm{Poly}_2(k) C^{-k}.
\label{eq:qntk}
\end{equation}
where $\mathrm{Poly}_1(k)$ and $\mathrm{Poly}_2(k)$ denote polynomials of $k$, and $C$ is a constant.
% If $1 \ll k \ll d$, 
% \begin{equation}
%     A_1\left(\frac{C_1k}{d^2}\right) ^{k} \leq u_k \leq A_2\left(\frac{C_2k}{d^2}\right) ^{k}
% \end{equation}
% where $A_1, A_2, C_1, C_2$ are constants.

\end{theorem}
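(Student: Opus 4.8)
The plan is to reduce the NTK to a one-dimensional dot-product kernel on the sphere and then read off the spectral decay from the analyticity of the resulting profile function. First I would take the infinite-width limit of \eqref{eq:qntkdef}: using Assumption \ref{ass:norm} and the ``good initialization'' guarantees, each of the three sums converges to an expectation over the random initialization, and because $\nu_{1,j}\mid\Theta$ and $\nu'_{1,j}\mid\Theta$ are jointly Gaussian (Lemma \ref{lemma:linear}, Theorem \ref{thm:var}) with correlation coefficient equal to $\rho := \langle x, x'\rangle$, every term becomes a function of $\rho$ alone. Concretely, the $w_2$-term yields the dual activation $\E[\tilde\act(\nu)\tilde\act(\nu')]$, while the $b_1$- and $\theta_1$-terms yield $\E[\tilde\act'(\nu)\tilde\act'(\nu')] = \E[\Phi(\nu/\tilde\varsigma_1)\Phi(\nu'/\tilde\varsigma_1)]$ multiplied respectively by a constant and by $\rho$ (the latter from $\tfrac{1}{d_1}\sum_i x_{1,i}x'_{1,i}\to\rho$). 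Hence $\cK_{BWNN}(x,x') = \kappa(\rho)$ for an explicit profile $\kappa$, which justifies the decomposition \eqref{eq:spdecom} and reduces the problem to bounding the Funk--Hecke coefficients of $\kappa$.

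The heart of the argument is to show $\kappa$ is real-analytic with a finite radius of convergence strictly larger than one. I would expand each dual activation in Hermite polynomials: $\E[f(\nu)f(\nu')] = \sum_n \hat f_n^2\,\rho^n$, where $\hat f_n$ are the Hermite coefficients of $f$ against the law of $\nu$. The decisive computation is the decay of the Hermite coefficients of $\tilde\act' = \Phi(\cdot/\tilde\varsigma_1)$ from \eqref{eq:quasiactgrad}: writing $\nu = \sigma_\nu X$ with $X\sim\cN(0,1)$, the derivative is a rescaled Gaussian density $\varphi(aX)$ with $a = \sigma_\nu/\tilde\varsigma_1$, whose Hermite coefficients decay \emph{geometrically} with ratio $\sqrt{a^2/(1+a^2)}<1$; integrating shifts the index but preserves this rate, and $\tilde\act$ itself from \eqref{eq:quasiact} decays at least as fast. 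Consequently $\kappa(\rho) = \sum_n c_n\rho^n$ with $c_n\ge 0$ and $c_n$ decaying geometrically, so the radius of convergence is $R = 1 + \tilde\varsigma_1^2/\sigma_\nu^2 > 1$. This geometric (rather than super-exponential) decay is exactly the smoothed-Heaviside phenomenon that separates the binary-weight NTK from the ReLU NTK, whose genuine Heaviside derivative has only polynomially decaying Hermite coefficients.

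Since all $c_n\ge 0$, Pringsheim's theorem forces a genuine singularity of $\kappa$ at $\rho = R$ on the positive real axis, so $\kappa$ is analytic exactly up to the Bernstein ellipse with parameter $C = R + \sqrt{R^2-1}>1$ and has a singularity on its boundary. I would then invoke the classical asymptotics of Gegenbauer (Jacobi) expansions: for a function analytic inside a Bernstein ellipse with a single dominant boundary singularity, the coefficients in the expansion against the weight $(1-t^2)^{(d-3)/2}$ are sandwiched as $\mathrm{Poly}_1(k)C^{-k} \leq u_k \leq \mathrm{Poly}_2(k)C^{-k}$ once $k\gg d$, which is precisely \eqref{eq:qntk}.

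The main obstacle is the two-sided control of $u_k$, in particular the lower bound. The upper bound is routine from analyticity inside the ellipse (contour shifting, or repeated integration by parts in the Funk--Hecke integral $\int_{-1}^1 \kappa(t)\,C_k^{(d)}(t)\,(1-t^2)^{(d-3)/2}\,dt$). The lower bound requires (i) pinning down the exact radius $R$, i.e.\ matching upper and lower bounds on the Hermite coefficients of $\Phi(a\,\cdot)$ so that the singularity location, and hence $C$, is unambiguous, and (ii) a Laplace/saddle-point analysis of the Funk--Hecke integral in the regime $k\gg d$ showing that the contribution of the boundary singularity does not cancel and survives at order $\mathrm{Poly}(k)\,C^{-k}$. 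Controlling the dimension dependence so that it is fully absorbed into the polynomial prefactors, which is legitimate precisely when $k\gg d$, is the most delicate bookkeeping.
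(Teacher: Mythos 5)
Your proposal is correct in substance but takes a genuinely different route from the paper after the common first step. Both arguments begin by reducing $\cK_{BWNN}$ to a dot-product kernel through the joint Gaussianity of the pre-activations (the paper's Lemma~\ref{lemma:qntk} is exactly your profile $\kappa(\rho)$ with the three terms you describe). From there the paper stays on the sphere: it conditions on the norm of the fused weight $w$, computes the Legendre coefficients $\lambda_k(\hat c)$ of $\tilde\act(\hat c\,t)$ explicitly from the Taylor series of $\Phi$ (Lemma~\ref{lemma:quasiact}), finding super-exponential decay $\mathrm{Poly}(k)\,(\hat c\sqrt{e/k}/2)^k$ for fixed $\hat c$, and then averages $\lambda_k(\tilde c\|w\|)^2$ over the chi-distributed $\|w\|$ via the moment generating function of $\|w\|^2$; the moment $\E\big[\|w\|^{2k}e^{-\tilde c^2\|w\|^2/2}\big]$ grows like $(2k/e)^k$ and cancels the $k^{-k/2}$ factor, leaving a geometric rate. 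You instead expand the dual activations in Hermite polynomials; your key computation is right, since $\E[\Phi(\tilde c X)H_n(X)]=\tilde c^{\,n}(-1)^{n-1}\E[H_{n-1}(\tilde cX)\varphi(\tilde cX)]$ evaluates to $\mathrm{Poly}(n)\sqrt{n!}\,\big(\tilde c/\sqrt{1+\tilde c^2}\big)^n$, so $\kappa(\rho)=\sum_n c_n\rho^n$ has nonnegative coefficients with geometric decay and radius of convergence $R=(1+\tilde c^2)/\tilde c^2=1/\Var[\theta]>1$, and the Bernstein-ellipse asymptotics give $u_k\asymp\mathrm{Poly}(k)C^{-k}$. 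Your route buys a cleaner conceptual statement (the rate is the analyticity radius of the dual activation, and the ReLU contrast is geometric versus polynomial Hermite decay); the paper's buys an explicit, if laborious, closed form. The lower bound you flag as the main obstacle is easier than you fear: the connection coefficients expanding $\rho^n$ in the $P_k$ basis are nonnegative, so $u_k\ge c_n a_{n,k}$ for any single $n$, and choosing $n$ proportional to $k$ already yields a matching $\mathrm{Poly}(k)C^{-k}$ lower bound with no cancellation analysis.

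One substantive discrepancy is worth recording: the two routes do not produce the same constant. Yours gives $C=R+\sqrt{R^2-1}$, the paper's gives $C=2R$, and these agree only as $R\to\infty$ (i.e.\ $\Var[\theta]\to 0$). The source is that the paper substitutes the fixed-$\hat c$ asymptotic of $\lambda_k(\hat c)$, valid for $k\gg\hat c^2$, inside an expectation over $\|w\|$ that concentrates at $\|w\|^2\approx 2k/(1+\tilde c^2)$, i.e.\ at $\hat c^2=\Theta(k)$, outside that asymptotic's range; the Bernstein-ellipse rate is the one forced by the exact power series. Since Theorem~\ref{thm:exp} only asserts existence of some constant $C$, both arguments prove the stated result, but the two values of $C$ should not be used interchangeably.
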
 

In contrast, \citet{geifman2020similarity} shows that for NTK in the continuous space, it holds that 
$$
  C_1 k^{-d} \leq u_{k} \leq C_2 k^{-d},
$$
with   constants $C_1$ and $C_2$.
Because its decay rate is slower than that of the binary weight neural network, its RKHS covers a strict superset of functions  \citep{geifman2020similarity}.

\begin{proofs} 
We first compute NTK of quasi neural network, which depends on the distribution of $\mu_{1, j}$. As is shown in Theorem \ref{thm:var}, $\mu_{1, j}$ converge to Gaussian distribution on the limit of infinite wide neural network. 
To find the joint distribution of $\mu_{1, j}$ and $\mu'_{1, j}$ given arbitrary two inputs $x, x'$,
we combine the first linear layer in the quasi neural network with the ``additional layer'' in front of it (the first two equations in \eqref{eq:2nn}). This allows up to reparameterize $\mu_{1, j}$ as
\begin{equation*}
    \mu_{1, j} = \langle w_j, x \rangle,
\end{equation*}
where $w_j \sim \cN(0, \frac{c\Var[\theta]}{d}I)$ denotes the fused weight.
A key component in computing the NTK has the form
\begin{equation*}
    \E[\act(\mu_1)\act(\mu_1')]
    =\E[\act(\langle w, x\rangle)\act'(\langle w, x\rangle)]
    =\E_{\|w\|}\E[\act(\langle w, x\rangle)\act'(\langle w, x\rangle)|\|w\|].
\end{equation*}
The second equation comes from the law of total expectation. We use 2-norm in this expression. The inner expectation is equivalent to integration on a sphere, and can be computed by applying sphere harmonics decomposition to $\act(\cdot)$. The squared norm of the fused weight $\|w\|^2$ satisfy Chi-distribution, and we use momentum generating function to finish computing.
\end{proofs}

\subsection{Comparison with Gaussian Kernel}

% where $u_k$ satisfy \cite{minh2006mercer}
% \begin{equation}
%     u_k = \exp\left(-\frac{2}{\xi^2}\right)\xi^{d-2}I_{k+d/2-1}\left(\frac{2}{\xi^2}\right) \Gamma\left(\frac{d}{2}\right).
% \end{equation}
% Here $I$ denotes the modified Bessel function of the first kind. It has a factorial decay rate with respect to $k$:
% \begin{equation*}
% \begin{aligned}
%     &\left(\frac{2e}{\xi^2}\right)^k\frac{A_1}{(2k+d-2)^{k+d/2}} < u_k  < \left(\frac{2e}{\xi^2}\right)^k\frac{A_2}{(2k+d-2)^{k+d/2}}.
% \end{aligned}
% \end{equation*}
% Here $A_1, A_2$ are constants that depend on $\xi$ and $d$.  
Even if the input to a neural network $x$ is constrained on a unit sphere, the first linear layer (together with the additional linear layer in front of it) will project it to the entire $\R^d$ space with Gaussian distribution. In order to simulate that, we define a kernel by randoming the scale of $x$ and $x'$ beforing taking them into a Gaussian kernel.
\begin{eqal*}
    \cK_{RGauss}(x, x') = \E_\kappa[\cK_{Gauss}(\kappa x, \kappa x')],
\end{eqal*}
where $\cK_{Gauss}(x, x')=\exp\left(-\frac{\|x-x'\|^2}{\xi^2}\right)$ is a Gaussian kernel, $\kappa \sim \chi_{d}$ satisfy Chi distribution with $d$ degrees of freedom. This scaling factor projects a random vector uniformly distributed on a unit sphere to Gaussian distributed. The corresponding eigenvalue satisfy
\begin{eqal}
    A_1 C^{-k} \leq u_k \leq A_2 C^{-k},
\label{eq:gaussdecay}
\end{eqal}
where $A_1, A_2, C$ are constants that depend on $\xi$. The dominated term in both \eqref{eq:qntk} and \eqref{eq:gaussdecay} have an exponential decay rate $C^{-k}$, which suggests the similarity between NTK of binary weight neural network and Gaussian kernel. In comparison, \citet{bietti2019inductive, geifman2020similarity} showed that the eigenvalue of NTK decay at rate $k^{-d}$, which is slower that binary weight neural network or Gaussian kernel. Furthermore, Aronszajn's inclusion theorem suggests $\cH_{\cK_{BWNN}} \subset \cH_{\cK_{NN}}$, where $\cK_{NN}$ denotes the NTK of real-valued weight neural network. In other words, the expressive power of binary weight neural network is weaker than its real valued counterpart on the limit that the width goes to infinity. Binary weight neural networks are less venerable to noise thanks to the smaller expressive power at the expense of failing to learn some ``high frequency'' components in the target function. This explains that binary weight neural network often achieve lower training accuracy and smaller generalization gap compared with real weight neural network.

\section{Numerical result}
\label{sec:exp}
\subsection{Quasi neural network}
\begin{figure}
    \begin{tikzpicture}
        \node [anchor=north west] at (0,0.25) {\includegraphics[width=0.235\textwidth]{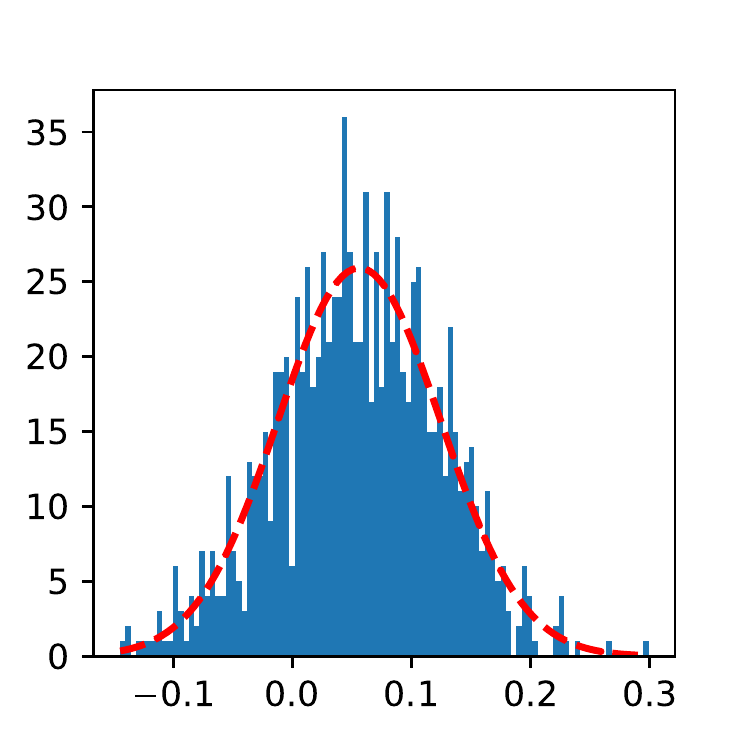}};
        \node at (0.03\textwidth, -0.24\textwidth) {\small(a)};
        \node [anchor=north west] at (0.24\textwidth,0.25) {\includegraphics[width=0.235\textwidth]{dist_pretrain.pdf}};
        \node at (0.28\textwidth,-0.24\textwidth) {\small(b)};
        \node [anchor=north west] at (0.48\textwidth,0) {\includegraphics[width=0.25\textwidth]{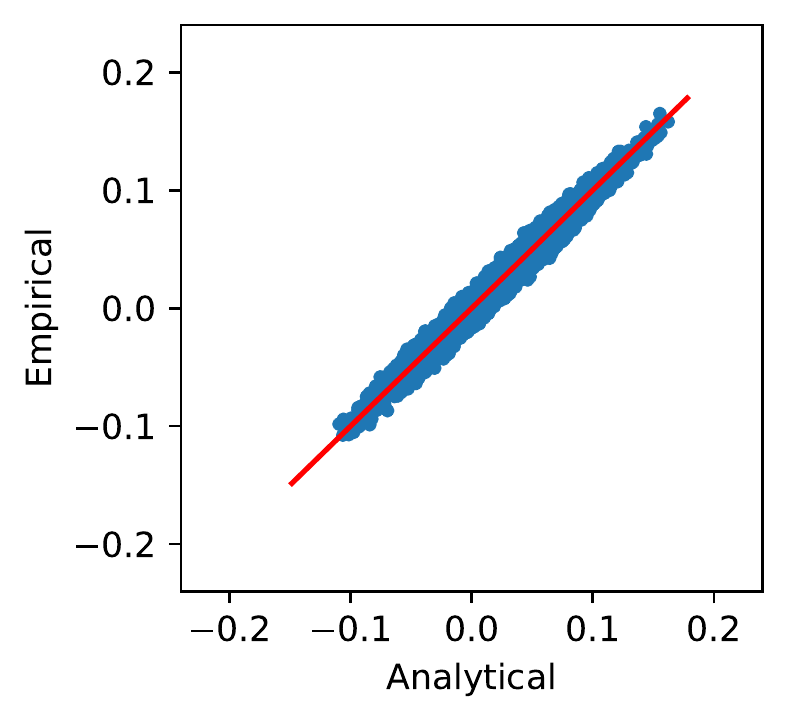}};
        \node at (0.55\textwidth,-0.24\textwidth) {\small(c)};
        \node [anchor=north west] at (0.75\textwidth,0) {\includegraphics[width=0.25\textwidth]{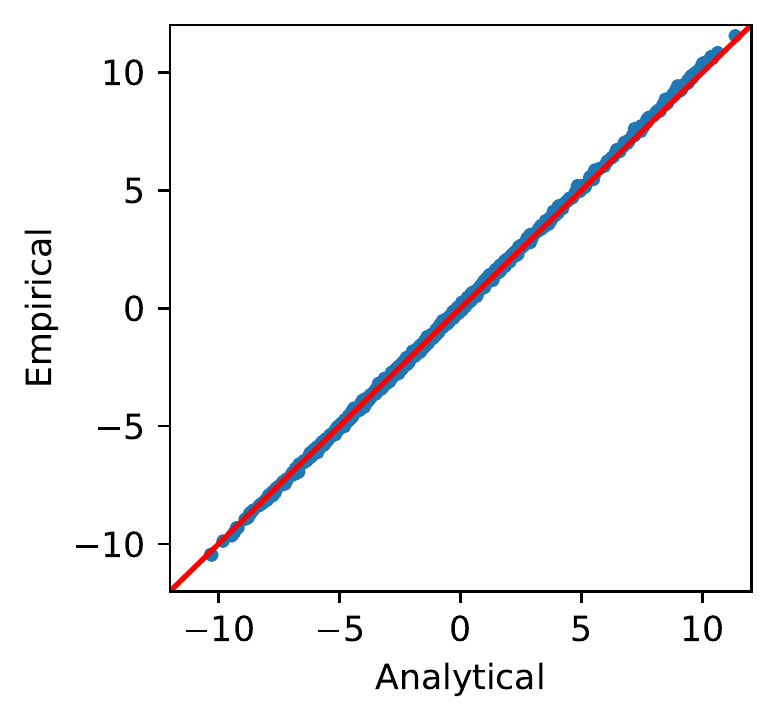}};
        \node at (0.80\textwidth,-0.24\textwidth) {\small(d)};
    \end{tikzpicture}
    
    % \centering
    % \begin{subfigure}{0.24\textwidth}
    % \includegraphics[width=\textwidth]{dist_pretrain.png}
    % \caption{}
    % \end{subfigure}
    % \begin{subfigure}{0.24\textwidth}
    % \includegraphics[width=\textwidth]{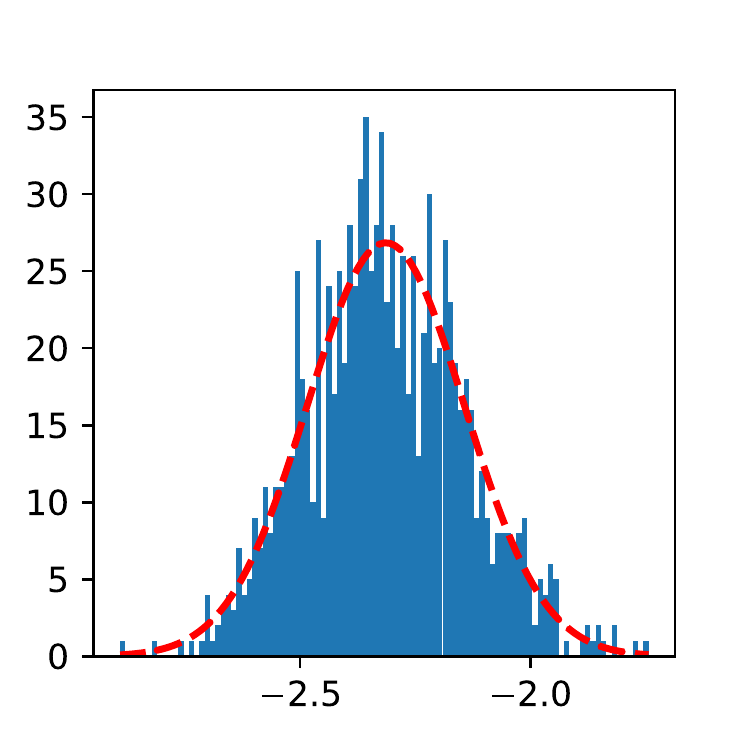}
    % \caption{}
    % \end{subfigure}
    % \begin{subfigure}{0.24\textwidth}
    % \includegraphics[width=\textwidth]{mean_pretrain.png}
    % \caption{}
    % \end{subfigure}
    % \begin{subfigure}{0.24\textwidth}
    % \includegraphics[width=\textwidth]{mean_posttrain.png}
    % \caption{}
    % \end{subfigure}
    \caption{Approximation of quasi neural network. (a)(b): before (a) and after (b) training, histogram of output under fixed model parameter (blue), and fitted with Gaussian distribution (red). (c)(d): $\E(y|\Theta)$ computed from quasi neural network (horizontal axis) and by Monte Carlo (Vertical axis). The red line shows $y=x$.}
    \label{fig:qnn}
\end{figure}
In this part, we empirically verify the approximation of quasi neural network by comparing the inference result of quasi neural network with that achieved by Monte Carlo. 
The architecture is the same as that mentioned in Section \ref{sec:problem}, with 1600 hidden neurons. 
We train this neural network on MNIST dataset \citep{lecun1998gradient} by directly applyng gradient descent to the quasi neural network. To reduce overflow, we add weight decay of 0.001 during training.
Figure \ref{fig:qnn}(a)(b) shows the histogram of output under stochastic rounding before and after training. 
We arbitrarily choose one input sample from the testing set and get 1000 samples under different stochastic rounding. 
This result supports our statement that the distribution of pre-activation (output of linear layer) conditioned on real-valued model parameters converge to Gaussian distribution.
Figure \ref{fig:qnn}(c)(d) compares the mean of output by quasi neural network approximation (horizontal axis) with that computed using Monte Carlo (vertical axis). 
These alignments further supports our method of approximating binary weight neural network with quasi neural network.   

\subsection{Generalization gap}
\subsubsection{Toy dataset}
We compare the performance of the neural network with/without binary weight and kernel learning using the same set of 90 small scale UCI datasets with less than 5000 data points as in \citet{geifman2020similarity, arora2019harnessing}.  
%We report the training accuracy and testing accuracy (mean $\pm$ std) of both vanilla neural network (NN) and binary weight neural network (BWNN) in Table \ref{tab:uci}. 
%On 53.33\% of the 90 datasets, Gaussian kernel achieves smaller generalization gap than Laplace kernel while on 37.78\% datasets the former has a larger generalization gap than the latter. Similarly, on 67.78\% of the dataset, BWNN achieves a smaller generalization gap than NN while only 26.67\% dataset we got the opposite relationship. 
We report the training accuracy and testing accuracy of both vanilla neural network (NN) and binary weight neural network (BWNN) in Figure \ref{fig:uci}. 
To further illustrate the difference, we list the paired T-test result of neural network (NN) against binary weight neural network (BWNN), and Gaussian kernel (Gaussian) against Laplace kernel (Laplace) using in Table \ref{tab:uci}. 
In this table, t-stats and p-val denotes the t-statistic and two-sided p-value of the paired t-test between two classifiers, and $<$ and $>$ denotes the percentage of dataset that the first classifier gets lower or higher testing accuracy or generalization bound (training accuracy - testing accuracy), respectively.

As can be seen from the results, although the Laplacian kernel gets higher training accuracy than the Gaussian kernel, its testing accuracy is almost the same as the latter one. 
In other words, the former has smaller generalization gap than the latter which can also be observed in Table \ref{tab:uci}. Similarly, a neural network gets higher training accuracy than a binary weight neural network but gets similar testing accuracy. 

%We observe that a binary weight neural network gets almost the same testing accuracy as a neural network while the generalization gap is smaller. Similarly, a kernel regression with Gaussian kernel gets almost the same testing accuracy as with Laplace kernel while the generalization gap is smaller.

\begin{figure}[t]
    \centering
    \begin{tikzpicture}[font=\footnotesize]
        \node [anchor=north west] at (0,0) {\includegraphics[width=0.48\textwidth]{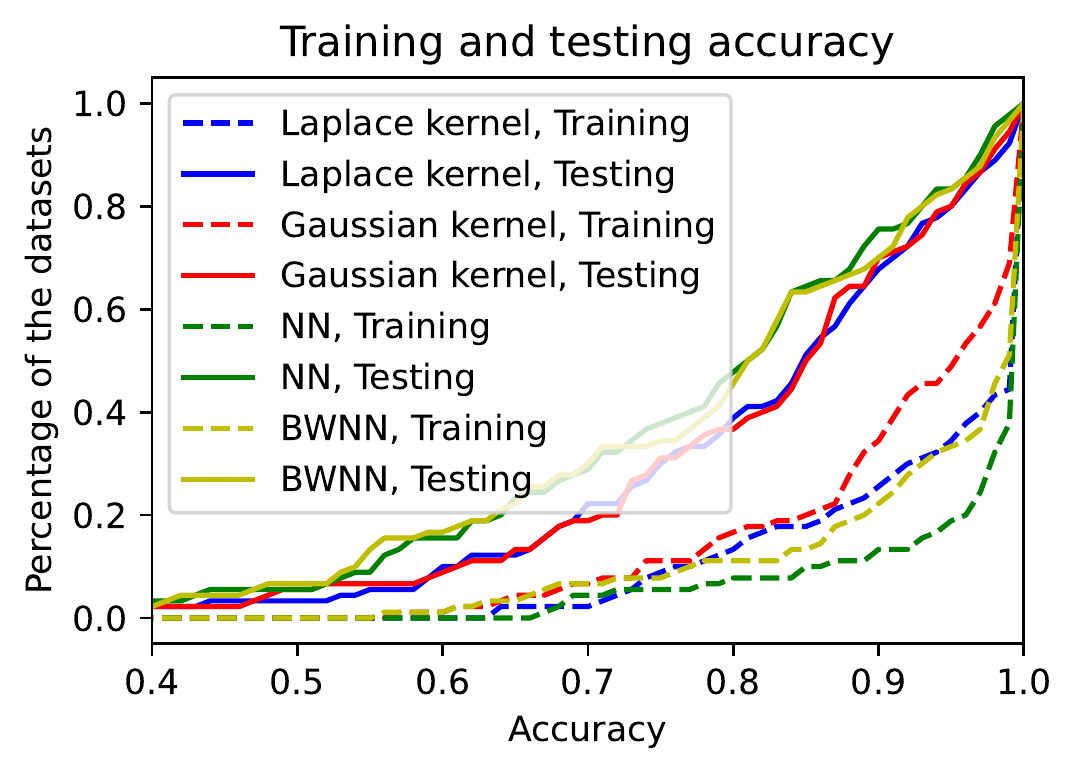}};
        \node at (0.25\textwidth, -0.36\textwidth) {(a)};
        \node [anchor=north west] at (0.5\textwidth,0) {\includegraphics[width=0.48\textwidth]{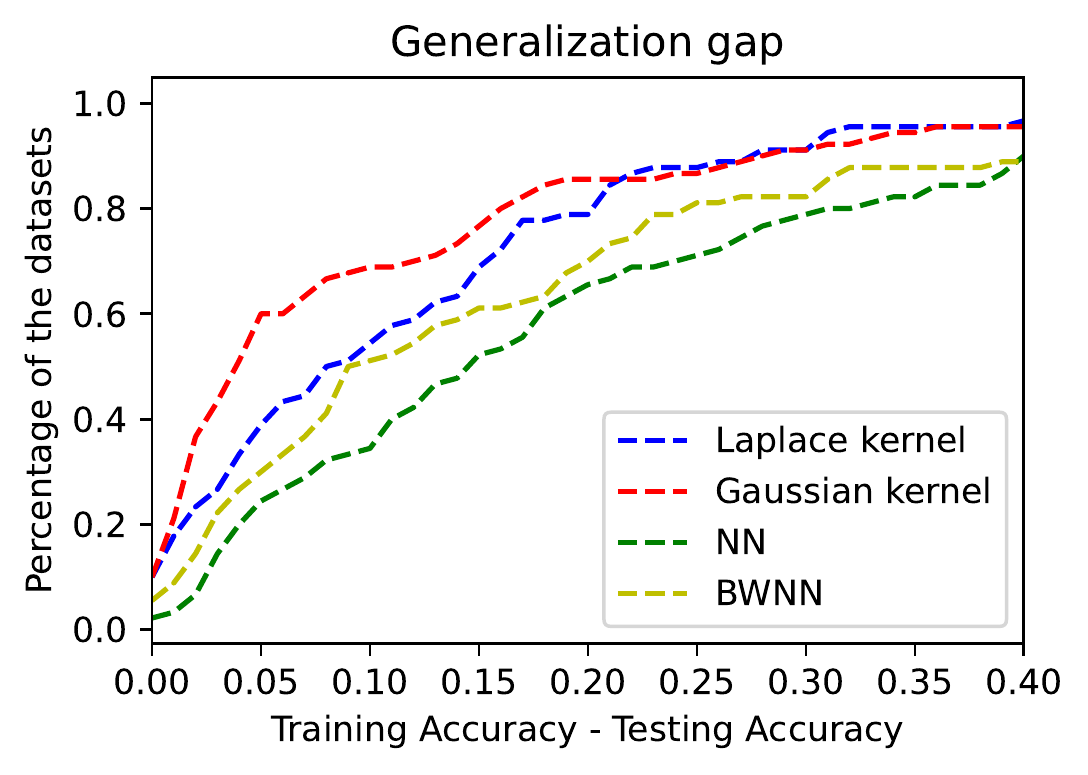}};
        \node at (0.75\textwidth,-0.36\textwidth) {(b)};
    \end{tikzpicture}
    \caption{Accuracy and generalization gap on selected 90 UCI datasets. The lines show the accuracy metric of a classifier from the lowest to the highest against their percentiles of datasets.}
    \label{fig:uci}
\end{figure}

\begin{table}[t]
    \centering
    \caption{Pairwise performance comparison on selected 90 UCI datasets.}
    \vspace{0.2cm}
    \begin{tabular}{c|cccc|cccc}
        \hline
        \multirow{2}{*}{Classifier} & \multicolumn{4}{c|}{Testing}  & \multicolumn{4}{c}{Training-Testing}\\
        & t-stats & p-val & $<$ & $>$ &  t-stats & p-val & $<$ & $>$\\
        \hline
        NN-BWNN & 0.7471 & 0.4569 & 53.33\% & 41.11\% & 4.034 & 0.000 & 26.67\% & 67.77\%\\ % $1.15\times 10^{-4}$ 
        Laplace-Gaussian & 0.4274 & 0.6701 & 51.11\% & 33.33\% & 3.280 & 0.001& 37.78\% & 53.33\% \\ % $1.48\times 10^{-3}$ 
        \hline
    \end{tabular}
    \label{tab:uci}
\end{table}

\subsubsection{MNIST-like dataset}
\begin{figure*}[t]
    \centering
    \begin{tikzpicture}
        \node[anchor=south west] at (0, 0){\includegraphics[width=\textwidth]{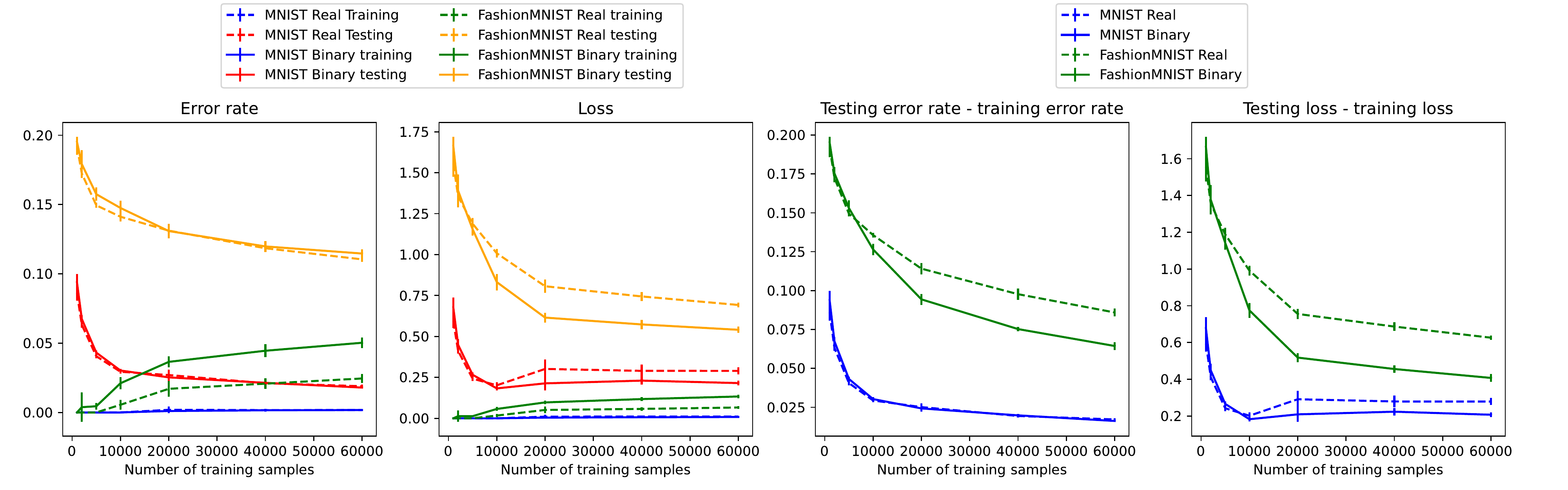}};
        \node at (0.03\textwidth, 0.2) {(a)};
        \node at (0.28\textwidth, 0.2) {(b)};
        \node at (0.53\textwidth, 0.2) {(c)};
        \node at (0.78\textwidth, 0.2) {(d)};
    \end{tikzpicture}
    \caption{Training/testing error rate and loss of neural networks with/without binary weight. (a) Training and testing error rate. (b) Training and testing loss. (c) Testing error rate - Training error rate. (d) Testing loss - Training loss. }
    \label{fig:mnist}
\end{figure*}
We compare the performance of neural networks with binary weights (Binary) with its counterpart with real value weights (Real). We take the number of training samples as a parameter by random sampling the training set and use the original test set for testing. The experiments are repeated 10 times and the mean and standard derivation is shown in Figure \ref{fig:mnist}. In the MNIST dataset, the performance of neural networks with or without quantization is similar. This is because MNIST \citep{lecun1998gradient} is simpler and less vulnerable to overfitting. On the other hand, the generalization gap with weight quantized is much smaller than without it in FashionMNIST \citep{xiao2017online} dataset, which matches our prediction.

\section{Discussion}
\label{sec:diss}
In this paper, we propose a quasi neural network to approximate the binary weight neural network. The parameter space of quasi neural network is continuous, and its gradient can be approximated using the BinaryConnect algorithm. 
We study the expressive power of the binary weight neural network by studying the RKHS of its NTK and showed its similarity with the Gaussian kernel. 
We empirically verify that quantizing the weights can reduce the generalization gap, similar to Gaussian Kernel versus the Laplacian kernel. 
This result can be easily generalized to a neural network with other weight quantization methods, i.e. using more bits. Yet there are several questions to be answered by future work:
\begin{enumerate}
    \item In this work, we only quantize the weights, while much empirical work to quantize both the weights and the activations has been done. Can we use a similar technique to study the expressive power of that neural network?
    \item We study the NTK of a two-layer neural network with one additional linear liner in front of it, and only the weights in the first layer are quantized. It remains to be answered whether a multi-layer neural network allow similar approximation, and whether using more layers can increase its expressive power.
\end{enumerate}

\section{Acknowledgement}
The authors would like to thank Chunfeng Cui for helpful discussion about the idea of quasi neural network and effort in proofreading the paper, as well as Yao Xuan for helpful discussion about kernel and RKHS.

\bibliography{ref}
\bibliographystyle{abbrvnat}

%%%%%%%%%%%%%%%%%%%%%%%%%%%%%%%%%%%%%%%%%%%%%%%%%%%%%%%%%%%%
\clearpage
\newpage
\appendix
% \renewcommand{\thesection}{\Alph{section}}
% \setcounter{section}{0}
% \renewcommand*{\theHsection}{chX.\the\value{section}}
% %\renewcommand{\thechapter}{A}
% \onecolumn
\section{BinaryConnect Algorithm}
\label{sec:binaryconnect}
In this section, we briefly review the BinaryConnect algorithm \citep{courbariaux2015binaryconnect}, which is the key algorithm we are studying in this paper.

\begin{algorithm}[H]
    \caption{Training binary weight neural network with BinaryConnect algorithm and SGD.}
    \KwData{Training data $\{\vect x_i, z_i\}$, (Randomly) initialized model parameters $\theta^{(0)}$,learning rate $\eta$.}
    \For{$t$ in $[0 \dots T]$} {
        Select a minibatch $\{\vect x_b, z_b\}$ from training data\;
        \textbf{Quantization: } $w^{(t)} \leftarrow Quantize(\theta^{(t)})$\;
        \textbf{Forward propagation:} compute $\{y_b\}$ using $\{\vect x_b\}$ and $w^{(t)}$\;
        \textbf{Backward propagation:} compute $\frac{\partial \loss(y_b, z_b)}{\partial w^{(t)}}$ using $\{y_b\}, \{z_b\}$ and $w^{(t)}$\;
        \textbf{accumulate gradients:} $\theta^{(t+1)} \leftarrow \theta^{(t)} + \eta \frac{\partial \loss(y_b, z_b)}{\partial w^{(t)}}$.
    }
\end{algorithm}

\section{Gaussian approximation in quantized neural network}

\subsection{Proof of Proposition \ref{prop:asconv}}
For the first statement, observe that fixing $k, k'$ and taking $i$ as the variable, $w_{0, ki}w_{0, k'i}$ are independent from each other. Furthermore, $\E[w_{0, ki}w_{0, k'i}] = \delta_{k, k'}$. From the strong law of large number (SLLN), the first statement is proved. The third statement can be proved similarly, observing that $\E[w_{0, ki}w_{0, k'i}\theta_{1, ij}^2] = \delta_{k, k'}\Var[\theta]$.

To prove the second statement, because geometric mean is no larger than cubic mean, 
$$
|w_{0, ki}| |w_{0, k'i}| |w_{0, k'i}| \leq \frac{1}{3}(|w_{0, ki}|^3+ |w_{0, k'i}|^3+ |w_{0, k'i}|^3),
$$
Since $w_{0, ki} \sim \cN(0, 1)$, the expectation of the right hand side equals $\sqrt{\frac{8}{\pi}}$. We apply SLLN again to finish the proof.

\subsection{Proof of Lemma \ref{lemma:linear}}
\label{sec:linearproof}
% To prove that $y_{1, j}$ converge to Gaussian distribution, We only need to prove that Lyapunov's condition is satisfied with high probability (over random initialization of $\Theta$):
% \begin{eqal*}
%     \lim_{d_1 \rightarrow \infty} \frac{1}{d_1^{3/2}} \sum_{i=1}^{d_1} \E\left[|(w_{1, ij}-\theta_{1, ij})^3 x_{1, i}^3|\middle| \Theta \right]=0
% \end{eqal*}
% Note that $\theta_{1, ij}$ and $x_{1, i}$ are independent. First remind that $|w_{1, ij}-\theta_{1, ij}| \leq 2$, so
% \begin{eqal*}
%     \E\left[|(w_{1, ij}-\theta_{1, ij})^3 x_{1, i}^3|\middle| \Theta \right] 
%     \leq 8|x_{1, i}^3|, \forall i, j
% \end{eqal*}
% Under the law of large number,
% \begin{eqal*}
%     \lim_{d_1 \rightarrow \infty} \frac{1}{d_1} \sum_{i=1}^{d_1} |x_{1, i}^3| = \E[|x_{1, i}^3|] < \infty
% \end{eqal*}
% almost surely, so on the limit $d_1 \rightarrow \infty$, Lyapunov's condition is satisfied almost surely.

We first compute the conditioned mean and variance $\nu_{1, j}$ and $\varsigma_{1, j}$. Notice that conditioned on $\Theta$, $x_{1}$ is deterministic, 
\begin{eqal*}
    \nu_{1,j} &= \E_{w_1}\left[y_{1,j} \middle| \Theta\right]\\
    &= \sqrt{\frac{c}{d_{1}}} \sum_{i=1}^{d_{1}} \E_{w_1}[w_{1,ij}] x_{1,i} + \beta b_{1,j}\\
    &= \sqrt{\frac{c}{d_{1}}} \sum_{i=1}^{d_{1}}\theta_{1, ij} x_{1,i} + \beta b_{1,j}\\
    %-----------------------------
    \varsigma_{1, j}
    &=\Var_{w_1}\left[y_{1,j} \middle| \Theta\right]\\
    &= \E_{w_1}\left[y_{1,j}^2 \middle| \Theta \right] - \E_{w_1}\left[y_{1,j} \middle| \Theta\right]^2\\
    &= \frac{c}{d_1} \sum_{i=1}^{d_1}\sum_{i'=1}^{d_{1}} \E_{w_1}\left[w_{1,ij} w_{1,i'j} \middle| \Theta\right] x_{1,i}x_{1,i'}
    + 2\beta b_{1,j} \sqrt\frac{c}{d_1}\sum_{i=1}^{d_{1}}\E\left[w_{1,ij} \middle| \Theta\right] x_{1,i} \\
    &\quad- \frac{c}{d_1} \sum_{i=1}^{d_1}\sum_{i'=1}^{d_{1}} \theta_{1,ij} \theta_{1,i'j}  x_{1,i}x_{1,i'} 
    - 2\beta b_{1,j} \sqrt\frac{c}{d_1}\sum_{i=1}^{d_{1}}\theta_{1,ij} x_{1,i}\\
    &= \frac{c}{d_{1}} \sum_{i=1}^{d_{1}} \sum_{i'=1}^{d_{1}} (\E[w_{1,ij}w_{1,i'j}|\Theta]-\theta_{1,ij}\theta_{1,i'j}) x_{1,i}^2\\
    % &=\frac{c}{d_{1}} \sum_{i=1}^{d_{1}}\sum_{i'=1}^{d_{1}} \E\left[w_{1, ij} \middle| \Theta\right]\E\left[w_{1,i'j} \middle| \Theta\right] x_{1,i}x_{1,i'}
    % -\beta b_j \sum_{i=1}^{d_1}\E\left[w_{1,ij} \middle| \Theta\right] x_{1, i} \\
    % &= \frac{c}{d_{1}}\sum_{i=1}^{d_{1}} \left(\E\left[w_{1,ij}^2 \middle| \Theta\right] - \E\left[w_{1,ij}\middle| \Theta\right]^2 \right)x_{1,i}^2 \\
    &= \frac{c}{d_{1}} \sum_{i=1}^{d_{1}}(\E[w_{1,ij}^2|\Theta]-\theta_{1,ij}^2) x_{1,i}^2\\
    &= \frac{c}{d_{1}} \sum_{i=1}^{d_{1}}(1-\theta_{1,ij}^2) x_{1,i}^2.
    \end{eqal*}
    %here we make use of the fact that $(w_{\ell,j})^2=1$.
The second line is because $\E_{w_1}[w_{1,ij}w_{1,i'j}|\Theta]=\E_{w_1}[w_{1,ij}|\Theta]\E[w_{1,i'j}|\Theta]=\theta_{1,ij}\theta_{1,i'j}$ when $i\neq i'$. 

Next, we need to prove $y_{1, i}$ converge to Gaussian distribution conditioned on $\Theta \in \cG$ by verifying Lyapunov's condition. Note that for any $j \in [d_2]$,
\begin{equation*}
    y_{1, j} = \sqrt{\frac{c}{d_1}}\sum_{i=1}^{d_1} w_{1, ij}x_{1, i} + b_{1, j}
\end{equation*}
Define $X_i = w_{1, ij}x_{1, i}$. As mentioned above, its mean and variance (conditioned on $\Theta$) is 
\begin{eqal*}
    \E_{w_1}[X_i|\Theta] &= \theta_{1, ij} x_{1, i},&
    \Var_{w_1}[X_i|\Theta] &=  \E_{w_1}[X_i^2|\Theta] -  \E_{w_1}[X_i|\Theta]^2 = (1-\theta_{1, ij}^2) x_{1, i}^2
\end{eqal*} 
Since $\Theta \in \cG$, $\forall j \in [d_2]$ for some finite $d_2$,
\begin{eqal}
    \label{eq:condvar}
    \lim_{d_1 \rightarrow \infty}\frac{1}{d_1} \sum_{i=1}^{d_1}\Var_{w_1}[X_i|\Theta] 
    &= \lim_{d_1 \rightarrow \infty}\frac{1}{d_1} \sum_{i=1}^{d_1} (1-\theta_{1, ij}^2) x_{1, i}^2\\
    &= \lim_{d_1 \rightarrow \infty}\frac{1}{dd_1} \sum_{i=1}^{d_1} (1-\theta_{1, ij}^2) \Big(\sum_{k=1}^d w_{0, ki} x_k\Big)^2\\
    &= \lim_{d_1 \rightarrow \infty}\frac{1}{dd_1} \sum_{i=1}^{d_1}\sum_{k, k'=1}^d (1-\theta_{1, ij}^2) w_{0, ki} w_{0, k'i} x_k x_{k'}\\
    &= \lim_{d_1 \rightarrow \infty}\sum_{k, k'=1}^d (1-\Var[\theta]) \delta_{k, k'} x_k x_{k'}
    =  1-\Var[\theta] 
\end{eqal}
The fourth equality comes from the definition of $\cG$, and the fifth equality is because $\|x\|_2 = 1$.
The third order absolute momentum can be bounded by 
\begin{eqal}
    \label{eq:Lyapunov}
    &\lim_{d_1 \rightarrow \infty}\frac{1}{d_1} \sum_{i=1}^{d_1}\E_{w_1}\big[|X_i - \E_{w_1}[X_i|\Theta]|^3\big|\Theta\big]\\
    &= \lim_{d_1 \rightarrow \infty}\frac{1}{d_1} \sum_{i=1}^{d_1}\E_{w_1}\big[|(w_{1, ij}-\theta_{1, ij})x_{1, i}|^3\big|\Theta\big]\\
    & \leq \lim_{d_1 \rightarrow \infty}\frac{1}{d_1} \sum_{i=1}^{d_1}\E_{w_1}\big[8|x_{1, i}|^3\big|\Theta\big]\\
    &= \lim_{d_1 \rightarrow \infty}\frac{8}{d_1} \sum_{i=1}^{d_1}\Big|\sum_{k=1}^d w_{0, ki} x_k\Big|^3\\
    & \leq \lim_{d_1 \rightarrow \infty}\frac{8}{d_1} \sum_{i=1}^{d_1}\Bigg(\sum_{k=1}^d| w_{0, ki} x_k|\Bigg)^3\\
    & = \lim_{d_1 \rightarrow \infty}\frac{8}{d_1} \sum_{i=1}^{d_1}\sum_{k, k', k''=1}^d| w_{0, ki}w_{0, k'i}w_{0, k''i}||x_kx_{k'}x_{k''}|\\
    &\leq 8 \sqrt{\frac{8}{\pi}}d^3 
\end{eqal}

The last inequality comes from the definition of ``Good Initialization'': for all $\Theta \in \cG$,  $$\lim_{d_1 \rightarrow \infty}\frac{1}{d_1}\sum_{i=1}^{d_1} w_{0, ki} w_{0, k'i} w_{0, k'i} \leq \sqrt{\frac{8}{\pi}},$$ and because $\|x\|_2=1, |x_k| \leq 1$ for all $k \in [d]$. Note that using the strong law of large number, one can prove that the third order absolute momentum converges almost surely to a constant that doesn't depend on $d$. On the other hand, we are proving a upper bound for all $\Theta \in \cG$ which is stronger than almost surely converge.

\begin{eqal*}
    &\lim_{d_1 \rightarrow\infty} \frac{\sum_{i=1}^{d_1} \E_{w_1}[|X_i - \E_{w_1}[X_i|\Theta]|^3|\Theta]}{(\sum_{i=1}^{d_1} \Var_{w_1}[X_i|\Theta])^{3/2}}\\
    &= \lim_{d_1 \rightarrow\infty} \frac{\frac{1}{d_1}\sum_{i=1}^{d_1} \E_{w_1}[|X_i - \E_{w_1}[X_i|\Theta]|^3|\Theta]}{{d_1^{1/2}}(\frac{1}{d_1}\sum_{i=1}^{d_1} \Var_{w_1}[X_i|\Theta])^{3/2}}\\
    &= \frac{ \lim_{d_1 \rightarrow\infty}\frac{1}{d_1}\sum_{i=1}^{d_1} \E_{w_1}[|X_i - \E_{w_1}[X_i|\Theta]|^3|\Theta]}{ \lim_{d_1 \rightarrow\infty}{d_1^{1/2}}(\frac{1}{d_1}\sum_{i=1}^{d_1} \Var_{w_1}[X_i|\Theta])^{3/2}}\\
    & \leq \frac{8 \sqrt{\frac{8}{\pi}}d^3 }{\lim_{d_1 \rightarrow\infty} d_1^{1/2}(1-\Var[\theta])}\\
    &= 0
\end{eqal*}
This proves that Lyapunov's condition for all ``Good Initialization'', so conditioned on $\Theta \in \cG$, $y_{1, j}$ converges to Gaussian distribution.

\subsection{Proof of Lemma \ref{lemma:relu}}
% We introduce an auxiliary variable $y \sim \cN(\nu_{2, j}, \varsigma_{2, j}^2)$.
% Using continuous mapping theorem, because $\act$ function is continuous and $y_{2, j} \rightarrow \cN(\nu_{2, j}, \varsigma_{2, j}^2)$ in distribution, $x_{3, j} = \act(y_{2, j}) \rightarrow \act(y)$ in distribution. 
%Let $y_{\ell,i} \sim \cN(\nu_{\ell,i}, \varsigma_{\ell,i})$, $x_{\ell+1,i} = ReLU(y_{\ell,i}) = \max(y_{\ell,i}, 0)$.
To compute $\E_{w_1}[x_{2,j}|\Theta]$ and $\Var_{w_1}[x_{2,j}|\Theta]$, we first compute $\E_{w_1}[\act(y_{1, j})|\Theta]$ and $\E_{w_1}[\act(y_{1, j})^2|\Theta]$. Recall $\act(x) = x \mathbf{1}(x \geq 0)$,
\begin{eqal*}
    \E_{w_1}[\act(y_{1, j})|\Theta] 
    %&= \int_{-\infty}^{\infty} \max(\hat x, 0) \frac{1}{\sqrt{2\pi}\sigma} \exp\left(-\half\frac{(\hat x-\mu)^2}{\sigma^2}\right)d\hat x\\
    &= \int_{0}^{\infty}  x\frac{1}{\sqrt{2\pi}\varsigma_{1, j}} \exp\left(-\half\frac{( x-\nu_{1,j})^2}{\varsigma_{1, j}^2}\right)d x\\
    &= \int_{-\frac{\nu_{1,j}}{\varsigma_{1, j}}}^\infty (\varsigma_{1, j} y + \nu_{1,j})\frac{1}{\sqrt{2\pi}}\exp\left(-\half y^2\right)d y\\
    &= \varsigma_{1, j} \int_{-\frac{\nu_{1,j}}{\varsigma_{1, j}}}^\infty \frac{1}{\sqrt{2\pi}}y\exp\left(-\half y^2\right)d y
    + \mu_{\ell,i}\int_{-\frac{\nu_{1,j}}{\varsigma_{1, j}}}^\infty \frac{1}{\sqrt{2\pi}}\exp\left(-\half y^2\right)d y,
\end{eqal*}
\begin{eqal*}
    %---------------------------------------
    \E_{w_1}[\act(y_{1, j})^2|\Theta] 
    &= \int_{0}^{\infty} x^2\frac{1}{\sqrt{2\pi}\varsigma_{1, j}} \exp\left(-\half\frac{(x-\nu_{1,j})^2}{\varsigma_{1, j}^2}\right)d x\\
    &= \int_{-\frac{\nu_{1,j}}{\varsigma_{1, j}}}^\infty (\varsigma_{1, j} y+\nu_{1,j})^2\frac{1}{\sqrt{2\pi}}\exp\left(-\half y^2\right)d y\\
    &= \varsigma_{1, j}^2 \int_{-\frac{\nu_{1,j}}{\varsigma_{1, j}}}^\infty  y^2 \frac{1}{\sqrt{2\pi}}\exp\left(-\half\hat y^2\right)d y + 2\varsigma_{1, j}\nu_{1,j}\int_{-\frac{\nu_{1,j}}{\varsigma_{1, j}}}^\infty  y \frac{1}{\sqrt{2\pi}}\exp\left(-\half y^2\right)d y\\
    &\quad + \nu_{1,j}^2\int_{-\frac{\nu_{1,j}}{\varsigma_{1, j}}}^\infty \frac{1}{\sqrt{2\pi}}\exp\left(-\half y^2\right)d y.
\end{eqal*}
We only need to compute 
$$
\int_{-\frac{\nu_{1,j}}{\varsigma_{1, j}}}^\infty \frac{1}{\sqrt{2\pi}}y^\alpha\exp\left(-\half y^2\right)d y.
$$
For $\alpha=0, 1, 2$. When $\alpha=0$, this is integration to Gaussian function, and it is known that there's no analytically function to express that.
For sack of simplicity, define it as $s_{1,j}$ 
\begin{equation*}
\begin{aligned}
    s_{1,j}
    &= \int_{\frac{\nu_{1,j}}{\varsigma_{1, j}}}^\infty \frac{1}{\sqrt{2\pi}}\exp\left(-\half y^2\right)d y 
    := \Phi(\frac{\nu_{1,j}}{\varsigma_{1, j}}).
\end{aligned}
%\label{eq:s}
\end{equation*}
When $\alpha=1$, this integration can be simply solved by change of the variable and we denote it as $g_{1, j}$:
\begin{equation*}
\begin{aligned}
    g_{1,j}
    &= \int_{-\frac{\nu_{1,j}}{\varsigma_{1, j}}}^\infty  y \frac{1}{\sqrt{2\pi}}\exp\left(-\half y^2\right)d y  = \sqrt{\frac{1}{2\pi}}\exp\left(-\half \left(\frac{\nu_{1,j}}{\varsigma_{1, j}}\right)^2 \right).
\end{aligned}
%\label{eq:g}
\end{equation*}
When $\alpha=2$, we can do integration by parts and express it using $s_{1,j}$ and $g_{1, j}$:
\begin{equation*}
\begin{aligned}
    &\int_{-\frac{\nu_{1,j}}{\varsigma_{1, j}}}^\infty  y^2 \frac{1}{\sqrt{2\pi}}\exp\left(-\half y^2\right)d y \\
    &\quad = -\int_{-\frac{\nu_{1,j}}{\varsigma_{1, j}}}^\infty  y \frac{1}{\sqrt{2\pi}}\exp\left(-\half y^2\right)d \frac{1}{2}y^2\\
    &\quad = \int_{-\frac{\nu_{1,j}}{\varsigma_{1, j}}}^\infty \frac{1}{\sqrt{2\pi}}\exp\left(-\half y^2\right)d y 
    - \frac{\nu_{1,j}}{\varsigma_{1, j}}\frac{1}{\sqrt{2\pi}}\exp\left(-\half \left(\frac{\nu_{1,j}}{\varsigma_{1, j}}\right)^2 \right)\\
    &\quad = s_{1,j} -\frac{\nu_{1,j}}{\varsigma_{1, j}} g_{1,j}.
\end{aligned}
%\label{eq:esq}
\end{equation*}

Using the definition of mean and variance,
\begin{equation*}
    \mu_{2,j} = \E_{w_1}[\act(y_{1, j})|\Theta], \sigma_{2,i}^2 = \E_{w_1}[\act(y_{1, j})^2|\Theta]-\E_{w_1}[\act(y_{1, j})|\Theta]^2,
\end{equation*}
we can come to the result.

\subsection{Proof of Theorem \ref{thm:var}}
\label{sec:prfvar}
In this part, we take $\Theta = \{w_0, \theta_1, w_2, b_0, b_1, b_2\}$ as the random variables and conditioned mean and variance derived above $\mu_1, \sigma_1, \nu_1, \varsigma_1$ as functions to $\Theta$.
From Eq. \eqref{eq:linear}, 
%using central limit theorem, as $d_1\rightarrow \infty$ $v_{1, j}$ converge to joint Gaussian distributed for any finite set of $\{j\}$. 
as $d_1 \rightarrow \infty$, $v_{1}$ tend to iid Gaussian processes, and there covariance converges almost surely to its expectation. We then focus on computing the expectation of covariance.
For any $j \neq j'$, we take the expectation over random initialization of $\Theta$:
\begin{eqal}
    &\E_\Theta[\nu_{1, j}\nu_{1, j'}] \\
    &= \E_\Theta\left[\frac{c}{d_1} \sum_{i=1}^{d_1}\sum_{i'=1}^{d_1}\theta_{1, ij}\theta_{1, i'j'} x_{1, i}x_{1, i'} 
    + \beta\sqrt{\frac{c}{d_1}}\sum_{i=1}^{d_1}(\theta_{1, ij}x_{1, i}b_j + \theta_{1, ij'} x_{1, i}b_{j'}) + \beta^2 b_j b_{j'} \right]\\
    %---------------------------------
    &= \frac{c}{d_1} \sum_{i=1}^{d_1}\sum_{i'=1}^{d_1}\E_\Theta[\theta_{1, ij}]\E_\Theta[\theta_{1, i'j'}] \E_\Theta[x_{1, i}x_{1, i'}] 
    + \beta^2 \E_\Theta[b_j b_{j'}] \\
    &\quad + \sqrt{\frac{c}{d_1}}\beta\sum_{i=1}^{d_1}(\E_\Theta[\theta_{1, ij}]\E_\Theta[x_{1, i}] \E_\Theta[b_j] + \E_\Theta[\theta_{1, ij'}]\E_\Theta[x_{1, i}] \E_\Theta[b_{j'}]) \\
    &= 0
\end{eqal}
which indicates that they are independent.

Computation of $\varsigma_{1, j}$ was already finished implicitly in Section \ref{sec:linearproof}. We write it explicitly here. From \eqref{eq:linear}, on the limit $d_1 \rightarrow \infty$,
\begin{eqal*}
    \varsigma_{1, j}^2 &= \frac{c}{d_1} \sum_{i=1}^{d_1} (1-\theta_{1, ij}^2) x_{1, i}^2 \\
    &= \frac{c}{dd_1} \sum_{i=1}^{d_1} (1-\theta_{1, ij}^2) \sum_{k, k'=1}^d w_{0, ki}w_{0, k'i}x_{k}x_{k'}\\
    &= \frac{c}{d}\sum_{k, k'=1}^d x_{k}x_{k'} \frac{1}{d_1}\sum_{i=1}^{d_1}w_{0, ki}w_{0, k'i}(1-\theta_{1, ij}^2)\\
    &= \frac{c}{d}\sum_{k, k'=1}^d x_{k}x_{k'} \delta_{k, k'}(1-\Var[\theta])\\
    &= \frac{c}{d}(1-\Var[\theta])\|x\|_2^2\\
    &= \frac{c}{d}(1-\Var[\theta])
\end{eqal*}
The fourth line comes from the definition of $\cG$.

% Furthermore, because $\theta_{1, ij}$ and $x_{\ell, i}$ are iid. distributed, according to the low of large number,
% \begin{equation*}
%     \varsigma_1^2 =\frac{c}{d_1} \sum_{i=1}^{d_1} (1-\theta_{1, ij}^2) x_{1, i}^2 
% \end{equation*}
% as $d_1\rightarrow \infty$, it converges to the expectation 
% \begin{eqal*}
%     \E[c(1-\theta_{1, ij}^2) x_{1, i}^2 ]&=
%     c(1-\E[\theta_{1,ij}^2])\E[{\mu_{1, i}}^2]\\
%     &= \frac{c}{d}(1-\Var[\theta])
% \end{eqal*}
% almost surely. The second equation comes from the fact
% \begin{eqal*}
%     \E[{\mu_{1, i}}^2]
%     &= \E[\frac{1}{d} \sum_{k=1}^d w_{1, ki}x_i]^2\\
%     &= \frac{1}{d}\sum_{k=1}^d\sum_{k'=1}^d \E[w_{1, ki} w_{1, k'i}] \E[x_kx_k']\\
%     &= \frac{1}{d}\sum_{k=1}^d\sum_{k'=1}^d \delta_{k, k'} \E[x_kx_k']\\
%     &= \frac{1}{d}\sum_{k=1}^d\E[x_kx_k']\\
%     &= \frac{1}{d}
% \end{eqal*}

%From Slutsky's theorem, $\frac{\nu_{\ell, j}}{\varsigma_{\ell, j}}$ converges to $\frac{\nu_{\ell, j}}{\bar\varsigma_{\ell}}$ in distribution.

%\red{Question: can we replace $\varsigma_{\ell,i}$ with its expectation in the following layer? What's the error introduced by that?}

\subsection{Derivative of activation function in quasi neural network}
\label{sec:phip}
Let 
\begin{eqal*}
\tilde\act(x) =  \tilde\varsigma_1 \phi\left( \frac{x}{\tilde\varsigma} \right) + x \Phi\left(\frac{x}{\tilde\varsigma}\right),
\end{eqal*}
Its derivative is 
\begin{eqal*}
\tilde\act'(x) &= 
\varphi'\left( \frac{x}{\tilde\varsigma} \right) +  \Phi\left(\frac{x}{\tilde\varsigma}\right) + \frac{x}{\tilde\varsigma}\Phi'\left(\frac{x}{\tilde\varsigma}\right)\\
&= -\frac{x}{\tilde\varsigma}\varphi\left( \frac{x}{\tilde\varsigma} \right) + \Phi\left(\frac{x}{\tilde\varsigma}\right) + \frac{x}{\tilde\varsigma}\varphi\left(\frac{x}{\tilde\varsigma}\right)\\
&= \Phi\left(\frac{x}{\tilde\varsigma}\right)
\end{eqal*}
The second line is because 
\begin{eqal*}
\Phi'(x) &= \varphi(x),\\
\varphi'(x) &= \frac{d}{dx} \sqrt{\frac{1}{2\pi}}\exp\left(-\half x^2 \right)\\
&=-x\sqrt{\frac{1}{2\pi}}\exp\left(-\half x^2 \right)\\
&=-x\varphi(x).
\end{eqal*}

\subsection{Proof of Theorem \ref{thm:nngrad}}

% \begin{lemma} 
% \label{lemma:nngrad} The gradient with respect to weights and bias in equation (\ref{eq:2nn}) can be computed by:
% \begin{equation}
% \begin{aligned}
%     \frac{\partial \E[y]}{\partial w_{2,ij}}
%     &= \sqrt{\frac{3}{d_2d_3}} w_{3,j}\tilde x_i\tilde\act'(\nu_j) +O\left(\frac{1}{d_2\sqrt{d_3}}\right),\\ %\tilde\sigma'(\mu_j)(1+O(\frac{1}{\sqrt{n}}))\\ % + O(\frac{1}{\sqrt{m}d})\\
%     \frac{\partial \E[y]}{\partial b_{2,j}}
%     &= \beta\sqrt{\frac{1}{d_3}} w_{3,j} \tilde\act'(\mu_j),% + O(\frac{1}{\sqrt{md}})\\
%     \quad
%     \frac{\partial \E[y]}{\partial w_{3,j}} 
%     = \sqrt{\frac{1}{d_3}} \tilde\act(\nu_{2,j}).
% \end{aligned}
% \label{eq:dedp}
% \end{equation}
% \end{lemma}

To make the proof more general, we make $\varsigma_{1, j}$ a parameter of the activation function in quasi neural network as $\tilde\act(\cdot; \varsigma_{1, j})$.
To get the derivative with respect to $\theta_{1, ij}$, we first get the derivative with respect to $\nu_{1,j}$. 
\begin{eqal*}
    \frac{\partial \bar y}{\partial \nu_{1, j}}
    &= \frac{\partial \bar y}{\partial \mu_{2, j}}
    \frac{\partial \mu_{2, j}}{\partial \nu_{1, j}}
    =\sqrt\frac{1}{d_2} w_{2, j} \tilde\act'(\nu_{1, j}; \varsigma_{1, j})
\end{eqal*}
%Note $\tilde\act'(x)=\Phi(x), \phi'(x) = -x\phi(x), \Phi'(x) = \phi(x)$ as defined in \eqref{eq:phi},
then apply chain rule:
\begin{align}
    \frac{\partial \bar y}{\partial w_{2,j}}
    &= \sqrt{\frac{c}{d_2}} \mu_{2,j},
    %= \sqrt{\frac{c}{d_3}} \tilde\act(\nu_{1, j}),\\
    \label{eq:dedppva}\\
    %-------------------
    \frac{\partial \bar y}{\partial b_{1,j}}
    &= \frac{\partial \bar y}{\partial \nu_{1,j}}\frac{\partial \nu_{1,j}}{\partial b_{1,j}} 
    = \sqrt{\frac{c}{d_2}} \beta w_{2,j} \tilde\act'(\nu_{1, j}; \varsigma_{1, j}),
    \label{eq:dedppvb}\\
    %-------------------
    \frac{\partial \bar y}{\partial \theta_{1,ij}}
    &= \frac{\partial\bar y}{\partial \nu_{1,j}}\frac{\partial \nu_{1,j}}{\partial \theta_{1, ij}} 
    = \sqrt{\frac{c}{d_1d_2}} w_{2,j}x_{1,i} \tilde\act'(\nu_{1,j}; \varsigma_{1, j}).
    \label{eq:dedppv}
\end{align}

On the other hand, 
%for any set of fixed $w_{2, ij} \in \{-1, 1\}$,
let's first write down the gradient with respect to weights $w_{ij}$ in quantized neural network and take their expectation conditioned on $\Theta$:
\begin{align}
    \E_{w_1}\left[\frac{\partial y}{\partial w_{2,j}}\middle|\Theta\right]
    &= \sqrt{\frac{c}{d_2}}\E_{w_1}[x_{2,j}|\Theta], 
    \label{eq:edppva}\\
    %-------------------
    \E_{w_1}\left[\frac{ \partial y}{\partial b_{2,j}}\middle|\Theta\right]
    &=  \E_{w_1}\left[\frac{\partial \bar y}{\partial y_{1,j}}\frac{\partial y_{1,j}}{\partial b_{2,j}} \middle|\Theta\right]
    = \sqrt{\frac{c}{d_2}} \beta w_{2,j} \E_{w_1}[\act'(y_{1, j})|\Theta],
    \label{eq:edppvb}\\
    %------------------------
     \E_{w_1}\left[\frac{\partial y}{\partial w_{1,ij}} \middle|\Theta\right]
    &=  \E_{w_1}\left[\frac{\partial y}{\partial y_{1,j}}\frac{\partial y_{1,j}}{\partial w_{1,ij}}\middle|\Theta\right]
    = \sqrt{\frac{c}{d_1 d_2}}w_{2,j} x_{1,i} \E_{w_1}[\act'(y_{1,j})|\Theta],
    \label{eq:edydw}
\end{align}

Comparing \eqref{eq:dedppva} with \eqref{eq:edppva}, one can easily find they are equal since by definition $\mu_{2,j}=\E_{w_1}[x_{2,j}|\Theta]$. On the other hand, from \eqref{eq:quasiact}, one can tell using continuous mapping theorem that
\begin{eqal*}
    \tilde\act'(\nu_{1, j}; \varsigma_{1, j}) 
    =\Phi\left(\frac{\nu_{1, j}}{\varsigma_{1, j}}\right) = P[y_{1, j} \geq 0] = \E_{w_1}[\act'(y_{1, j})|\Theta],
\end{eqal*}
which shows \eqref{eq:dedppvb} equals \eqref{eq:edppvb} and \eqref{eq:dedppv} equals \eqref{eq:edydw}.

% Finally, by definition, for $x\sim \cN(\mu, \epsilon^2)$,
% \begin{eqal*}
%     \E[\act'(x)] = P(x \geq 0) = \tilde\act(\mu).
% \end{eqal*}
% As $y_{1, j}$ converges to Gaussian distribution  $\cN(\mu_{1, j}, \varsigma_{1, j}^2)$, 
%the proof is complete using continuous mapping theorem.
%Note that $\E[x_{3, j}|\Theta]=\mu_{3, j}$, $\E[\act'(x_{2, j})|\Theta]=P[x_{2, j} \geq 0 | \Theta]\rightarrow \Phi(\nu_{2, j}/\tilde\varsigma_{2} )=\tilde\act'(\nu_{2, j})$ as $d_1, d_2, d_3 \rightarrow \infty$, which finishes the proof.

\subsection{Proof of Theorem \ref{thm:lossgrad}}
% \begin{lemma}
Observe that conditioned on $\Theta$, $y_{1, j}$ depends only on $\{w_{1, ij},  i \in [d_1]\}$, and that $\{w_{1, ij},  i \in [d_1]\} \cap \{w_{1, ij'},  i \in [d_1]\} = \emptyset$ for $j \neq j'$. Because of that,
$y_{1, j}$ are independent of each other. Similarly, $x_{2, j}$ are independent of each other conditioned on $\Theta$. 
% \end{lemma}
For MSE loss,
\begin{eqal*}
    \loss(y) = \half (y-z)^2, 
    \frac{dl(y)}{dy} = y - z,
\end{eqal*}
% where $z$ is the ground-truth label. 
% We have 
% $$
% \frac{\partial loss(\bar y)}{\partial \bar y} = \bar y - z,  \E\left[\frac{\partial loss(y)}{\partial y}\right] = \E[y] - z.
% $$
According to the chain rule
\begin{eqal}
    \frac{\partial \loss(\bar y)}{\partial \theta} = 
    \frac{\partial \loss(\bar y)}{\partial \bar y}
    \frac{\partial\bar y}{\partial \theta}  = (\bar y-z) \frac{\partial\bar y}{\partial \theta},
    \label{eq:dledt}
\end{eqal}
for any $\theta \in \{\theta_{1, ij}, b_{1, j}, w_{2, j}\}$, which leads to
\begin{align}
    \frac{\partial \loss(\bar y)}{\partial w_{2,j}}
    &= \sqrt{\frac{c}{d_2}} (\bar y - z)\mu_{2,j},
    %= \sqrt{\frac{c}{d_3}} \tilde\act(\nu_{1, j}),\\
    \label{eq:dldppva}\\
    %-------------------
    \frac{\partial \loss(\bar y)}{\partial b_{1,j}}
    &= \frac{\partial \bar y}{\partial \nu_{1,j}}\frac{\partial \nu_{1,j}}{\partial b_{1,j}} 
    = \sqrt{\frac{c}{d_2}} \beta w_{2,j} (\bar y - z)\tilde\act'(\nu_{1, j}; \varsigma_{1, j}),
    \label{eq:dldppvb}\\
    %-------------------
    \frac{\partial \loss(\bar y)}{\partial \theta_{1,ij}}
    &= \frac{\partial\bar y}{\partial \nu_{1,j}}\frac{\partial \nu_{1,j}}{\partial \theta_{1, ij}} 
    = \sqrt{\frac{c}{d_1d_2}} w_{2,j}x_{1,i} (\bar y - z)\tilde\act'(\nu_{1,j}; \varsigma_{1, j}).
    \label{eq:dldppv}
\end{align}

On the other hand, in the original binary weight neural network,
\begin{align}
    \E_{w_1}\left[ \frac{\partial \loss(y)}{\partial w_{2,j}} \middle|\Theta\right]
    &=\sqrt{\frac{c}{d_2}}\E_{w_1}\left[ (y-z)x_{2,j}|\Theta \right], \label{eq:edldw2}\\
    %---------------------------
    \E_{w_1}\left[\frac{ \partial \loss(y)}{\partial b_{1,j}}\middle|\Theta\right]
    &= \sqrt{\frac{c}{d_2}} \beta w_{2,j} \E_{w_1}[(y-z)\act'(y_{1, j})|\Theta], \label{eq:edldb1}\\
    %------------------------
    \E_{w_1}\left[\frac{\partial \loss(y)}{\partial w_{1,ij}} \middle|\Theta\right]
    &= \sqrt{\frac{c}{d_1 d_2}}w_{2,j} x_{1,i} \E_{w_1}[(y-z)\act'(y_{1,j})|\Theta],
    \label{eq:edldw1}
\end{align}
Note that $y$ is not independent form $x_{2, j}$ or $\act'(y_{1,j})$, which is the main challenge of the proof. To deal with this problem, we bound the difference between \eqref{eq:dldppva}-\eqref{eq:dldppv} and \eqref{eq:edldw2}-\eqref{eq:edldw1}, which requires bounding their covariance.
\begin{eqal}
    \E_{w_1}[x_{2,j}y|\Theta] &= \sqrt\frac{c}{d_2}\E_{w_1}\left[ x_{2,j}\sum_{j=1}^{d_2} w_{2,j} x_{2, j}\middle|\Theta\right]\\
    &= \sqrt\frac{c}{d_2}\left(\E_{w_1}\left[ x_{2,j}^2 w_{2,j}\middle|\Theta\right]+\sum_{j' \neq j}\E_{w_1}\left[ x_{2,j} x_{2, j'} w_{2,j'}\middle|\Theta\right] \right)\\
    &= \sqrt\frac{c}{d_2}\left( (\mu_{2, j}^2 + \sigma_{2,j}^2) w_{2,j}+\sum_{j' \neq j} \mu_{2,j} \mu_{2, j'} w_{2,j'} \right)\\
    &= \sqrt\frac{c}{d_2}\left( \sigma_{2,j}^2 w_{2,j}+\sum_{j'=1}^{d_2} \mu_{2,j} \mu_{2, j'} w_{2,j'} \right)\\
\end{eqal}
The second term equals $\E_{w_1}[x_{2,j}|\Theta]\E_{w_1}[y|\Theta]$ and the first term converges to 0 when $d_2 \rightarrow \infty$. Taking it into \eqref{eq:dldppva} and \eqref{eq:edldw2}, we can see that these two terms are equal on the limit $d_1 \rightarrow \infty$.
%$\E\left[ \frac{\partial loss(y)}{\partial w_{2,j}} \middle|\Theta\right]$ converges to $\frac{\partial loss(\bar y)}{\partial w_{2, j}}$ as $d_2 \rightarrow \infty$.

Similarly, 
\begin{eqal}
    &\E_{w_1}[\act'(y_{1,j})y|\Theta] \\
    %--------------------------------------
    &= \E_{w_1}\left[\sqrt\frac{c}{d_2} \act'(y_{1,j})\sum_{j=1}^{d_2} w_{2,j} \act(y_{1, j})\middle|\Theta\right]\\
    %---------------------------------------
    &= \sqrt\frac{c}{d_2}\left(\E_{w_1}\left[ \act(y_{1, j})\act'(y_{1,j}) w_{2,j}\middle|\Theta\right]+\sum_{j' \neq j}\E_{w_1}\left[ \act(y_{1, j'})\act'(y_{1,j}) w_{2,j'}\middle|\Theta\right] \right)\\
    %---------------------------------------
    &= \sqrt\frac{c}{d_2}\left( \E_{w_1}\left[ \act(y_{1, j})w_{2,j}\middle|\Theta\right] w_{2,j}
    +\sum_{j' \neq j}\E_{w_1}[\act'(y_{1,j})|\Theta]\mu_{2, j'} w_{2,j'} \right)\\
    %---------------------------------------
    &= \sqrt\frac{c}{d_2}\left( (1-\E_{w_1}[\act'(y_{1,j})|\Theta])\mu_{2,j} w_{2,j} +\sum_{j'=1}^{d_2} \E_{w_1}[\act'(y_{1,j})|\Theta] \mu_{2,j'} w_{2,j'} \right)\\
\end{eqal}
The second term equals $\E[\act'(y_{2,j})|\Theta]\E[y|\Theta]$ and the first term converges to 0 when $d_3 \rightarrow \infty$. Taking it into \eqref{eq:dldppvb}\eqref{eq:dldppv}\eqref{eq:edldb1}\eqref{eq:edldw1} finishes the proof.

\subsection{Proof of \autoref{thm:lazy1}}
\label{sec:lazyproof1}
In this part, we denote $\dot a := \frac{\partial a}{\partial t}$ for $a \in \{w_{\ell}, \theta_{\ell}, b_{\ell}\}$, and express each time-depent variable as a function of time $t$. 
We define an inner product under the distribution of training dataset 
$$
\langle \vect a, \vect b\rangle_{in} = \E_{in}[ {a}(x){b}(x)],
$$ 
and the corresponding norm $$\|\vect{a}\|_{in} = \sqrt{\langle \vect{a}, \vect{a}\rangle_{in}} = \sqrt{\E_{in}[{a}(x)^2]}.
$$ 
%if $a(x)$ is a scalar, and $\|\vect{a}\|_{in} = \sqrt{\E_{in}[\|\vect{a}(x)\|^2]}$ if $\vect{a}(x)$ is a vector. 
If $\vect{a}(x)$ is a vector, $\|\vect{a}\|_{in} := \sqrt{\E_{in}[\|\vect{a}(x)\|^2]}$.
Note this inner product and norm define a Hilbert space (not to be confused with the RKHS induced by a kernel), so by Cauchy-Schwarz inequality,
\begin{eqal*}
    |\langle \vect a, \vect b\rangle_{in}| \leq \|\vect{a}\|_{in} \|\vect{b}\|_{in}, \forall \vect a, \vect b.
\end{eqal*}

As is shown in \ref{sec:grad}, on the limit $d_1, \rightarrow \infty$, the dynamics of training this neural network using gradient descent can be written as:
\begin{eqal*}
    \dot w_{2,j}(t)
    &=\sqrt{\frac{c}{d_2}}\E_{in}[(\bar y(t) - z)\mu_{2, j}(t)] \\
    %---------------------------
    \dot b_{1,j}(t)
    &= \sqrt{\frac{c}{d_2}}\E_{in}[ \beta w_{2,j}(t) (\bar y(t) - z)\tilde \act'(\nu_{1, j}(t), \varsigma_{1, j}(t))],\\
    %------------------------
    \dot \theta_{1,ij}(t)
    &= \sqrt{\frac{c}{d_1 d_2}} \E_{in}[ w_{2,j}(t) x_{1,i} (\bar y(t) - z)\tilde\act'(\nu_{1,j}(t); \varsigma_{1, j}(t))]
\end{eqal*} 
where dot denotes the derivative with respect to $t$.
%Here we slightly abuse the notation $\tilde\phi(\cdot)$, which depends explicitly on $\varsigma_{1, j}(t)$ in this equation. This term will be eliminated later.
Note the activation function $\tilde\act(\cdot; \varsigma_{1, j}(t))$ depends on $\varsigma_{1, j}$, which makes it time dependent. One can further write down the dynamics of $\nu_{1, j}(t)$ as
\begin{eqal*}
    \dot \nu_{1, j}(t) &= \sqrt{\frac{1}{d_1}} \sum_{i=1}^{d_1} \dot \theta_{1,ij}(t)x_{1,i}(t) + \dot b_{1, j}(t)
    %&= \sqrt{\frac{c}{d_1^2d_2}} w_{2,j}(t) \sum_{i=1}^{d_1} x_{1, i} \E_{(x', z') \sim in} \Big[ x'_{1,i} (\bar y'(t) - z')\tilde\act'(\nu_{1,j}')\Big] \\
    %&\quad +  \sqrt{\frac{c}{d_2}}\E_{{x', z'}\sim in}[ \beta w_{2,j} (\bar y'(t) - z')\tilde \act'(\nu'_{1, j})].
\end{eqal*}
Rewrite these two differential equations in matrix form:
\begin{eqal*}
    \dot {\vect w}_{2}(t)
    &=\sqrt{\frac{c}{d_2}} \E_{in}[(\bar y(t) - z)\boldsymbol\mu_{2}(t)]\\
    %-----------------------------------
    \dot {\boldsymbol b}_{1}(t)
    &= \beta \sqrt{\frac{c}{d_2}}\E_{in} [(\bar y(t) - z)(\tilde \act'(\boldsymbol\nu_{1}(t)) \circ \boldsymbol w_{2}(t))],\\
    %------------------------------------
    \dot {\boldsymbol\theta_{1}}(t)
    &= \sqrt{\frac{c}{d_1 d_2}} \E_{in} \big[(\bar y(t) - z) \boldsymbol x_{1} \otimes \big(\tilde\act'(\boldsymbol\nu_{1}(t))\circ\boldsymbol w_{2}(t)\big) \big],\\
    %------------------------------------
    \dot {\boldsymbol \nu}_{1}(t) 
    %&= \sqrt{\frac{c}{d_1^2d_2}} \E_{in}[(\bar y(t) - z)  \boldsymbol x_{1} \otimes (\tilde\act'(\boldsymbol\nu_{1}(t))\circ\boldsymbol w_{2}(t)) ]  \vect x_{1} \\
    %&\quad + \beta \sqrt{\frac{c}{d_2}}\E_{in}[ (\bar y(t) - z) (\tilde \act'(\boldsymbol\nu_{1}(t)) \circ \boldsymbol w_{2}(t)) ]
    &= \sqrt{\frac{1}{d_1}} \dot{\vect\theta}_1\vect x_1 + \dot{\vect b}_1
\end{eqal*}
where $\circ$ denotes elementwise product and $\otimes$ denotes outer product. Here we slightly abuse the notation $\tilde\act(\cdot)$, which represents elementwise operation when applied to a vector.
Their norm are bounded by
\begin{eqal}
    \frac{\partial}{\partial t}\| {\boldsymbol w}_{2}(t) - {\boldsymbol w}_{2}(0)\|
    &\leq \sqrt{\frac{c}{d_2}} \E_{in}[(\bar y(t) - z)\|\vect\mu_{2}(t)\|]
    = \sqrt{\frac{c}{d_2}} \langle\bar y(t) - z, \vect\mu_{2}(t)\rangle_{in}\\
    &\leq \sqrt{\frac{c}{d_2}} \|\bar y(t) - z\|_{in} \|\boldsymbol\mu_{2}(t)\|_{in} 
    \leq \sqrt{\frac{c}{d_2}} \|\bar y(t) - z\|_{in} \|\boldsymbol\nu_{1}(t)\|_{in} 
    \label{eq:pde:norm1}
\end{eqal}
\begin{eqal}
    %-----------------------------------
    \frac{\partial}{\partial t}\| {\boldsymbol b}_{1}(t) - {\boldsymbol b}_{1}(0)\|
    &\leq \beta \sqrt{\frac{c}{d_2}}\E_{in} [(\bar y(t) - z)\| \tilde \act'(\boldsymbol\nu_{1}(t)) \circ \boldsymbol w_{2}(t)\| ]\\
    &\leq \beta \sqrt{\frac{c}{d_2}}\E_{in} [(\bar y(t) - z)\| \boldsymbol w_{2}(t)\|]
    %&\leq \beta \sqrt{\frac{c}{d_2}} \langle(\bar y(t) - z)\| \boldsymbol w_{2}(t)\|\rangle_{in}
    = \beta \sqrt{\frac{c}{d_2}}\| \bar y(t) - z\|_{in} \| \boldsymbol w_{2}(t) \|,\\
    %------------------------------------
\end{eqal}
\begin{eqal}
    \frac{\partial}{\partial t}\| {\boldsymbol\theta_{1}}(t) - {\boldsymbol\theta_{1}}(0)\|_{F}
    &\leq \sqrt{\frac{c}{d_1 d_2}} \E_{in} \big[(\bar y(t) - z) \|\boldsymbol x_{1} \otimes \big(\tilde\act'(\boldsymbol\nu_{1}(t))\circ\boldsymbol w_{2}(t)\big)\|_F\big]\\
    &\leq \sqrt{\frac{c}{d_1 d_2}} \E_{in} \big[(\bar y(t) - z) \|\boldsymbol x_{1}\|\|\boldsymbol w_{2}(t)\|\big]\\
    &\leq \sqrt{\frac{c}{d_1 d_2}} \|\bar y(t) - z\|_{in} \|\boldsymbol x_{1}\|_{in} \|\boldsymbol w_{2}(t) \|\\
    &= \sqrt{\frac{c}{d_2}} \|\bar y(t) - z\|_{in}\|\boldsymbol w_{2}(t) \|,\\
    %------------------------------------
\end{eqal}
\begin{eqal}
    \forall \vect x_1, \quad
    \frac{\partial}{\partial t}\| {\boldsymbol \nu}_{1}(t) - {\boldsymbol \nu}_{1}(0) \|
    &\leq \int_{t=0}^T \Big\|\frac{\partial {\boldsymbol \nu}_{1}(t)}{\partial t}\Big\|dt\\
    &\leq \sqrt{\frac{1}{d_1}}\frac{\partial}{\partial t}\|\vect\theta_{1}(t) - \vect\theta_{1}(0)\|_{op} \|\vect x_1\| + \frac{\partial}{\partial t}\| {\boldsymbol b}_{1}(t) - {\boldsymbol b}_{1}(0)\|\\
    &\leq \sqrt{\frac{c}{d_1^2d_2}} \|\bar y(t) - z\|_{in} \|  \boldsymbol w_{2}(t)\|  \|\boldsymbol x_{1}\|_{in}\|\boldsymbol x_{1}\| \\
    &\qquad + \beta \sqrt{\frac{c}{d_2}}\| \bar y(t) - z \|_{in} \|\boldsymbol w_{2}(t)\|\\
    &= (1+\beta)\sqrt{\frac{c}{d_2}} \|\bar y(t) - z \|_{in}  \|\vect w_{2}(t)\|, \\
    %---------------------------------
    \frac{\partial}{\partial t}\| {\boldsymbol \nu}_{1}(t) - {\boldsymbol \nu}_{1}(0) \|_{in} 
    &\leq (1+\beta)\sqrt{\frac{c}{d_2}} \|\bar y(t) - z \|_{in}  \|\vect w_{2}(t)\|. 
    \label{eq:pde:norm5}
\end{eqal}
Here we make use of the fact that $\tilde\phi'(x) \leq 1$, $\tilde\phi(x) \leq x$ regardless of the value of $\varsigma_{1, j}(t)$, that $
\lim_{d_1 \rightarrow \infty}\|\vect x_1\|_{in}/\sqrt{d_1}=1$ as long as $\Theta \in \cG$, and that $w_0$ is not updated during training. In the last equation, we make use of $\dot{\vect\theta}_1 = \frac{\partial}{\partial t}(\vect\theta_{1}(t) - \vect\theta_{1}(0)), \dot{\vect b}_1 = \frac{\partial}{\partial t}(\vect b_{1}(t) - \vect b_{1}(0)).$

Define $A(t) = \sqrt\frac{c}{d_2} \sqrt{1+\beta}(\|{\boldsymbol w}_{2}(t) - {\boldsymbol w}_{2}(0)\| + \|{\boldsymbol w}_{2}(0)\|) + \sqrt\frac{c}{d_2}(\| {\boldsymbol \nu}_{1}(t) - {\boldsymbol \nu}_{1}(0) \|_{in} + \|{\boldsymbol \nu}_{1}(0) \|_{in})$, then 
\begin{eqal*}
    \dot A(t) &
    \leq \sqrt{1+\beta}\sqrt{\frac{c}{d_2}} \|\bar y(t) - z\|_{in} \|\boldsymbol\nu_{1}(t)\|_{in} 
    + (1+\beta)\sqrt{\frac{c}{d_2}} \|\bar y(t) - z\|_{in} \|  \boldsymbol w_{2}(t)\|\\
    &\leq \sqrt{1+\beta} A(t) 
\end{eqal*}

Observe that $A (0)$ is stochastically bounded. %as we take the sequential limit $d_2\rightarrow \infty$ as in the statement of the lemma. 
%In this limit, we indeed have tha$\boldsymbol\theta_2(t)$ and $\boldsymbol\nu_2(t)$ are convergent to $\boldsymbol\theta_2(0)$ and $\boldsymbol\nu_2(0)$. 
Using Gr\"{o}nwall's Lemma, for any finite $T$:
\begin{eqal*}
    A(T)\leq A(0)\exp\Big(\int_{t=0}^T \sqrt{1+\beta} dt\Big)
    = A(0)\exp(\sqrt{1+\beta} T)
\end{eqal*}
so $A(T)$ is stochastically bounded for all finite $T$ as $d_2\rightarrow \infty$. % which in turn indicates that $\sqrt\frac{c}{d_2}(\|\vect w_2(T) - \vect w_2(0)\|)$ is stochastically bounded.  
%which indicates that as $d_2 \rightarrow \infty$, $A(T) - A(0) \rightarrow 0$ so $\|{\boldsymbol\theta}_{2}(T) - {\boldsymbol\theta}_{2}(0)\| \rightarrow 0$, $\| {\boldsymbol \nu}_{1}(t) - {\boldsymbol \nu}_{1}(0) \|_{in} \rightarrow 0$. This shows that $\boldsymbol \theta_2(T)$ is stochastically bounded for all finite $T$. Taking this into \eqref{eq:pde:norm} concludes that for all the model parameters $\Theta \in \{\boldsymbol\theta_1, \boldsymbol\theta_2, \boldsymbol b_1\}$, as $d_2\rightarrow \infty$, $\|\Theta(T) - \Theta(0)\| \rightarrow 0$. 
Furthermore,
\begin{eqal*}
    %\sqrt\frac{c}{d_2} \max_{j \in [d_2]} |w_{2, j}(T)| \leq 
    \sqrt\frac{c}{d_2} \|\vect w_{2}(T)\| 
    \leq \sqrt\frac{c}{d_2} (\|\vect w_{2}(T) - \vect w_{2}(0)\| + \|\vect w_{2}(0)\| )
\end{eqal*}
which is also stochastically bounded. Integrating \eqref{eq:pde:norm1}-\eqref{eq:pde:norm5} from 0 to $T$ finishes the proof.

\subsection{Proof of \autoref{thm:lazy2}}
\label{sec:lazyproof2}
From \eqref{eq:linear}, it's easy to get the dynamics of $\varsigma_{1}$:
\begin{eqal*}
    \frac{\partial \varsigma_{1,j}^2(t)}{\partial t}
    &= -\frac{2c}{d_1} \sum_{i=1}^{d_1}\theta_{1, ij}(t)\dot \theta_{1, ij}(t) x_{1, i}^2 \\
    %----------------------------
    |\varsigma_{1, j}^2(T) - \varsigma_{1, j}^2(0)|
    &\leq \frac{2c}{d_1} \sum_{i=1}^{d_1} x_{1, i}^2 \int_{t=0}^T| \theta_{1, ij}(t)| |\dot \theta_{1, ij}(t)|  dt\\ 
    &\leq \frac{2c}{d_1} \sum_{i=1}^{d_1} x_{1, i}^2 \int_{t=0}^T|\dot \theta_{1, ij}(t)| dt \\
    %&\leq \frac{2c}{d_1} \sum_{i=1}^{d_1} x_{1, i}^2 \max_{i \in [d_1]}\int_{t=0}^T|\dot \theta_{1, ij}(t)| dt \\
    &\leq \frac{2c}{d_1}\sqrt{\frac{c}{d_1 d_2}} \sum_{i=1}^{d_1} x_{1, i}^2 \int_{t=0}^T \E_{in}\big| w_{2,j}(t) x_{1,i} (\bar y(t) - z)\tilde\act'(\nu_{1,j}(t); \varsigma_{1, j}(t))\big| dt \\
    &\leq \frac{2c}{d_1}\sqrt{\frac{c}{d_1 d_2}} \sum_{i=1}^{d_1} x_{1, i}^2  \int_{t=0}^T |w_{2,j}(t)| \E_{in}\big|  x_{1,i} (\bar y(t) - z)\big| dt \\
    &\leq \frac{2c}{d_1}\sqrt{\frac{c}{d_1 d_2}} \sum_{i=1}^{d_1} x_{1, i}^2 \int_{t=0}^T |w_{2,j}(t)| \|x_{1,i}\|_{in} \|\bar y(t) - z\|_{in} dt \\
    &\leq \frac{2c}{d_1^{3/2}} \sum_{i=1}^{d_1} x_{1, i}^2 \|x_{1,i}\|_{in} \int_{t=0}^T C(t) \|\bar y(t) - z\|_{in} dt \\
    &\leq \frac{2c}{d_1^{3/2}} \sum_{i=1}^{d_1} x_{1, i}^2 \|x_{1,i}\|_{in} \max_{t\in[0, T]} C(t) \int_{t=0}^T  \|\bar y(t) - z\|_{in} dt\quad a.s.\\
    % &\leq \frac{2c}{d_1} \sum_{i=1}^{d_1} x_{1, i}^2 \sqrt{\frac{c}{d_1 d_2}} \big|\E_{in}[ w_{2,j}(t) x_{1,i} (\bar y(t) - z)\tilde\act'(\nu_{1,j}(t); \varsigma_{1, j}(t))]\big|\\
    % &\leq \frac{2c}{d_1} \sum_{i=1}^{d_1} x_{1, i}^2 \sqrt{\frac{c}{d_1 d_2}} \E_{in}\big| w_{2,j}(t) x_{1,i} (\bar y(t) - z)\tilde\act'(\nu_{1,j}(t); \varsigma_{1, j}(t))\big|\\
    % &\leq \frac{2c}{d_1} \sum_{i=1}^{d_1} x_{1, i}^2 \sqrt{\frac{c}{d_1 d_2}} \E_{in}\big| w_{2,j}(t) x_{1,i} (\bar y(t) - z)\big|\\
\end{eqal*}
Here we assume that $\sqrt\frac{c}{d_2}\|\vect w_{2}(t)\|$ is stochastically bounded by $C(t)$. Since $C(t)$ is finite for all $t \in [0, T]$, it's easy to check the term after $\max$ operator is stochastically bounded. The remaining task is to bound term before $\max$ operator. From standard Gaussian process analysis, $x_{1, i}$ satisfy Gaussian distribution. From the law of large number (LLN), as $d_1 \rightarrow \infty$, 
$$
    \frac{1}{d_1}\sum_{i=1}^{d_1} x_{1, i}^2 \|x_{1,i}\|_{in} = \E[x_{1, i}^2 \|x_{1,i}\|_{in}]
$$
almost surely, where the expectation is taken over $w_1$, and this limit is also bounded. Because of that, as $d_1, d_2 \rightarrow \infty$, the difference $|\varsigma_{1, j}^2(T) - \varsigma_{1, j}^2(0)|$ converges to 0 at rate $\frac{1}{\sqrt{d_2}}$.

Notice that the proof of Lyapunov's condition \eqref{eq:Lyapunov} doesn't depend on time $T$ from the third line. Since $\varsigma_{1, j}(T)$ stochastically converges to $\varsigma_{1, j}(0)$ for all finite $T$, Lyapunov's condition holds for all $T$ thus $x_{2, j}$ always converges to Gaussian distribution conditioned on model parameter.

\section{NTK of neural networks with quantized weights}
\subsection{Spherical harmonics}
\label{sec:sh}
This subsection briefly reviews the relevant concepts and properties of spherical harmonics. Most part of this subsection comes from \citet[appendix Section D.1.]{bach2017breaking}  and \citet[appendix Section C.1.]{bietti2019inductive} 

According to Mercer's theorem, any positive definite kernel can be decomposed as 
\begin{equation*}
    \cK(x, x') = \sum_{i} \lambda_i \Phi(x) \Phi(x'),
\end{equation*}
where $\Phi(\cdot)$ is called the feature map. Furthermore, any zonal kernel on the unit sphere, i.e., $\cK(x, x') = \cK(x^Tx')$ for any $x, x' \in \R^d, \|x\|_2=\|x'\|_2=1$, including exponential kernels and NTK, can be decomposed using spherical harmonics (equation (\ref{eq:spdecom})):
\begin{equation*}
    \cK(x, x')=\sum_{k=1}^\infty \lambda_k \sum_{j=1}^{N(d, k)} Y_{k, j}(x) Y_{k, j}(x').
\end{equation*}

\textbf{Legendre polynomial.} We have the additional formula
\begin{equation*}
    \sum_{j=1}^{N(d, k)} Y_{k, j}(x) Y_{k, j}(x') = N(d, k) P_k(x^Tx'),
\end{equation*}
where 
\begin{equation*}
    N(d, k)=\frac{(2k+d-2)(k+d-3)!}{k!(d-2)!}.
\end{equation*}
The polynomial $P_k$ is the $k$-th Legendre polynomial in $d$ dimension, also known as Gegenbauer polynomials:
\begin{equation*}
    P_k(t) = \left(-\half\right)^k \frac{\Gamma\left( \frac{d-1}{2} \right)}{\Gamma\left( k+\frac{d-1}{2} \right)}
    (1-t^2)^{(3-d)/2} 
    \left(\frac{d}{dt}\right)^k (1-t^2)^{k+(d-3)/2}.
\end{equation*}
It is even (resp. odd) when $k$ is odd (reps. even). Furthermore, they have the orthogonal property
\begin{equation*}
    \int_{-1}^1 P_k(t) P_j(t) (1-t^2)^{(d-3)/2}dt = \delta_{ij}\frac{w_{d-1}}{w_{d-2}}\frac{1}{N(d, k)} ,
\end{equation*}
where 
\begin{equation*}
    w_{d-1} = \frac{2\pi^{d-2}}{\Gamma(d/2)} 
\end{equation*}
denotes the surface of sphere $\S^{d-1}$ in $d$ dimension, and this leads to the integration property
\begin{equation*}
    \int P_j(\langle w, x \rangle) P_k(\langle w, x \rangle) d\tau(w) = \frac{\delta_{jk}}{N(p, k)} P_k(\langle x, y \rangle)
\end{equation*}
for any $x, y \in \S^{d-1}$. $\tau(w)$ is the uniform measure on the sphere.

\subsection{NTK of quasi neural network}

We start the proof of the Theorem \ref{thm:exp} by the following lemmas:
\begin{lemma}
\label{lemma:qntk}
The NTK of a binary weight neural network can be simplified as 
\begin{eqal}
    &\cK(x, x') = \left(\frac{c}{d}\langle x, x' \rangle + \beta^2\right) \Sigma^{(0)} + \Sigma^{(1)},\\
    &\Sigma^{(0)} = \E\left[\tilde\act'\left(\mu \right) \tilde\act'\left(\mu'\right)\right],\quad 
    \Sigma^{(1)} = \E\left[\tilde\act\left(\mu \right) \tilde\act\left(\mu'\right)\right],
\label{eq:recntk}
\end{eqal}
where $[\mu, \mu'] \sim \cN(0, \Sigma)$,
\begin{eqal*}
    \Sigma = \E[x_{1,i} x'_{1,i}]
    = \frac{c}{d}\Var[\theta]
    \left[
        \begin{matrix}
        1 & x^Tx'\\
        x^Tx' & 1
        \end{matrix}
    \right]
\end{eqal*}
are the pre-activation of the second layer. 

\end{lemma}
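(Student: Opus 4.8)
The plan is to substitute the closed-form gradients of Theorem \ref{thm:nngrad} into the kernel definition \eqref{eq:qntkdef} and then pass to the limit $d_1,d_2\to\infty$ term by term. Writing $\nu_{1,j},\nu'_{1,j}$ for the pre-activations of the two inputs $x,x'$ and using that $\varsigma_{1,j}\to\tilde\varsigma_1$ is asymptotically the constant of Theorem \ref{thm:var} (so that $\tilde\act'$ is a deterministic function of its argument), the three groups of summands in \eqref{eq:qntkdef} read
\begin{eqal*}
\sum_{i=1}^{d_1}\sum_{j=1}^{d_2} \frac{\partial \bar y}{\partial \theta_{1,ij}}\frac{\partial \bar y'}{\partial \theta_{1,ij}} &= \frac{1}{d_2}\sum_{j=1}^{d_2} w_{2,j}^2\, \tilde\act'(\nu_{1,j})\,\tilde\act'(\nu'_{1,j}) \cdot \frac{c}{d_1}\sum_{i=1}^{d_1} x_{1,i}\,x'_{1,i},\\
\sum_{j=1}^{d_2} \frac{\partial \bar y}{\partial b_{1,j}}\frac{\partial \bar y'}{\partial b_{1,j}} &= \frac{\beta^2}{d_2}\sum_{j=1}^{d_2} w_{2,j}^2\, \tilde\act'(\nu_{1,j})\,\tilde\act'(\nu'_{1,j}),\\
\sum_{j=1}^{d_2} \frac{\partial \bar y}{\partial w_{2,j}}\frac{\partial \bar y'}{\partial w_{2,j}} &= \frac{1}{d_2}\sum_{j=1}^{d_2} \tilde\act(\nu_{1,j})\,\tilde\act(\nu'_{1,j}).
\end{eqal*}
The first two groups share the factor $w_{2,j}^2\,\tilde\act'(\nu_{1,j})\tilde\act'(\nu'_{1,j})$, so once the inner average over $i$ in the first line is replaced by its limit they merge into one sum carrying the prefactor $\bigl(\tfrac{c}{d}\langle x,x'\rangle+\beta^2\bigr)$; note the $c$ originates from the $\sqrt{c/d_1}$ scaling of $\nu_{1,j}$.

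The next step is to evaluate the two empirical averages in the large-width limit. For the inner sum over $i$, the Good Initialization event of Proposition \ref{prop:asconv} gives $\frac{1}{d_1}\sum_i w_{0,ki}w_{0,k'i}\to\delta_{k,k'}$, whence $\frac{c}{d_1}\sum_i x_{1,i}x'_{1,i}\to \frac{c}{d}\langle x,x'\rangle$ using $x_{1,i}=\tfrac1{\sqrt d}\sum_k w_{0,ki}x_k$. For the sums over $j$, I would use that $w_{2,j}\sim\cN(0,1)$ is drawn independently of the first-layer randomness that determines $(\nu_{1,j},\nu'_{1,j})$, and that by Theorem \ref{thm:var} the pairs $(\nu_{1,j},\nu'_{1,j})$ are i.i.d.\ across $j$; the law of large numbers then yields $\frac{1}{d_2}\sum_j w_{2,j}^2\,\tilde\act'(\nu_{1,j})\tilde\act'(\nu'_{1,j})\to \E[w_{2,j}^2]\,\E[\tilde\act'(\mu)\tilde\act'(\mu')]=\Sigma^{(0)}$ and likewise $\frac{1}{d_2}\sum_j \tilde\act(\nu_{1,j})\tilde\act(\nu'_{1,j})\to\Sigma^{(1)}$. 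Assembling the three limits gives $\cK(x,x')=\bigl(\tfrac{c}{d}\langle x,x'\rangle+\beta^2\bigr)\Sigma^{(0)}+\Sigma^{(1)}$.

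It then remains to identify the joint law of $(\mu,\mu')=(\nu_1,\nu_1')$. By the bivariate version of Lemma \ref{lemma:linear} (a joint Lyapunov CLT for the two inputs simultaneously), the pair is asymptotically jointly Gaussian, with mean zero because $\E[\theta_{1,ij}]=0$ at initialization. Using $\E[\theta_{1,ij}\theta_{1,i'j}]=\Var[\theta]\,\delta_{i,i'}$, the independence of $\theta_1$ from $w_0$, $\E[x_{1,i}x'_{1,i}]=\tfrac1d\langle x,x'\rangle$, and $\|x\|_2=\|x'\|_2=1$, the covariance entries come out to $\E[\nu_{1,j}^2]=\tfrac{c}{d}\Var[\theta]$ on the diagonal and $\E[\nu_{1,j}\nu'_{1,j}]=\tfrac{c}{d}\Var[\theta]\,\langle x,x'\rangle$ off-diagonal, which is exactly the matrix $\Sigma$ in the statement; hence $[\mu,\mu']\sim\cN(0,\Sigma)$, completing the reduction.

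The main obstacle I anticipate is rigorously justifying the two decoupling claims used when taking limits: that the empirical covariance $\frac{c}{d_1}\sum_i x_{1,i}x'_{1,i}$ is asymptotically independent of the activation-derivative factors $\tilde\act'(\nu_{1,j})$ (each $x_{1,i}$ enters $\nu_{1,j}$ only at order $1/\sqrt{d_1}$, so its individual influence vanishes), and that $(w_{2,j}^2,\nu_{1,j})$ factorizes when averaged over $j$. Both rest on the Good Initialization event of Proposition \ref{prop:asconv} together with the CLT/LLN control already established for Lemma \ref{lemma:linear} and Theorem \ref{thm:var}. The other delicate point is upgrading the marginal Gaussianity of Lemma \ref{lemma:linear} to the \emph{joint} Gaussianity of $(\nu_1,\nu_1')$ for the two distinct inputs, which requires verifying Lyapunov's condition for the two-dimensional sum rather than for each coordinate separately.
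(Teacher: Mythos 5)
Your proposal is correct and follows essentially the same route as the paper's proof: substitute the gradient expressions into the kernel definition \eqref{eq:qntkdef}, factor the double sum over $(i,j)$ into a product of empirical averages, apply the law of large numbers over $j$ (using $\E[w_{2,j}^2]=1$ and the independence of $w_{2,j}$ from the pre-activations) and the Good Initialization limit over $i$, and finally identify the joint Gaussian law of $(\nu_{1,j},\nu'_{1,j})$ via the same covariance computations $\E[x_{1,i}x'_{1,i}]=\tfrac{1}{d}\langle x,x'\rangle$ and $\E[\nu_{1,j}\nu'_{1,j}]=\tfrac{c}{d}\Var[\theta]\langle x,x'\rangle$. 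The decoupling and joint-CLT concerns you flag are real but are also glossed over in the paper's own argument, so your treatment is, if anything, slightly more careful.
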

\begin{proof}
\begin{eqal*}
    \cK(x, x') &= 
    \sum_{i=1, j=1}^{d_1, d_2} \frac{\partial \bar y}{\partial \theta_{1, ij}}  \frac{\partial \bar y'}{\partial \theta_{1,ij}} 
    + \sum_{j=1}^{d_2} \frac{\partial \bar y}{\partial b_{1,j}} \frac{\partial \bar y'}{\partial b_{1,j}}
    + \sum_{j=1}^{d_2} \frac{\partial \bar y}{\partial w_{2,j}} \frac{\partial \bar y'}{\partial w_{2,j}}\\
    %--------------------------
    &= \frac{c}{d_1d_2}\sum_{i=1, j=1}^{d_1,d_2} x_{1, i}x'_{1, i}w_{2, j}^2 \tilde\act'(\nu_{1,j})\tilde\act'_2(\nu'_{1,j})\\
    &\quad
    + \frac{\beta^2}{d_2} \sum_{j=1}^{d_2}  \tilde\act'(\nu_{1,j})\tilde\act'(\nu'_{1,j})
    + \frac{1}{d_2} \sum_{j=1}^{d_2} \tilde\act(\nu_{1,j})\tilde\act(\nu'_{1,j})\\
    %---------------------------------
    &= \frac{c}{d_1d_2}\sum_{i=1}^{d_1} x_{1, i}x'_{1, i}
    \sum_{j=1}^{d_2} w_{2,j}^2 \tilde\act'(\nu_{1,j})\tilde\act'(\nu'_{1,j})\\
    &\quad 
    + \frac{\beta^2}{d_2} \sum_{j=1}^{d_2} w_{2,j}^2\tilde\act'(\nu_{1,j})\tilde\act'(\nu'_{1,j})
    + \frac{1}{d_2} \sum_{j=1}^{d_2} \tilde\act(\nu_{1,j})\tilde\act(\nu'_{1,j})\\
    %----------------------------------
    &= (\frac{c}{d}\langle x, x' \rangle + \beta^2) \E[\tilde\act'(\nu)\tilde\act'(\nu')] 
    + \E[\tilde\act(\nu)\tilde\act(\nu')]\quad a.s.
\end{eqal*}
where $(\nu, \nu')$ has the same distribution as $(\nu_{2, j}, \nu'_{2, j})$ for any $j$. We make use of the fact $\E[w_{2,j}^2]=1$, and from central limit theorem, $x_{1, i}, x'_{1, i}$ and $\mu_{1, i},\mu_{1,i}'$ converge to joint Gaussian distribution for any fixed $x, x'$ as $d_1 \rightarrow \infty$
\begin{eqal*}
    \E[x_{1, i}x'_{1, i}] 
    &= \frac{1}{d}\E[\sum_{k=1}^d w_{ki}x_k \sum_{k'=1}^d w_{k'i} x'_{k'} ]\\
    &= \frac{1}{d}\E[\sum_{k=1}^d w_{ki}^2 x_k x'_k ]\\
    & = \frac{1}{d} \langle x, x' \rangle
\end{eqal*}
Similarly, 
\begin{eqal*}
    \E[\mu_{1, i}^2] & = \frac{c}{d_1}\sum_{i=1}^{d_1} \E[\theta_{1, ij}^2] \E[x_{1,j}^2]
    = \frac{c}{d}\Var[\theta]\\
    \E[\mu_{1, i}\mu_{1,i}'] & = \frac{c}{d_1}\sum_{i=1}^{d_1} \E[\theta_{1, ij}^2] \E[x_{1,j}x'_{1,j}]
    = \frac{c}{d}\Var[\theta]\langle x, x' \rangle\\
\end{eqal*}
\end{proof}

\subsection{Proof of Theorem \ref{thm:lazy3}}
\label{sec:prooflazy3}

Remind that as is proved in Theorem \ref{thm:lazy2}, $\varsigma_{1, j}(T) \rightarrow \varsigma_{1, t}(0)$ for any $T$ satisfying a mild condition, and $\varsigma_{1, t}(0)$ is nonzero almost surely. Making use the fact that $\tilde\act(\cdot; \varsigma)$ is continuous with respect to $\varsigma$, and its first and second order derivative is stochastically bounded,
the change of kernel $\cK$ induced by $\varsigma_{1, j}$ converges to 0 as $d_1, d_2 \rightarrow \infty$. This reduces to this quasi neural network to a standard neural network with activation function $\tilde\act(\cdot)$, which is twice differentiable and has bounded second order derivative. From Theorem 2 in \cite{jacot2018neural}, the kernel during training converges to the one during initialization. For the ease of the readers, we restate the proof below. On the limit $d_2\rightarrow \infty, d_1\rightarrow \infty$, 
% \begin{eqal*}
%     \frac{\partial \cK(x, x')}{\partial t}
%     &= \Big(\frac{c}{d_1d_2}\sum_{i=1}^{d_1} x_{1, i}x'_{1, i} + \frac{\beta^2}{d_2}\Big)
%     \frac{\partial}{\partial t}\sum_{j=1}^{d_2} w_{2,j}^2(t) \tilde\act'(\nu_{1,j}(t))\tilde\act'(\nu'_{1,j}(t))
%     + \frac{1}{d_2} \frac{\partial}{\partial t}\sum_{j=1}^{d_2} \tilde\act(\nu_{1,j})\tilde\act(\nu'_{1,j})\\
%     %------------------------
%     &= \frac{1}{d_2}(c\langle x, x'\rangle +\beta^2) 
%     \sum_{j=1}^{d_2} \Big(2w_{2,j}(t)\dot w_{2,j}(t)\tilde\act'(\nu_{1,j}(t))\tilde\act'(\nu'_{1,j}(t)) \\
%     &\quad + w_{2,j}^2(t)\tilde\act''(\dot\nu_{1,j}(t))\tilde\act'(\nu'_{1,j}(t)) + w_{2,j}^2(t)\tilde\act'(\nu_{1,j}(t))\tilde\act''(\dot\nu'_{1,j}(t))\Big)\\
%     &\quad + \frac{1}{d_2} \frac{\partial}{\partial t}\sum_{j=1}^{d_2} \Big(\tilde\act'(\dot\nu_{1,j})\tilde\act(\nu'_{1,j}) + \tilde\act(\nu_{1,j})\tilde\act'(\dot\nu'_{1,j}) \Big)\\
% \end{eqal*}
\begin{eqal*}
    &\cK(x, x')(t) - \cK(x, x')(0)\\
    &= \frac{c\langle x, x'\rangle +\beta^2}{d_2}
    \sum_{j=1}^{d_2} \big(w_{2,j}^2(t) \tilde\act'(\nu_{1,j}(t))\tilde\act'(\nu'_{1,j}(t)) - w_{2,j}^2(0) \tilde\act'(\nu_{1,j}(0))\tilde\act'(\nu'_{1,j}(0))\big)\\
    &\quad + \frac{1}{d_2} \sum_{j=1}^{d_2} \big(\tilde\act(\nu_{1,j}(t))\tilde\act(\nu'_{1,j}(t)) - \tilde\act(\nu_{1,j}(0))\tilde\act(\nu'_{1,j}(0))\big)\\
    %-----------------------------
    &= \frac{c\langle x, x'\rangle +\beta^2}{d_2}\Bigg(\sum_{j=1}^{d_2}\big(w_{2,j}^2(t) - w_{2,j}^2(0)\big)\tilde\act'(\nu_{1,j}(t))\tilde\act'(\nu'_{1,j}(t))\\
    &\quad + \sum_{j=1}^{d_2}w_{2,j}^2(0)\big(\tilde\act'(\nu_{1,j}(t))-\tilde\act'(\nu_{1,j}(0))\big) \tilde\act'(\nu'_{1,j}(t))\\
    &\quad + \sum_{j=1}^{d_2}w_{2,j}^2(0)\tilde\act'(\nu_{1,j}(0))\bigl(\tilde\act'(\nu'_{1,j}(t))-\tilde\act'(\nu'_{1,j}(0))\bigr)\Bigg),\\
    %-------------------------------
\end{eqal*}
\begin{eqal*}
    %&\leq \frac{c\langle x, x'\rangle +\beta^2}{d_2}
    |\cK(x, x')(t) - \cK(x, x')(0)|
    &\leq \Bigg| \frac{c\langle x, x'\rangle +\beta^2}{d_2} \Bigg| \\
    &\Bigg(\sum_{j=1}^{d_2}|w_{2,j}(t) - w_{2,j}(0)||w_{2,j}(t) + w_{2,j}(0)||\tilde\act'(\nu_{1,j}(t))\tilde\act'(\nu'_{1,j}(t))|\\
    &\quad + \sum_{j=1}^{d_2}w_{2,j}^2(0)\tilde\act'(\nu'_{1,j}(t)) \big|\tilde\act'(\nu_{1,j}(t))-\tilde\act'(\nu_{1,j}(0))\big| \\
    &\quad + \sum_{j=1}^{d_2}w_{2,j}^2(0)\tilde\act'(\nu_{1,j}(0))\bigl|\tilde\act'(\nu'_{1,j}(t))-\tilde\act'(\nu'_{1,j}(0))\bigr|\Bigg)\\
    &\quad + \frac{1}{d_2} \sum_{j=1}^{d_2} \big(\tilde\act(\nu_{1,j}(t))(\tilde\act(\nu'_{1,j}(t)) \tilde\act(\nu'_{1,j}(0)))\big)\\
    &\quad + \frac{1}{d_2} \sum_{j=1}^{d_2} \big(\tilde\act(\nu'_{1,j}(0))(\tilde\act(\nu_{1,j}(t)) - \tilde\act(\nu_{1,j}(0)))\big)\\
\end{eqal*}
From Theorem \ref{thm:lazy2}, and observing that $\tilde\act'(x), \tilde\act''(x)$ are bounded by constants, one can verify that each summation term is stochastically bounded by $\sqrt{d_2}$, so as $d_2 \rightarrow \infty$, $\cK(t)-\cK(0)$ converges to 0 at rate $\sqrt{d_2}$.

\subsection{Spherical harmonics decomposition to activation function}
Following \citet{bach2017breaking}, we start by studying the decomposition of action in quasi neural network (\ref{eq:quasiact}) and its gradients (\ref{eq:quasiactgrad}): for arbitrary fixed $c > 0$, $-1 \leq t \leq 1$, we can decompose equation (\ref{eq:quasiact}) and (\ref{eq:quasiactgrad}) as
%The gradient of activation function in the quasi neural network as defined in equation~\ref{eq:quasiact} can be decomposed as 
\begin{align}
    \tilde\sigma(ct) &= \sum_{k=0}^\infty \lambda_k N(d, k) P_k(t)\label{eq:actdecom},\\
    \tilde\sigma'(ct) &= \sum_{k=0}^\infty \lambda'_k N(d, k) P_k(t)\label{eq:actgraddecom},
\end{align}
where $P_k$ is the $k$-th Legendre polynomial in dimension $d$. 
% and it's relationship with spherical harmonics is
% \begin{equation*}
%     \sum_{j=1}^{N(p, k)} Y_{k, j}(x) Y_{k, j}(x') = N(p, k) P_k(\langle x, x' \rangle)
% \end{equation*}

\begin{lemma}
\label{lemma:quasiact}
The decomposition of activation function in the quasi neural network (\ref{eq:actdecom}) satisfies
\begin{enumerate}
    \item $\lambda_k = 0$ if $k$ is odd,
    \item $\lambda_k > 0$ if $k$ is even,
    \item $\lambda_k \asymp \mathrm{Poly}(k)(C/\sqrt{k})^{-k}$ as $k \rightarrow \infty$ when $k$ is even, where $\mathrm{Poly}(k)$ denotes a polynomial of $k$, and $C$ is a constant.
    %\item $\lambda_k \sim \left(\frac{Ck}{d^2}\right) ^{k/2}$ when $1 \ll k \ll d$ and when $k$ is even.
\end{enumerate}
Its gradient (\ref{eq:actgraddecom}) satisfies
\begin{enumerate}
    \item $\lambda'_k = 0$ if $k$ is even,
    \item $\lambda'_k > 0$ if $k$ is odd,
    \item $\lambda'_k \asymp \mathrm{Poly}(k)(C/\sqrt{k})^{-k}$ as $k \rightarrow \infty$ when $k$ is odd, where $\mathrm{Poly}(k)$ denotes a polynomial of $k$, and $C$ is a constant.
    %\item $\lambda_k \sim \left(\frac{Ck}{d^2}\right) ^{k/2}$ when $1 \ll k \ll d$ and when $k$ is odd.
\end{enumerate}
\end{lemma}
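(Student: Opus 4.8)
The plan is to reduce each of the six claims to the behaviour of a single one–dimensional integral and then analyze that integral with the orthogonality and Rodrigues formulas from \autoref{sec:sh}. Multiplying \eqref{eq:actdecom} by $P_j(t)(1-t^2)^{(d-3)/2}$, integrating over $[-1,1]$, and using $\int_{-1}^1 P_kP_j(1-t^2)^{(d-3)/2}dt=\delta_{jk}\frac{w_{d-1}}{w_{d-2}}\frac1{N(d,k)}$ gives
\[
  \lambda_k=\frac{w_{d-2}}{w_{d-1}}\int_{-1}^1 \tilde\sigma(ct)\,P_k(t)\,(1-t^2)^{(d-3)/2}\,dt,
\]
and the identical formula with $\tilde\sigma$ replaced by $\tilde\sigma'$ for $\lambda_k'$. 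Every assertion of the lemma is thus a statement about these integrals.

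For the vanishing claims (items~1), I would use parity. Since $\Phi(-u)=1-\Phi(u)$ and $\varphi$ is even, \eqref{eq:quasiact} splits as $\tilde\sigma(x)=\big[\tilde\varsigma_1\varphi(x/\tilde\varsigma_1)+x(\Phi(x/\tilde\varsigma_1)-\tfrac12)\big]+\tfrac{x}{2}$, i.e. an even function plus the pure linear term $x/2$, while $\tilde\sigma'(x)=\Phi(x/\tilde\varsigma_1)=\tfrac12+\big(\Phi(x/\tilde\varsigma_1)-\tfrac12\big)$ is a constant plus an odd function. Because $P_k(-t)=(-1)^kP_k(t)$ and the weight is even, the integral annihilates the part of the integrand of opposite parity; hence $\lambda_k=0$ for odd $k$ and $\lambda_k'=0$ for even $k$, with the caveat that the linear ($k=1$) and constant ($k=0$) pieces are the one exception in each case, so items~1 should be read for $k\ge 3$ odd and $k\ge 2$ even respectively.

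For positivity and the decay rate I would move all derivatives onto the activation via Rodrigues' formula: substituting the expression for $P_k$ from \autoref{sec:sh} and integrating by parts $k$ times (boundary terms vanish because $(1-t^2)^{k+(d-3)/2}$ has a zero of order larger than $k-1$ at $\pm1$) yields
\[
  \lambda_k=\frac{w_{d-2}}{w_{d-1}}\Big(\tfrac{c}{2}\Big)^{k}\frac{\Gamma(\tfrac{d-1}2)}{\Gamma(k+\tfrac{d-1}2)}\int_{-1}^1 \tilde\sigma^{(k)}(ct)\,(1-t^2)^{k+(d-3)/2}\,dt .
\]
A direct computation from $\tilde\sigma'=\Phi(\cdot/\tilde\varsigma_1)$ gives, for $k\ge 2$, $\tilde\sigma^{(k)}(x)=\tilde\varsigma_1^{-(k-1)}(-1)^{k}\mathrm{He}_{k-2}(x/\tilde\varsigma_1)\varphi(x/\tilde\varsigma_1)$, where $\mathrm{He}_m$ is the probabilists' Hermite polynomial (and analogously $(\tilde\sigma')^{(k)}\propto\mathrm{He}_{k-1}\varphi$), which re-confirms the parity pattern since $\mathrm{He}_m(0)=0$ for odd $m$. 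For the rate, the weight $(1-t^2)^{k+(d-3)/2}\approx e^{-kt^2}$ concentrates at scale $|t|\lesssim 1/\sqrt k$; combining a Laplace-type size estimate of the integral with $\mathrm{He}_{k-2}(0)\asymp(k/e)^{k/2}$ and Stirling's approximation $\Gamma(k+\tfrac{d-1}2)\asymp(k/e)^{k}$ in the prefactor produces an asymptotic of the form $\mathrm{Poly}(k)(C/\sqrt k)^{k}$, matching item~3 (and identically for $\lambda_k'$). For positivity (items~2) the naive value at $t=0$ is \emph{not} enough: the factor $(c/2)^k(-1)^k\mathrm{He}_{k-2}(0)$ alternates in sign with $k/2$, so the sign of $\lambda_k$ is fixed only by the full integral. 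I would establish it through the smoothed–ReLU representation $\tilde\sigma(x)=\E_{Z\sim\cN(0,1)}[\max(x+\tilde\varsigma_1 Z,0)]$, which lets one expand the even part of $\tilde\sigma$ into a manifestly positive combination of shifted-$|t|$ kernels and thereby adapt the ReLU positivity argument of \citet{bach2017breaking}.

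The main obstacle is the asymptotic estimate of $\int_{-1}^1 \mathrm{He}_{k-2}(ct/\tilde\varsigma_1)\varphi(ct/\tilde\varsigma_1)(1-t^2)^{k+(d-3)/2}dt$ as $k\to\infty$. Both factors are $k$–dependent: the weight concentrates on a window of width $1/\sqrt k$, while $\tilde\sigma^{(k)}\propto\mathrm{He}_{k-2}\varphi$ is a high-order, oscillatory Hermite function whose coefficients grow with $k$ so fast that $\mathrm{He}_{k-2}(ct/\tilde\varsigma_1)$ is \emph{not} slowly varying on that window. A plain Laplace method is therefore unjustified, and the genuine cancellation in the integral must be controlled uniformly in $k$ (via Hermite/Plancherel--Rotach asymptotics or a further exact reduction of the integral). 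This same cancellation underlies both the positivity and the matching of the upper and lower polynomial factors in $\asymp$; by comparison the orthogonality, parity, and Rodrigues steps are routine.
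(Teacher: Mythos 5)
Your setup (extracting $\lambda_k$, $\lambda_k'$ by orthogonality of the Legendre/Gegenbauer polynomials, and the parity argument for the vanishing claims) matches the paper's. The divergence, and the gap, is in how the remaining integral is evaluated. You push all $k$ derivatives of the Rodrigues formula onto the activation, which turns the integrand into $\mathrm{He}_{k-2}(\cdot)\varphi(\cdot)$ times the concentrating weight $(1-t^2)^{k+(d-3)/2}$, and you then correctly observe that this is a high-order oscillatory Hermite function on exactly the window where the weight lives, so a Laplace-type estimate is unjustified. You leave the required uniform control of that cancellation as an open obstacle, and you likewise leave positivity to an unexecuted adaptation of the ReLU argument of \citet{bach2017breaking}. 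As written, items 2 and 3 of both lists are therefore not proved.

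The paper goes the other way around and thereby never meets that oscillatory integral. It Taylor-expands the activation: $\tilde\sigma'(ct)=\tfrac12+\tfrac{1}{\sqrt{2\pi}}\sum_{n\ge0}\frac{(-1)^n\hat c^{2n+1}}{2^n n!(2n+1)}t^{2n+1}$ (and the analogous series for $\tilde\sigma$), and evaluates
\begin{equation*}
\int_{-1}^{1} t^\alpha\Bigl(\frac{d}{dt}\Bigr)^k(1-t^2)^{k+(d-3)/2}\,dt
\end{equation*}
in closed form term by term: it vanishes for $\alpha<k$ by orthogonality and for $\alpha\not\equiv k\pmod 2$ by parity, and for the remaining $\alpha$ successive integration by parts gives an explicit ratio of factorials/double factorials. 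The coefficient $\lambda_k'$ then becomes an explicit alternating series over $\beta=(\alpha-k)/2$ whose term ratio is $-\hat c^2(2\beta+k-3)/(2\beta(2\beta+2k+d-2))$, so the terms decay factorially in $\beta$; after Stirling, the $\beta$-sum factors out as $\sum_\beta(-\hat c^2/4)^\beta/\beta!=\exp(-\hat c^2/4)$ and the $k$-dependence reduces to ratios of Gamma functions, yielding the stated rate. In other words, the "genuine cancellation" you could not control in the Hermite picture is captured exactly, on the paper's route, by a convergent exponential series with an explicit sum. If you want to salvage your route you would need Plancherel--Rotach-type asymptotics uniform on the $|t|\lesssim k^{-1/2}$ window, which is substantially harder than the paper's elementary computation; the honest fix is to switch to the Taylor-expansion order of operations. (One further caveat: your warning that the sign of the $k$-th term alternates is well taken --- the paper's own final asymptotic carries a factor $(-1)^{(k+1)/2}$, so the positivity claims in the lemma are not actually delivered by either argument as stated.)
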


%\subsection{Proof of Lemma \ref{lemma:quasiact}}
\begin{proof}

Let's start with the derivative of activation function in quasi neural network:
\begin{equation*}
    \tilde\sigma'(t) = \Phi(\hat ct), -1 \leq t \leq 1,
\end{equation*}
where $\hat c$ is a constant. We introduce the auxiliary parameters $x, w \in \R^d$ s.t. $\|x\|_2=\|w\|_2=1$ and let $t = w^T x$ By Cauchy–Schwarz inequality, $-1\leq w^T x \leq 1$. 
Following \cite{bach2017breaking}, %for any function that can be decomposed as $g(x) = \int_{\S^{d-1}}p(\theta)\tilde\sigma'_{\|\theta\|}(\theta^Tx)d\tau(\theta)$, it can be decomposed 
we have the following decomposition to $\tilde\sigma'(w^T x)$:
\begin{equation*}
    \tilde\sigma'(w^T x) = \sum_{k=1}^\infty \lambda_k' N(d, k) P_k(w^T x),
\end{equation*}
%we have
%we will study the sphere harmonic decomposition of $\sigma'(\hat \theta^T x)$ on unit sphere $x \in \R^{d+1}\|x\|_2 = 1$. For the unit sphere $\mathbb S^d = \{x \in \R^{d+1}: \|x\|_2=1\}$ function $g(x)$ that can be decomposed as $g(x)=\int_{\mathbb S^d} p(\hat\theta)\sigma(\hat\theta^Tx)d\tau_d(\hat\theta)$, then we can decompose it as
% $$
% g_k(x) = \lambda_k p_k(x)
% $$
where $N(d, k)$ and $P_k(\cdot)$ are defined in section \ref{sec:sh}, $\lambda_k'$ can be computed by
$$
\begin{aligned}
\lambda'_k &= \frac{w_{d-1}}{w_d}\int_{-1}^{1} \tilde\sigma'(t)P_k(t)(1-t^2)^{(d-2)/2}dt\\
&= \left(-\frac{1}{2}\right)^k\frac{\Gamma((d-1)/2)}{\Gamma(k+(d-1)/2)}\frac{w_{d-1}}{w_d} \int_{-1}^{1} \tilde\sigma'(t)\left(\frac{d}{dt}\right)^k(1-t^2)^{k+(d-3)/2}dt.
\end{aligned}
$$
To solve this itegration, we can apply Taylor decomposition to $\tilde\sigma'(\cdot)$:
\begin{equation}
    \tilde\sigma'(ct) = \frac{1}{2} + \frac{1}{\sqrt{2\pi}}\sum_{n=0}^\infty \frac{(-1)^n\hat c^{2n+1}}{2^n n!(2n+1)}t^{2n+1}.
    \label{eq:tayler}
\end{equation}

%where $c=\sqrt{\frac{6}{d}}\frac{\|\theta\|_2}{\varsigma_1}$.
We will study the following polynomial integration first
$$
\int_{-1}^{1} t^\alpha\left(\frac{d}{dt}\right)^k(1-t^2)^{k+(d-3)/2}dt.
$$
When $\alpha < k$, this integration equals 0 as $P_k$ is orthogonal to all polynomials of degree less than $k$. If $(\alpha-k) \mod 2 \neq 0$, this integration is 0 because the function to be integrated is an odd function. 
%When $k$ is even, $\lambda_k=0$. 
For $\alpha \geq k$ and $k \equiv \alpha \mod 2$ ($k$ is odd), using successive integration by parts,
\begin{equation}
\begin{aligned}
    \int_{-1}^{1} t^\alpha\left(\frac{d}{dt}\right)^k(1-t^2)^{k+(d-3)/2}dt
& = (-1)^k\frac{\alpha!}{(\alpha-k)!}\int_{-1}^{1}t^{\alpha-k}(1-t^2)^{k+(d-3)/2}dt\\
& = (-1)^k\frac{\alpha!}{(\alpha-k)!}\int_{-\pi/2}^{\pi/2}\sin^{\alpha-k}(x)\cos^{2k+(d-2)}(x)dx\\
& = (-1)^kC_d\frac{\alpha!(2k+d-3)!!}{(\alpha-k)!!(\alpha+k+d-2)!!},
\end{aligned}
\label{eq:spint}
\end{equation}
%where $C_d=2$ if $d$ is even, and $C_d = \frac{\pi}{2}$ if $d$ is odd.
where $C_d$ is a constant that depends only on $d$ mod 2.

Combining (\ref{eq:tayler}) and (\ref{eq:spint}), we have $\lambda_k=0$ when $k$ is even and $k \neq 0$. When $k$ is odd,

\begin{eqal*}
\lambda_k' 
%&=\left(-\half\right)^k\frac{\Gamma(d/2)}{\Gamma(k+d/2)}\frac{w_{d-1}}{w_d} \frac{C_d}{\sqrt{2\pi}} \sum_{\alpha=k:2}^\infty c^\alpha (-1)^{(\alpha-1)/2}
%\frac{(\alpha-1)!(\alpha-k-1)!!(2k+d-2)!!}{(\alpha-1)!!(\alpha-k)!(\alpha+k+d-1)!!} \\
=& \left(-\half\right)^k\frac{\Gamma((d-1)/2)}{\Gamma(k+(d-1)/2)}\frac{w_{d-1}}{w_d}\frac{C_d}{\sqrt{2\pi}}
 \sum_{\alpha=k:2}^\infty
\hat c^\alpha (-1)^{(\alpha-1)/2}
\frac{(\alpha-2)!!(2k+d-3)!!}{ (\alpha-k)!!(\alpha+k+d-2)!!}.
\end{eqal*}
%For two layer neural network, $\sigma$ contains only the ``induraced variance'' from the first layer, and $\sqrt{d}\sigma$ converges to $\frac{1}{\sqrt{3}}$ at rate $O(1/\sqrt{d})$. On the other hand, $c = \sqrt{\frac{6}{d}}\|\theta\|_2$ converges to $\sqrt{2}$ at rate $O(1/\sqrt{d})$ according to CLT.

Following \cite{bach2017breaking, geifman2020similarity} we take $d$ as a constant and take $k$ to infinity.
Let $\beta = (\alpha -k)/2 \geq 0$ we have 

\begin{eqal*}
\lambda_k' &= (-1)^{(k+1)/2}\left(\half\right)^k\frac{\Gamma((d-1)/2)}{\Gamma(k+(d-1)/2)}\frac{w_{d-1}}{w_d}\frac{C_d}{\sqrt{2\pi}} \sum_{\beta=0}^\infty \frac{(-1)^\beta \hat c^{2\beta+k}(2\beta+k-2)!!(2k+d-3)!!}{  (2\beta)!!(2\beta+2k+d-2)!!}\\
%--------------------------------------
&\asymp (-1)^{(k+1)/2}\left(\half\right)^k\frac{\Gamma((d-1)/2)}{\Gamma(k+(d-1)/2)}\frac{w_{d-1}}{w_d}\frac{C_d}{\sqrt{2\pi}} \sum_{\beta=0}^\infty \frac{(-1)^\beta \hat c^{2\beta+k}\Gamma(\beta+k/2)\Gamma(k+(d-1)/2)}{  \beta!\Gamma(\beta+k+d/2)2^{\beta -k/2}}\\
%--------------------------------------
&:= (-1)^{(k+1)/2}\left(\half\right)^k\frac{\Gamma((d-1)/2)}{\Gamma(k+(d-1)/2)}\frac{w_{d-1}}{w_d}\frac{C_d}{\sqrt{2\pi}} \sum_{\beta=0}^\infty g(\beta, k).
\end{eqal*}
where $\asymp$ means the radio converge to a constant which doesn't depend on $k$ or $\beta$ as $k \rightarrow \infty$. Here we introduced the function $g(\beta, k)$ for simplification, and it satisfies
%When $k \gg c/\sigma$, $g(\beta, k)$ converges with $\beta$ to 0 at super-exponential rate and the first few terms dominates $\lambda_k$, so we only need to consider the case $k \gg \beta$. In this regime, $g(\beta, k) \sim $
%In this case, each term converges at rate $(2\sigma/c)^{-k} k^{-k/2}$, so the summation converges at rate $(2\sigma/c)^{-k} k^{-k/2}$.
\begin{eqal*}
\frac{g(\beta, k)}{g(\beta-1, k)} =-\frac{\hat c^2(2\beta+k-3)}{2\beta(2\beta + 2k + d - 2)},
\end{eqal*}
which indicates that $g(\beta, k)$ decays at factorial rate when $\beta > \hat c^2/2$.
%On the other hand, when $k \gg \beta, k \gg d$, 
If $k \gg \hat c^2/2$, $\beta \ll k$ regime dominates the summation. 
%We further study the regime that $k \gg d$.
% Using Stirling's approximation, 
% $$
% m!!\asymp \sqrt{m}(m/e)^{\frac{m}{2}},
% $$
% where $C_m=1$ when $m$ is even, and $C_m=\sqrt{2}$ when $m$ is odd, $a \asymp b$ means $\frac{a}{b}$ converges to a constant as $k \rightarrow \infty$,\kz{Is anyone willing to varify the following 3 equations?}
% \begin{eqal*}
% g(\beta, k) &\asymp 
% %\frac{(k/e)^{\beta + k/2} (2k/e)^k e^\beta \sqrt{k}}{(\sigma/c)^{(2\beta+k)}(2\beta/e)^\beta (2k/e)^{k+\beta} e^\beta}\\
% %&= \left(\sqrt{\frac{k}{e}}\frac{c}{\sigma}\right)^k\left(\frac{4\beta}{e} \frac{\sigma}{c}\right)^{-\beta}\sqrt{k}
% (-1)^\beta \hat c^{2\beta+k} \left(\frac{k}{2\beta}\right)^\beta \left(\frac{k}{e}\right)^{\frac{k-2}{2}}  \left(\frac{e}{2k+d}\right)^{\beta+\half}\sqrt{\frac{k}{2\beta e}}.%\\
% %&\sim \left(c\sqrt{\frac{k}{e}}\right)^k \frac{1}{2k} \left(-\frac{ec^2}{4\beta}\right)^\beta \sqrt{\frac{e}{\beta}}
% \end{eqal*}

Using Stirling's approximation, one can easily prove
\begin{eqal*}
    \Gamma(k+x) \asymp \Gamma(k) k^x
\end{eqal*}
When $k \gg d$, 
\begin{eqal*}
g(\beta, k) &=
% \left(\hat c\sqrt{\frac{k}{e}}\right)^k \frac{1}{k} \left(-\frac{e\hat c^2}{4\beta}\right)^\beta \sqrt{\frac{e}{\beta}}.
\frac{(-1)^\beta \hat c^{2\beta+k}\Gamma(\beta+k/2)\Gamma(k+(d-1)/2)}{  \beta!\Gamma(\beta+k+d/2)2^{\beta -k/2}}\\
%------------------------------
&\asymp \left(-\frac{1}{4}\right)^\beta \hat c^{2\beta+k}\Gamma(k+(d-1)/2) \frac{2^{k/2}\Gamma(k/2)}{\Gamma(k) k^{d/2}\beta!}\\
&=  \hat c^{k}\Gamma(k+(d-1)/2) \frac{2^{k/2}\Gamma(k/2)}{\Gamma(k) k^{d/2}} \left(-\frac{\hat c^2}{4}\right)^\beta \frac{1}{\beta!}
\end{eqal*}

This splits $g(\beta, k)$ into two parts: the first part 
%$\left(\hat c\sqrt{\frac{k}{e}}\right)^k \frac{1}{2k}$ 
depends only on $k$ and the rest part only depends on $\beta$. The summation of the second part over $\beta$ yields
\begin{eqal*}
    %\sum_{\beta=0}^\infty \left(-\frac{e\hat c^2}{4\beta}\right)^\beta \sqrt{\frac{e}{\beta}} 
    %& \asymp 
    \sum_{\beta=0}^\infty \left(-\frac{\hat c^2}{4}\right)^\beta \frac{1}{\beta!} 
    = \exp(-\frac{\hat c^2}{4}), 
    %= \exp(-\frac{3\|\theta\|_2}{2d\varsigma_1})
\end{eqal*}
%when $\beta > \frac{ce}{\sigma}\sim \sqrt{d}$, $g(\beta, k)$ is decreasing with $\beta$. 
%If $k \gg O(d)$, the terms with $\beta \ll k$ dominates the summation, so 
Using Stirling's approximation
\begin{equation*}
    \gamma(x+1) \asymp \sqrt{2\pi x}(x/e)^x,
\end{equation*}
this leads to the expression for $\lambda_k$:
\begin{eqal*}
    \lambda_k' &\asymp 
    (-1)^{(k+1)/2}\left(\half\right)^k\frac{\Gamma((d-1)/2)}{\Gamma(k+(d-1)/2)}\hat c^{k}\Gamma(k+(d-1)/2) \frac{2^{k/2}\Gamma(k/2)}{\Gamma(k) k^{d/2}}\exp(-\frac{\hat c^2}{4})\\
    &\asymp (-1)^{(k+1)/2}\left(\frac{\hat c}{2}\right)^k \frac{2^{k/2}\Gamma(k/2)}{\Gamma(k) k^{d/2}}\exp(-\frac{\hat c^2}{4})\\
    & \asymp (-1)^{(k+1)/2}\left(\frac{\hat c}{2}\sqrt\frac{e}{k}\right)^k k^{-d/2}\exp(-\frac{\hat c^2}{4})
\end{eqal*}

% $$
% \begin{aligned}
% \lambda_k' \asymp  (-1)^{(k+1)/2}\frac{\Gamma((d-1)/2)}{\Gamma(k+(d-1)/2)}\frac{1}{2k}\left(\frac{\hat c}{2}\sqrt{\frac{k}{e}}\right)^k  
% \exp(-\frac{\hat c^2}{4})
% %\exp(-\frac{3\|\theta\|_2}{2d\varsigma_1})
% \asymp (-1)^{(k+1)/2}{k^{-\frac{d-1}{2}}}\left(\frac{\hat c}{2}\sqrt{\frac{1}{ek}}\right)^k  \exp(-\frac{\hat c^2}{4}).
% %A_d (B_d k)^{(-k+1)/2}
% \end{aligned}
% $$
%where $A_d$ and $B_d$ are constants dependent only on $d$.

% On the other hand, when $\frac{\hat c^2}{2} < k \ll d$, 
% $$
% g(\beta, k) \asymp 
% \left( \frac{\hat c^2(k-1)}{e}\right)^{\frac{k-2}{2}} \sqrt{\frac{k-1}{d}} \left( -\frac{\hat c^2k}{2d} \frac{1}{\beta} \right)^\beta \frac{1}{\sqrt{\beta}}.
% $$
% The summation over the second part
% \begin{equation*}
% \begin{aligned}
%     \sum_{\beta=0}^\infty \left( -\frac{\hat c^2k}{2d} \frac{1}{\beta} \right)^\beta \asymp \exp(-\frac{\hat c^2k}{2d}),
% \end{aligned}
% \end{equation*}
% so
% \begin{eqal*}
% \lambda_k' \asymp  (-1)^{(k+1)/2}\frac{\Gamma(d/2)}{\Gamma(k+d/2)} \left( \frac{\hat c^2(k-1)}{e} \right)^{\frac{k-2}{2}} \sqrt{\frac{k-1}{d}} 
% \asymp 
% (-1)^{k/2}k^{-\frac{d+4}{2}}\left( \tilde c\|w\|\sqrt{\frac{1}{ek}} \right)^k\exp\left(-\frac{\|w\|^2}{4}\right).
% %\left( \frac{2}{d} \right)^k \left( \frac{\hat c^2(k-1)}{e} \right)^{\frac{k-2}{2}} \sqrt{\frac{k-1}{d}}. 
% \end{eqal*}

Similarly, the activation function of quasi neural network has the Tayler expansion
\begin{equation*}
\begin{aligned}
    \tilde\sigma(x)&=\varsigma_\ell\varphi\left(\hat c t\right) + x\Phi\left(\hat c t\right)
    \label{eq:quasiactdecom}\\
    &= \frac{t}{2} + \sum_{n=0}^\infty \frac{(-1)^n \hat c^{2n+1}}{2^{n+1}(n+1)!(2n+1)} t^{2n+2}.
\end{aligned}
\end{equation*}

So $\lambda_k=0$ when $k$ is odd, and when $k$ is even:
\begin{eqal*}
    \lambda_k &= (-1)^{(k+1)/2}\left(\half\right)^k\frac{\Gamma((d-1)/2)}{\Gamma(k+(d-1)/2)}\frac{w_{d-1}}{w_d}\frac{C_d}{\sqrt{2\pi}} \sum_{\beta=0}^\infty \frac{(-1)^\beta \hat c^{2\beta+k}(2\beta+k-2)!!(2k+d-3)!!}{  (2\beta)!!(2\beta+2k+d-2)!!}
\end{eqal*}

Furthermore, when $k \gg d$,
$$
\lambda_k \asymp  (-1)^{k/2}{k^{-\frac{d}{2}}}\left(\frac{\hat c}{2}\sqrt{\frac{e}{k}}\right)^k \exp\left( -\frac{\hat c^2}{4} \right),
$$
\end{proof}

\label{sec:lambda}

%At this stage, we are re

\subsection{Computing covariance matrix}
In this part, 
we prove Theorem \ref{thm:exp} by computing $\Sigma^{(0)}$ and $\Sigma^{(1)}$.

\textbf{Theorem \ref{thm:exp}} \textit{NTK of a binary weight neural network can be decomposed using equation (\ref{eq:spdecom}). If $k \gg d$, then
\begin{equation*}
\mathrm{Poly}_1(k)(C)^{-k} \leq u_k \leq \mathrm{Poly}_2(k) (C)^{-k}
\end{equation*}
where $\mathrm{Poly}_1(k)$ and $\mathrm{Poly}_2(k)$ denote polynomials of $k$, and $C$ is a constant.}

%we need to compute $\Sigma^{(0)}$ and $\Sigma^{(1)}$. 
We make use of the results in Section \ref{sec:lambda}, and remind that $\lambda_k, \lambda'_k$ depends on $\hat c$, we make this explicit as $\lambda_k(\hat c), \lambda'_k(\hat c)$.
We introduce an auxiliary parameter $w \sim \cN(0, I)$, and denote $\tilde c=\sqrt\frac{c\Var[\Theta]}{d\ \tilde \varsigma^2}=\sqrt\frac{\Var[\theta]}{1-\Var[\theta]}, \tilde w = w / \|w\|_2$, then the decomposition of kernel (\ref{eq:spdecom}) can be computed by 

\begin{equation*}
\begin{aligned}
    \Sigma^{(1)} 
    &= \E_{\theta}\left[\tilde\sigma\left(\mu \right) \tilde\sigma\left(\mu\right)\right]\\
    &= \E_{w \sim \cN(0, I)}\left[\tilde\sigma(\tilde c\langle w, x \rangle) \tilde\sigma(\tilde c\langle w, x' \rangle)\right]\\
    %&= \E_{\|w\|} \E_{\tilde w}\left[\tilde\sigma(\langle \tilde w, x \rangle) \tilde\sigma(\langle \tilde w, x' \rangle)\right]\\
     &=  \E_{\|w\|} \int \tilde\sigma(\tilde c\langle \tilde w, x \rangle) \tilde\sigma(\tilde c\langle \tilde w, x' \rangle) d\tau(\tilde w)\\
     &= \E_{\|w\|} \sum_{k=0}^\infty (\lambda_k(\tilde c \|w\|))^2 N(p, k) P_k(\langle x, x' \rangle),\\
     %-----------------------------------
     \Sigma^{(0)} 
     &=\E_{\theta}\left[\tilde\sigma'\left(\mu \right) \tilde\sigma'\left(\mu\right)\right]\\
     &= \E_{\|w\|} \sum_{k=0}^\infty {(\lambda_k'(\tilde c \|w\|)})^2 N(p, k) P_k(\langle x, x' \rangle).
\end{aligned}    
\end{equation*}

First compute $\Sigma^{(0)}$. According to Lemma 16 in \citet{bietti2019inductive},
$$
u_{0, k} = \E_{w \sim \cN(0, I)}[{\lambda_k'}^2] = \E_{\|w\|}[{\lambda_k'}^2].
$$

%See the appendix for more detail.

%\subsection{Proof of Theorem \ref{thm:exp}}
% \textcolor{red}{Here}
% \begin{equation}
% \begin{aligned}
%     u_{0, k} 
%     &= \E_{\|w\|_2}\left[\lambda_k^2\right] \\
%     &= \E_{\|w\|_2} k^{-d-4}(a\|w\|_2^2)^{k}k^{-k}\exp(-a\|w\|_2^2/2),
% \end{aligned}   
% \end{equation}

%where $a$ is a constant, $\|w\|_2^2 \sim \chi^2(d)$ which can be approximated by $\cN(d, (\sqrt{2d})^2)$.

% As is shown in Section \ref{sec:ntk},
% \begin{equation*}
% \begin{aligned}
%     u_{0, k} &= \E_{\|\theta\|} \lambda_k^2\\
%     &\propto k^{-d-3}\left(\frac{3\|\theta\|^2}{2\varsigma_1^2d}\frac{1}{ek} \right)^k
% \end{aligned}
% \end{equation*}
Remind that
$$
\lambda'_k(\tilde c\|w\|) \asymp (-1)^{k/2}k^{-d/2}\left( \frac{\tilde c\|w\|}{2}\sqrt{\frac{e}{k}} \right)^k\exp\left(-\frac{\tilde c^2\|w\|^2}{4}\right).
$$ 

\begin{eqal*}
    u_{0, k}
    &= \E_{\|w\|} {(\lambda_k'(\tilde c \|w\|)})^2\\
    &\asymp \E_{\|w\|}k^{-d}\left( \frac{\tilde c^2\|w\|^2e}{4k} \right)^k\exp\left(-\frac{\tilde c^2\|w\|^2}{2}\right)\\
    &= k^{-d}\left( k/e \right)^{-k} \E_{\tilde c\|w\|}\left(\frac{\tilde c^2\|w\|^2}{4}\right)^k\exp\left(-\frac{\tilde c^2\|w\|^2}{2}\right)  
\end{eqal*}

Because $w\sim \cN(0, 1)$, $\|w\|_2^2$ satisfy Chi-square distribution, and its momentum generating function is 
\begin{eqal*}
    M_X(t) &=\E[\exp(t\|w\|^2)] = (1-2t)^{-d/2}
\end{eqal*}
It's $k$-th order derivative is 
\begin{eqal*}
    M_X^{(k)} &= \E[\|w\|^{2k}\exp(t\|w\|^2)] = \frac{(d+2k-2)!!}{(d-2)!!}(1-2t)^{-\frac{d}{2}-k}
\end{eqal*}
Let $t = -\tilde c^2/2$, we get 
\begin{eqal*}
    \E\left[\|w\|^{2k}\exp\left(-\frac{\tilde c^2\|w\|^2}{2}\right)\right] &= \frac{(d+2k-2)!!}{(1+\tilde c^2)^{d/2+k}(d-2)!!} \\
    &\asymp 2^k \frac{\Gamma(k+d/2)}{\Gamma(d/2)} (1+\tilde c^2)^{-k-d/2}\\
    &\asymp \left(\frac{2k}{(1+\tilde c^2)e}\right)^{d/2+k}\sqrt\frac{1}{k}
\end{eqal*}
so 
\begin{eqal*}
    u_{0, k} &\asymp  \left(\frac{\tilde c}{2}\right)^{2k} k^{-d}\left( k/e \right)^{-k} \left(\frac{2k}{(1+\tilde c^2)e}\right)^{d/2+k}\\
    & \asymp k^{-(d-1)/2}\left(\frac{\tilde c^2}{2(1+\tilde c^2)}\right)^k
    % &\asymp k^{-(d+9)/2}\left(\frac{c\Var[\Theta]}{d\ \tilde \varsigma^2e}\right)^{k}\\
    % &= k^{-(d+9)/2} \left(\frac{\Var[\theta]}{e(1-\Var[\theta])}\right)^k
\end{eqal*}
when $k$ is odd, and 0 when $k$ is even.

Similarly, 
\begin{eqal*}
    u_{1, k} &\asymp  \left(\frac{\tilde c}{2}\right)^{2k} k^{-d}\left( k/e \right)^{-k} \left(\frac{2k}{(1+\tilde c^2)e}\right)^{d/2+k}\\
    & \asymp k^{-(d-1)/2}\left(\frac{\tilde c^2}{2(1+\tilde c^2)}\right)^k
    % &\asymp k^{-(d+9)/2}\left(\frac{c\Var[\Theta]}{d\ \tilde \varsigma^2e}\right)^{k}\\
    % &= k^{-(d+9)/2} \left(\frac{\Var[\theta]}{e(1-\Var[\theta])}\right)^k
\end{eqal*}
% \begin{eqal*}
%     u_{1, k} &\asymp \tilde c^{2k} k^{-(d+3)}\left( ek \right)^{-k} \left(\frac{k}{e}\right)^{d/2+k-1/2}\\
%     &\asymp k^{-(d+7)/2}\left(\frac{c\Var[\Theta]}{d\ \tilde \varsigma^2e}\right)^{k}\\
%     &= k^{-(d+7)/2} \left(\frac{\Var[\theta]}{e(1-\Var[\theta])}\right)^k
% \end{eqal*}
when $k$ is even, and 0 when $k$ is odd.

Finally, using the recurrence relation 
\begin{eqal*}
    tP_k(t) = \frac{k}{2k+d-3}P_{k-1}(t) + \frac{k+d-3}{2k+d-3}P_{k+1}(t)
\end{eqal*}
taking them into \eqref{eq:recntk} finishes the proof.

\subsection{Gaussian kernel}
\begin{eqal*}
    \cK_{RGauss}(x, x') &= \E[\cK_{Gauss}(\kappa x, \kappa x')]\\
    &= \E\left[\exp\left( -\frac{\kappa^2\|x-x'\|}{\xi^2}\right)\right] \\
    &= \E\left[\exp\left( -\frac{\|x-x'\|}{(\xi/\kappa)^2}\right)\right] 
\end{eqal*}
This indicates that this kernel can be decomposed using spherical harmonics \eqref{eq:spdecom}, and when $k \gg d$, the coefficient 
\begin{eqal*}
    u_k &= \E\left[\exp\left(-\frac{2\kappa^2}{\xi^2}\right)\left(\frac{\xi}{\kappa}\right)^{d-2}I_{k+d/2-1}\left(\frac{2\kappa^2}{\xi^2}\right) \Gamma\left(\frac{d}{2}\right)\right]\\
    &\asymp \E\left[\exp\left(-\frac{2\kappa^2}{\xi^2}\right)\Gamma\left(\frac{d}{2}\right)\sum_{j=0}^\infty \frac{1}{j!\Gamma(k+d/2+j)}\left(\frac{\kappa^2}{\xi^2}\right)^{k+2j}\right] \\
    &= \sum_{j=0}^\infty \frac{\Gamma(d/2)}{j!\Gamma(k+d/2+j)}
    \E\left[\left(\frac{\kappa^2}{\xi^2}\right)^{k+2j}\exp\left(-\frac{2\kappa^2}{\xi^2}\right)\right] \\
    &= \sum_{j=0}^\infty \frac{\Gamma(d/2)}{j!\Gamma(k+d/2+j)}\frac{\Gamma(k+2j+d/2)}{\Gamma(d/2)}
    \left(\frac{2}{\xi^2}\right)^{k+2j}\left(\frac{1}{1+4/\xi^2}\right)^{(k+2j+d/2)}\\
    &\asymp \left(\frac{2}{\xi^2}\right)^{k}\left(1+\frac{4}{\xi^2}\right)^{(-k-d/2)}\sum_{j=0}^\infty \frac{1}{j!}\left(\frac{k(2/\xi^2)^2}{(1+4/\xi^2)^2}\right)^j\\
    &\asymp \left(\frac{2}{4+\xi^2}\right)^{k}\exp\left(\left(\frac{2}{4+\xi^2}\right)^2k\right).
\end{eqal*}
Note that $\frac{2}{4+\xi^2}\exp\left(\left(\frac{2}{4+\xi^2}\right)^2\right)$ is always smaller than 1 so $u_k$ is always decreasing with $k$.

\section{Additional information about numerical result}
\label{sec:expadd}
\subsection{Toy dataset}
In neural networks (NN) experiment, we used three layers with the first layer fixed. The number of hidden neural is 512. In neural network with binary weights (BWNN) experiment, the setup is the same as NN except the second layer is Binary. We used BinaryConnect method with stochastic rounding. We used gradient descent with learning rate searched from $10^{-3}, 10^{-2}, 10^{-1}$. For Laplacian kernel and Gaussian kernel, we searched kernel bandwidth from $2^{-2}\mu$ to $2^2\mu$ by power of 2, and $\mu$ is the medium of pairwise distance. The SVM cost value parameter is from $10^{-2}$ to $10^4$ by power of 2.

More results are listed in Table \ref{tab:ucimore}. Accuracy are shown in the format of mean $\pm$ std. P90 and P95 denotes the percentage of dataset that a model achieves at least 90\% and 95\% of the highest accuracy, respectively.

\begin{table}
    \caption{More results in UCI dataset experiment.}
    \label{tab:ucimore}
    \begin{tabular}{c|ccc|ccc}
        \hline
        \multirow{2}{*}{Classifier} & \multicolumn{3}{c|}{Training}  & \multicolumn{3}{c}{Testing}\\
        & Accuracy & P90 & P95 &  Accuracy & P90 & P95\\
        \hline
        %NN & 96.19 $\pm$ 8.03 \% & 77.62 $\pm$ 16.10 \%\\
        % 85.00 pm 13.49 & 79.21 pm 15.47
        % NN & 88.73 $\pm$ 12.34\% & 81.09 $\pm$ 14.44\% \\
        % %BNN & 93.55 $\pm$ 10.38\% & 77.82 $\pm$ 16.97\%\\
        % BWNN & 86.41 $\pm$ 13.94\% & 80.57 $\pm$ 14.38\%\\
        % Laplacian & 93.52 $\pm$ 9.65\% & 81.61 $\pm$ 14.72\%\\
        % Gaussian & 91.07 $\pm$ 10.63\% & 81.39 $\pm$ 14.85\%\\
        NN & 96.19$\pm$8.03\% & 96.67\% & 91.11\% & 77.62$\pm$16.10\% & 73.33\% & 56.67\%\\
        BWNN & 93.55$\pm$10.39\%&84.44\%& 76.67\% & 77.83$\pm$16.57\% & 77.78\%& 54.44\% \\
        Laplacian & 93.52$\pm$9.65\% &85.56\% & 76.67\% & 81.62$\pm$14.72\% &97.78\% & 91.11\%\\
        Gaussian & 91.08$\pm$10.63\% & 76.67\% & 58.89\% & 81.40$\pm$14.85\% & 95.56\% & 87.78\%\\
        \hline
    \end{tabular}
\end{table}

\subsection{MNIST-like dataset}
Similar to the toy dataset experiment, we used three layer neural networks with the first layer fixed, and only quantize the second layer. The number of neurons in the hidden layer is 2048. The batchsize if 100 and ADAM optimizer with learning rate $10^{-3}$ is used. 

% \newpage
% \section{Update from previous submission}
% The previous (rejected) submission to ICML is a preliminary draft with the careless paper organizations and unclear statements. We made substantial improvements in the exposition and the technical contributions. We reorganized the paper to make the statements and the proofs easier to follow. We clarify the definition of quasi neural network by adding illustrative figures, and fix the technical mistakes in the original proofs. For the completeness of the study, we provided an additional experiment to support our theorem of approximation with quasi neural network.

\end{document}